\documentclass[11pt]{article}

\usepackage[margin=1in]{geometry}
\usepackage[T1]{fontenc}
\usepackage{lmodern}          
\usepackage{float}
\usepackage{multirow}
\usepackage{algorithm}
\usepackage[noend]{algpseudocode}

\usepackage{amsfonts}
\usepackage{amsmath,amsthm}
\usepackage{array}
\usepackage{mathrsfs}
\usepackage{graphicx}
\usepackage{subcaption}
\usepackage{algpseudocode}
\usepackage{placeins}
\usepackage{booktabs}
\usepackage{url}
\usepackage[hidelinks]{hyperref}

\theoremstyle{proposition}

\theoremstyle{lemma}
\newtheorem{lemma}{Lemma}

\theoremstyle{corollary}
\newtheorem{corollary}{Corollary}

\theoremstyle{theorem}

\theoremstyle{definition}
\newtheorem{definition}{Definition}

\theoremstyle{assumption}

\theoremstyle{remark}

\usepackage{amsmath,amssymb}
\usepackage{graphicx}
\usepackage{booktabs}
\usepackage[numbers]{natbib}  
\usepackage{authblk}          

\title{Internal Guidance as a Sampling Method for Tilted Diffusion Models}

\author[1]{Curtis McDonald}
\author[1,2]{Peter Bartlett}
\affil[1]{Simons Institute, University of California Berkeley}
\affil[2]{Google DeepMind}

\date{Oct 7, 2026}  

\begin{document}
\maketitle

\begin{abstract}
Reward guidance methods sample an exponential tilt of a base density using a diffusion model trained with only samples from the base distribution. Existing methods use what we call external guidance: additive terms in the drift or multiplicative terms on the transition probabilities. We propose internal guidance which leaves the output of the base denoiser unmodified and instead changes the point at which it is evaluated. For discrete distributions embedded in the hypercube $\{-1,1\}^{d}$, the denoiser of any linearly tilted density is exactly the base denoiser evaluated at a shifted and rescaled input. In practice the base denoiser is a neural network trained on noised samples of the base distribution. The required evaluation points for sampling the tilted density would be out-of-distribution for basic noising of the base distribution. We modify the training algorithm with random diagonal rescalings of the data, so that the denoiser network learns both in- and out-of-distribution evaluations of the denoiser relative to the base distribution. A single denoiser network samples binary reward tilts, class labels, and infilling with no fine-tuning and no external guidance term.\footnote{Code available at \url{https://github.com/CMcDonald-1/internal_guidance_diffusion_models}}
\end{abstract}

\section{Introduction}

Diffusion models designed via Gaussian noising have become the
dominant generative algorithm for continuous distributions ~\citep{sohl2015deep, ho2020denoising, song2021score}, and continue to show promise for discrete targets as well~\citep{chen2023analog, hu2026elf, yang2026continuous}.
The original theory of diffusion models was established for unconditional
generation~\citep{song2019generative, song2021score}: given access to an initial pool of samples
from a distribution $\pi$, we train a model to sample from $\pi$. As the field has matured, interest has turned towards conditional generation and tilting. Given access to only a pool of samples from $\pi$, and a reward function $r$, we want to train a sampling algorithm to sample from an exponential tilt by the reward function for some strength $\tau$ with density $\tilde{\pi}(x) \propto \pi(x)e^{\tau r(x)}$. 

Existing methods utilize what the authors refer to as \textbf{external guidance}, implementing an additive vector field called a guidance term to modify SDE paths at sampling time \citep{dandapanthula2026tilting, skreta2025feynman, denker2024deft}, or a density ratio to reweight Markov chain conditional probabilities \citep{wan2026discrete,  nisonoff2025unlocking, lee2025debiasing, he2026rne}. Methods such as classier-fee guidance \citep{ho2022classifier} implement ad hoc and closed-form guidance terms, but are known to structurally sample the wrong invariant measure and introduce sampling artifacts such as corrupted samples and mode collapse \citep{chidambaram2024guidance, bradley2025classifier}. Exact guidance schemes derive the analytically correct guidance terms, but result in an intractable object that must either be learned in an expensive neural network training run, or approximated by a tractable analog such as an empirical average that introduces compounding errors at sample time.

A recent line of work \citep{holderrieth2025glass, moitra2026steering, mcdonald2026technical, moitra2026tractability} has opened up a new method for reward-tilted sampling we call \textbf{internal guidance}. Rather than modify the output of the denoiser of the base distribution with additive or multiplicative terms outside of the object, we realize we can instead shift our evaluation point inside the existing base denoiser. Recent work of \citet[Lemma 3.3]{moitra2026steering} has shown for an internal mapping function $H$ we call a symmetry map, the evaluation of the denoiser of a linear tilted density is an evaluation of the base denoiser at a modified input
\begin{align*}
G_{\text{tilt}}(u,a) = G_{\text{base}}(\tilde{u} = H_{1}(u,a), \tilde{a} = H_{2}(u,a)).
\end{align*}
These symmetry mappings are closed-form objects involving shifting and rescaling inputs, derived from the interactions of linear tilts with the quadratic log-density terms introduced via Gaussian convolution.

Internal guidance has clear benefits. There is no intractable external term to approximate, and the symmetry maps $H$ are all explicit. A single denoising diffusion model, trained on unconditional sampling of its input distribution, is able to sample an entire family of tilted densities at inference time with no additional reinforcement learning, fine tuning, or external guidance required.

The catch with this method is one of coverage. If we had access to the oracle denoiser $G$ we could implement such a method with no additional error over base sampling. However, we instead have a network $g_{\theta}$ trained to approximate $G$. This network is only accurate in an $L^{2}$ sense along the sequence of marginal densities produced by noising the base distribution, which we call the training manifold. The equivalent points $(\tilde{u}, \tilde{a})$ induced by the symmetry map represent an out-of-distribution query from the perspective of the base model denoiser, and we have no control on the accuracy of our trained network at this point.

The relationship of the symmetry map can be captured by an equivalence class: for a tilted density $\tilde{\pi}$ and an evaluation point $(u,a)$, the equivalence class is all points $(\tilde{u}, \tilde{a})$ where the base denoiser evaluated at $(\tilde{u},\tilde{a})$ is equal to the tilted denoiser at $(u,a)$. The work of \citet{moitra2026steering} can be interpreted as showing this equivalence class exists and is non-empty, and established one symmetry map which creates the class. However, this symmetry alone is not enough to force overlap between the equivalence class of points produced by tilted sampling and the high probability region of the base distribution training manifold the denoiser network was trained on.

To have equivalent points that live near the training manifold, we need to introduce additional symmetry relationships that expand the equivalence class. We restrict our study to discrete distributions embedded in the hypercube $\{-1,1\}^{d}$, noised under continuous Gaussian convolution, and a stochastic localization algorithm to implement the reverse sampling and produce discrete outputs. The benefit of embedding in the hypercube is all discrete points are constant norm $\|x\|^{2} = d$. The conditional densities which define the denoiser are quadratic tilts to the base density which introduce linear and quadratic terms to the log-density. By the constant norm property, all quadratic terms normalize out and the denoiser is a pure linear tilt to the base density. This opens up additional symmetries that expand the equivalence class and allow modification to the training manifold. With these additional symmetries, we train a denoiser network that can be evaluated at points which are out-of-distribution from the pure noising of the base distribution, but lie in the equivalence class of points seen at sampling time. We visualize a sketch of this idea in Figure \ref{manifold_overlapp}. Our contributions are threefold:

\begin{enumerate}
\item Conditional generation tasks such as infilling and reward tilting by binary rewards can be cast as linear tilts in an extended state space by promoting the reward coordinate to an additional random variable noised and denoised when sampling.
\item We establish the theory of \textbf{denoiser symmetry}, the evaluation of the denoiser of a linear tilt to a base density can be recast as an evaluation of the base denoiser at a shifted evaluation point. We establish additional symmetries specific to hypercube embedded discrete distributions, by the removal of quadratic terms in the conditional log-density.
\item We introduce diagonal matrix rescalings to our hypercube embedding during training called \textbf{$\Lambda$ focusing}, turning an out-of-distribution learning problem into an in-distribution learning problem, and allowing the base denoiser to be evaluated at locations off the typical noising path.
\end{enumerate}

We demonstrate performance on small discrete sampling targets such as a non-linear reward tilt to a small 12 dimension Ising model with ground truth comparison denoiser; generating and solving 4x4, 6x6, and 9x9 Sudoku puzzles; and conditional class generation of MNIST digit images. These problems are small enough to act as a test bed where the quality of our sampling can be compared against exact ground truth values as a diagnostic of effective sampling, with future work to focus on scaling and more complex target distributions for internal guidance.

\begin{figure}
\begin{center}
\includegraphics[width = \textwidth]{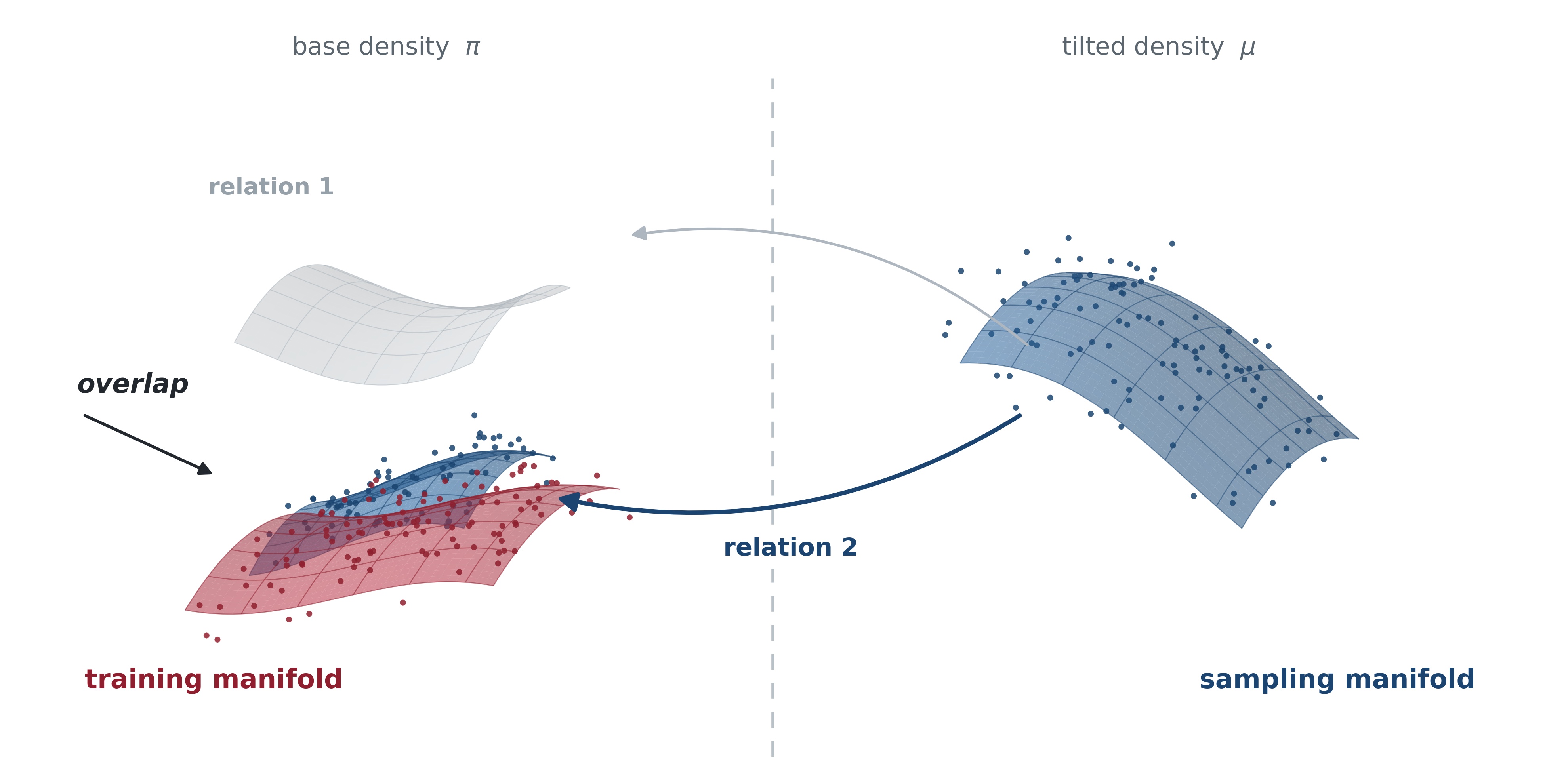}
\end{center}
\caption{Sketch of denoiser symmetry. The training manifold is where the denoiser network has learned the base denoiser. The sampling manifold is the typical vectors that will be encountered when sampling a tilted density. The equivalence relationships can find a point with equal evaluation of the base denoiser near the training manifold, not all equivalence relationships map back to the region of the training manifold.}\label{manifold_overlapp}
\end{figure}

\section{Setup}

We consider discrete distributions embedded in the hypercube $\mathcal{X} \to \{-1,1\}^{d}$, noised under Gaussian convolution, and sampled via a reverse process. This form of sampling was considered in \citet{chen2023analog}, referred to as ``analog bits'', although the structural results of linear tilts and internal guidance we derive here were not known at the time. Any discrete distribution can be embedded in the hypercube by assigning each element $x$ of the original discrete state space $\mathcal{X}$ of size less than $2^{d}$, a unique embedding vector $e(x) \in \{-1,1\}^{d}$. This is natural for token generation tasks with a finite library such as language models; constraint satisfaction tasks such as generating and solving Sudoku puzzles; image generation with discrete pixel values such as grayscale MNIST digits; and distributions natively in the hypercube such as Ising models.

Our target density $\pi$ is now restricted to a distribution on the hypercube $\{-1,1\}^{d}$. We are interested in sampling two forms of tilts to $\pi$. For any vector $v \in \mathbb{R}^{d}$, a linear tilt to $\pi$, $\pi_{v}(x) \propto \pi(x)e^{\langle v,x \rangle}$. For a binary reward function $r$ with outputs $\{-1,1\}$ tilts to $\pi$ by $\tau>0$ times $r$, $q_{\tau}(x) \propto  \pi(x)e^{\tau r(x)}$.

To unify these two objectives, we promote the reward value $r(x)$ to a new random variable $x_{*} = r(x)$ in an extended $d+1$ dimensional state space and consider linear tilts in the extended state space. Our original coordinates $x_{i}$ are already $\{-1,1\}$ valued, so from the perspective of our density $x_{*}$ is simply another $\{-1,1\}$ coordinate. It has a deterministic conditional distribution $\pi(x^{*}|x) = 1\{x^{*} = r(x)\}$, but there is no issue with having a probability 1 event in a discrete support set.  We then define the \textbf{extended base density} with $d+1$ coordinates where the reward-tilted version of $\pi$ is equivalent to a linear tilt in the extended state space by the one-hot vector $e_{*} = (0,\cdots, 0,1) \in \mathbb{R}^{d+1}$,\vspace{-0.8cm}

\begin{align*}
\pi(x,x^{*}) &= \pi(x)1\{r(x) = x^{*}\}\\
\pi(x)e^{\tau r(x)} &= \pi(x,x^{*})e^{\tau \langle e_{*}, (x,x^{*})\rangle}.
\end{align*}\vspace{-0.8cm}

As such, linear tilts by vectors $v$ and tilts by the reward function can both be unified as pure linear tilts in the extended state space. We thus consider $\pi$ as a distribution on the $d+1$ dimensional hypercube, with the final coordinate being the reward coordinate.

For values $a,\sigma>0$ consider a random variable defined by a linear combination of a sample from $\pi$ with an independent normal random variable
\begin{align*}
U = aX+\sigma Z, X \sim \pi, Z \sim N(0,I).
\end{align*}
The \textbf{denoiser} of $\pi$ is the conditional expectation of $X$ given $U$,
\begin{small}
\begin{align*}
F^{\text{base}}(u,a,\sigma) = E_{\pi}[X|u,a,\sigma] = \sum_{x}x \frac{\pi(x)e^{-\frac{1}{2\sigma^{2}}\|u - a x\|^{2}}}{E_{\pi}[e^{-\frac{1}{2\sigma^{2}}\|u - a X\|^{2}}]}.
\end{align*}
\end{small}
The denoiser defines the drift which implements the reverse SDE sampling algorithm. A special case is the variance preserving (VP) combination, when $a^{2}+\sigma^{2} = 1$, or $\sigma = \sqrt{1-a^{2}}$. In that case, we simply indicate $F^{\text{base}}(u,a)$ and note it is a VP combination.

In the extended state space, we are really considering the joint noising of the token bit string and the reward coordinate at the same time, $u = (u^{\text{token}}, u^{*})$
\begin{align*}
\begin{pmatrix}
U^{\text{token}}\\
U_{*}
\end{pmatrix} = a \begin{pmatrix}
x^{\text{token}}\\
x_{*}
\end{pmatrix}+\sqrt{1-a^{2}}\begin{pmatrix}
Z\\
Z_{*}
\end{pmatrix}\\
a \in [0,1], X,X* \sim \pi, Z \sim N(0,I),Z_{*} \sim  N(0,1).
\end{align*}
Thus our sampler carries a ``noisy reward'' value in its $d+1$ coordinate, and does joint denoising of the reward coordinate and the token string at the same time through the reverse diffusion.

Denoisers are expectations of quadratic tilts to the base density. By restricting our base density to be supported on the hypercube, all quadratic terms can be dropped from the exponent of the conditional densities that define the denoiser expectation. As such, all denoiser evaluations of $\pi$ are expectations of linear tilts to $\pi$, and all denoisers of linear tilts to $\pi$ are themselves further linear tilts to this object. One operator defines the denoiser of $\pi$ and every linear tilt to $\pi$: the general mean-tilt operator.
\begin{definition}
For any vector $v \in \mathbb{R}^{d+1}$ and any value $a \in [0,1]$, define VP tilt strength $\beta(a) = \frac{a}{1-a^{2}}$, and define the general mean-tilt operator,
\begin{align*}
G(v,a) = \sum_{x} x \frac{\pi(x)e^{ \beta(a) \langle v, x \rangle}}{ \sum_{\bar{x}} \pi(\bar{x})e^{ \beta(a) \langle v, \bar{x} \rangle}}.
\end{align*}
\end{definition}
The base denoiser $F^{\text{base}}$ is equal to the operator $G$.

\subsection{Sampling Via Stochastic Localization}

Our sampling algorithm will be the stochastic localization path, which is equivalent to a time shifting and rescaling of the marginal densities of score-based diffusion. See \citet{montanari2023sampling} and our discussion in Appendix \ref{app:SL} for further connection. The benefit of stochastic localization is the drift term is exactly the denoiser, rather than being related to the score via Tweedie's formula \citep{robbins1992empirical}. This makes it simpler to construct denoiser-to-denoiser relationships rather than score-to-score relationships.
\begin{definition}[Stochastic Localization]
For a target density $\pi$, initialize a random variable at the origin $X_{0} = 0$. Then follow the SDE using the denoiser as a drift at a specific time scaling,
\begin{align*}
dX_{t} = F^{\pi}(X_{t},t,\sqrt{t})dt+dB_{t},
\end{align*}
then the marginals at time $t$ are equivalent to
\begin{align*}
X_{t} = tX+\sqrt{t}Z, X \sim \pi, Z \sim N(0,I).
\end{align*}
\end{definition}\vspace{-0.2cm}
For large $T$, the normalized variable $X_{T}/T$ is equal to the target density with a small normal perturbation of scaling $1/\sqrt{T}$. The discrete implementation of this SDE as an algorithm is shown in Algorithm \ref{alg:sl-sampling}.

\section{Denoiser Symmetries}

First, we consider properties of the training manifold distribution. Typical points produced under VP noising of the base distribution are norm $\sqrt{d+1}$ with high probability at any signal level, and all roughly the same magnitude in each coordinate.

\begin{lemma}[Constant Norm Shell]\label{lem:const_norm}
Let $\pi$ be any distribution on $\{-1,1\}^{d+1}$ and $U = aX+\sqrt{1-a^{2}}Z$ with
$X\sim\pi$, $Z\sim N(0,I)$, $a\in[0,1]$. Then for any $a$,
\begin{enumerate}
\item $E\|U\|^{2} = d+1$ and
      $\mathrm{Var}\,\|U\|^{2} = 2(1-a^{4})(d+1)$; hence w.h.p. the norm of $U$ is near $\sqrt{d+1}$, for any $\epsilon>0$,
      $P\big(\,\big|\tfrac{\|U\|^{2}}{d+1}-1\big| > \epsilon\big)
        \le \tfrac{2(1-a^{4})}{\epsilon^{2}(d+1)}$.
\item The magnitude vector $(|U_{1}|,\dots,|U_{d+1}|)$ is i.i.d.\ with common
      law $|N(a,\,1-a^{2})|$. Its distribution does not depend on $\pi$.
\end{enumerate}
\end{lemma}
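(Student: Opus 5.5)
The plan is to prove part 2 first and then derive part 1 from it. Part 2 removes all dependence on $\pi$, so after that step everything reduces to a computation with i.i.d.\ Gaussians. Write $\sigma=\sqrt{1-a^{2}}$. The key observation is that each coordinate of $X$ is a sign, $X_i\in\{-1,1\}$. This gives
\begin{align*}
|U_i| = |aX_i+\sigma Z_i| = |X_i|\,|a+\sigma X_iZ_i| = |a+\sigma W_i|, \qquad W_i := X_iZ_i .
\end{align*}

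Next I will show that $W=(X_1Z_1,\dots,X_{d+1}Z_{d+1})$ is distributed as $N(0,I)$ and is independent of $X$. Condition on $X=x$. Then $W=D_xZ$, where $D_x=\mathrm{diag}(x)$ is an orthogonal (sign) matrix. Since $Z\sim N(0,I)$ is invariant under orthogonal maps and $Z$ is independent of $X$, the conditional law of $W$ given $X=x$ is $N(0,I)$ for every $x$. Hence $W\sim N(0,I)$ and $W$ is independent of $X$. It follows that $(|U_i|)_i=(|a+\sigma W_i|)_i$ is an i.i.d.\ vector with common law $|N(a,1-a^{2})|$, whatever $\pi$ is. This proves part 2.

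For part 1, note that $\|U\|^{2}=\sum_i |U_i|^{2}=\sum_i (a+\sigma W_i)^{2}$ depends only on the magnitude vector. So it is a sum of $d+1$ i.i.d.\ terms $Y_i=(a+\sigma W_i)^2$. From Gaussian moments, $E Y_i=a^2+\sigma^2=1$. For the variance, expand $Y_i-1=2a\sigma W_i+\sigma^2(W_i^2-1)$; the two pieces are uncorrelated because $E[W^3]=E[W]=0$. This gives
\begin{align*}
\mathrm{Var}\,Y_i = 4a^2\sigma^2+2\sigma^4 = 2\sigma^2(2a^2+\sigma^2)=2(1-a^2)(1+a^2)=2(1-a^4).
\end{align*}
By independence, $E\|U\|^2=d+1$ and $\mathrm{Var}\,\|U\|^2=2(1-a^4)(d+1)$. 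Chebyshev's inequality applied to $\|U\|^2/(d+1)$, whose variance is $2(1-a^4)/(d+1)$, then gives the stated tail bound.

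The only non-routine step is the sign-flip independence argument, namely that $W$ is standard normal and independent of $X$. The proof rests on this, since it is what makes the statement hold for arbitrary, possibly highly correlated, $\pi$. I will state it carefully via the conditional law given $X=x$. The moment computations are elementary.
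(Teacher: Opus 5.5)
Your proof is correct, and your part 2 argument is the same as the paper's: condition on $X=x$, use that coordinatewise sign flips preserve $N(0,I)$, and write $|U_i| = |a+\sqrt{1-a^2}\,x_iZ_i|$.

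The difference is in part 1. The paper proves part 1 first and on its own. It expands $U_i^2-1 = 2a\sqrt{1-a^2}\,X_iZ_i + (1-a^2)(Z_i^2-1)$ and then checks by hand that each off-diagonal term $E[(U_i^2-1)(U_j^2-1)]$ vanishes. Every term there contains a mean-zero factor $Z_i$, $Z_j$ or $Z_i^2-1$, which is how the paper handles the correlation between $X_i$ and $X_j$.

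You instead note that $\|U\|^2$ depends only on the magnitude vector. Once part 2 is known, $\|U\|^2$ is a sum of i.i.d.\ terms with a $\pi$-free law, so no covariance bookkeeping is needed. Your per-coordinate computation $4a^2\sigma^2+2\sigma^4=2(1-a^4)$ is then exactly the paper's diagonal term.

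Your route gives more than the statement asks for. For $a<1$ it identifies the full law of $\|U\|^2$ as $\sigma^2$ times a noncentral $\chi^2_{d+1}$ with noncentrality $(d+1)a^2/\sigma^2$, independent of $\pi$. That means Chebyshev could be replaced by an exponential tail bound if wanted. The paper's route, by contrast, uses only second moments and does not need the distributional identity at all.

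One small remark: the independence of $W$ from $X$ is true but not used. Both parts only need $W\sim N(0,I)$, since $\|U\|^2$ and the magnitude vector are functions of $W$ alone.
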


We then consider the points produced under a stochastic localization path of a tilted target density. The non-VP scaling of localization paired with a shift term from the linear tilt results in evaluation points that can be of various norm and have one large dominant coordinate in magnitude.

\begin{lemma}[Stochastic Localization Tilt]\label{lem:SL_tilt}
Let $X_{t}$ be a point observed at time $t$ in a stochastic localization for a target density which is a linear tilt to $\pi$ by a vector $ \tau v$. Let $\varphi = \frac{-1+\sqrt{5}}{2} \approx 0.618$ be the number such that $\varphi = 1-\varphi^{2} \implies \varphi = \beta^{-1}(1)$. Then the denoiser of the stochastic localization is the base denoiser at signal level $\varphi$ with a shift by $\tau v$ in the evaluation point,
\begin{align*}
E_{\pi_{v}}[X|X_{t} = tX+\sqrt{t}Z] &= G(\tau v+X_{t}, \varphi)\\
&=\sum_{x} x \frac{\pi(x)e^{ \langle \tau v+X_{t}, x \rangle}}{E_{\pi}[e^{ \langle \tau v+X_{t}, x \rangle}]}.
\end{align*}
In particular, for a reward tilt $v = e_{*}$, this amounts to large additive term in the $d+1$ coordinate.
\end{lemma}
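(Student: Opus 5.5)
The plan is a direct Bayes computation, followed by two cancellations that come from the hypercube structure. The target of the localization is $\pi_{\tau v}(x) \propto \pi(x)e^{\tau\langle v,x\rangle}$. By the definition of stochastic localization, $X_t = tX+\sqrt{t}Z$, so the conditional law of $X$ given $X_t$ is proportional to
\begin{align*}
\pi_{\tau v}(x)\exp\Big(-\tfrac{1}{2t}\|X_t - t x\|^{2}\Big).
\end{align*}
First I expand the square: $\|X_t - tx\|^2 = \|X_t\|^2 - 2t\langle X_t, x\rangle + t^2\|x\|^2$. The factor involving $\|X_t\|^2$ does not depend on $x$, so it cancels between numerator and normalizer. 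The key hypercube step is that $\|x\|^2 = d+1$ for every $x\in\{-1,1\}^{d+1}$. Hence $e^{-t\|x\|^2/2}$ is also constant in $x$ and cancels. What remains is the weight $\pi(x)e^{\langle X_t, x\rangle}$.

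Combining this with the tilt factor $e^{\tau\langle v,x\rangle}$ gives the weight $\pi(x)e^{\langle \tau v + X_t, x\rangle}$. Normalizing yields the second displayed expression:
\begin{align*}
E_{\pi_{v}}[X\mid X_t] = \sum_x x\,\frac{\pi(x)e^{\langle \tau v+X_t,x\rangle}}{\sum_{\bar x}\pi(\bar x)e^{\langle \tau v+X_t,\bar x\rangle}}.
\end{align*}

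To match this with the general mean-tilt operator $G$, I need $\beta(a)=1$. Solving $a/(1-a^2)=1$ gives $a^2+a-1=0$, whose positive root is $a=\varphi=(\sqrt5-1)/2$. This root satisfies $\varphi = 1-\varphi^2$ and lies in $[0,1]$. Note that $\beta$ is increasing on $[0,1)$, since its numerator $a$ increases and its denominator $1-a^2$ decreases, so $\varphi=\beta^{-1}(1)$ is well defined. With $a=\varphi$ the exponent in $G(\tau v + X_t,\varphi)$ is exactly $\langle \tau v+X_t,x\rangle$. This gives the first equality.

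For the final remark, take $v=e_*$. Then $\tau v + X_t$ differs from $X_t$ only in the $(d+1)$-th coordinate, where $\tau$ is added. For large $\tau$ this is a large additive term in the reward coordinate.

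There is no real obstacle here. The only point needing care is the norm cancellation, which is what removes the quadratic term. It uses the constant-norm property of the embedding, including in the extended reward coordinate. The reward coordinate is $\pm1$-valued, so that property holds there as well.
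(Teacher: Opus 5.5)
Your proposal is correct and follows the paper's proof essentially step for step. Both write the Bayes posterior under the tilted target, expand the Gaussian exponent, and drop the $\|X_t\|^2$ and $\|x\|^2$ terms using the constant-norm property of the hypercube. Both then identify $\varphi$ as the signal level with $\beta(\varphi)=1$; the only cosmetic difference is that you solve $a^2+a-1=0$ directly, whereas the paper checks $1-\varphi^2=\varphi$ through the golden-ratio identity $\theta+1=\theta^2$.
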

Based on these two lemmas alone, we see the typical evaluation points for $G$ seen in VP noising of the base distribution and those required by stochastic localization of a tilted density look quite different. We introduce the following two lemmas to allow for overlap of these two kinds of evaluations. Both of these lemmas are only possible for hypercube embedded distributions where the quadratic tilt of the denoiser simplifies to a pure linear tilt.

First, any $G$ evaluation can be renormalized to a norm $\sqrt{d+1}$ vector and an equivalent VP signal level $\tilde{a}$.

\begin{lemma}[Renormalization]\label{renorm_lemma}
For any vector $v$ and signal level $a$, the evaluation of the general mean-tilt operator is equal to an evaluation at a norm $\sqrt{d+1}$ scaling of $v$ and an equivalent signal level $\tilde{a}$,
\begin{align*}
G(v, a) &= G(\sqrt{d+1}\frac{v}{\|v\|}, \tilde{a})\\
C &= \beta(a)\frac{\|v\|}{\sqrt{d+1}},  \tilde{a} = \frac{1}{2}(\sqrt{\frac{1}{C^{2}}+4}-\frac{1}{C}).
\end{align*}
\end{lemma}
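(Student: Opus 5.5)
The plan is to use the fact that $G(v,a)$ depends on $(v,a)$ only through the product vector $\beta(a)v$. By the definition of the general mean-tilt operator, the Gibbs weights are $\pi(x)e^{\beta(a)\langle v,x\rangle}$. Hence $G(v,a)=G(v',a')$ whenever $\beta(a)v=\beta(a')v'$. It therefore suffices to find $\tilde a\in[0,1]$ with
\begin{align*}
\beta(\tilde a)\sqrt{d+1}\,\frac{v}{\|v\|}=\beta(a)\,v,
\qquad\text{i.e.}\qquad
\beta(\tilde a)=\beta(a)\frac{\|v\|}{\sqrt{d+1}}=C .
\end{align*}
Here $v\neq 0$ is needed for the normalization; for $v=0$ both sides are just the mean of $\pi$, which can be noted separately.

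The key step is inverting $\beta(a)=\frac{a}{1-a^{2}}$ on $[0,1)$. Note that $\beta(0)=0$, $\beta$ is strictly increasing since $\beta'(a)=\frac{1+a^{2}}{(1-a^{2})^{2}}>0$, and $\beta(a)\to\infty$ as $a\to1^-$. So for every $C>0$ there is a unique $\tilde a\in(0,1)$ with $\beta(\tilde a)=C$. Solving $\tilde a=C(1-\tilde a^{2})$ gives the quadratic $C\tilde a^{2}+\tilde a-C=0$, whose positive root is
\begin{align*}
\tilde a=\frac{-1+\sqrt{1+4C^{2}}}{2C}=\frac12\Bigl(\sqrt{\tfrac{1}{C^{2}}+4}-\tfrac1C\Bigr),
\end{align*}
which matches the stated formula. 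I would check directly that this root lies in $(0,1)$. Positivity is clear. For the upper bound, $\sqrt{1+4C^{2}}<1+2C$, so $\tilde a<1$. As a sanity check, $C=1$ recovers $\varphi$ from Lemma~\ref{lem:SL_tilt}.

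Substituting $\beta(\tilde a)\sqrt{d+1}\,v/\|v\|=\beta(a)v$ into the exponent makes the two Gibbs weightings, and hence their normalizations, identical, which proves the identity. The only real subtlety is the edge cases. When $a=1$, $\beta(a)$ is infinite; I would either exclude this case or interpret it as a limit. When $C=0$, i.e.\ $a=0$ or $v=0$, set $\tilde a=0$, which is consistent with the limit of the formula as $C\to0$. Beyond this bookkeeping, the argument is a direct computation from the definition of $G$, using that the hypercube support has already removed the quadratic terms.
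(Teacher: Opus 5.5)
Your proposal is correct and matches the paper's proof: the paper also observes that $G$ depends on $(v,a)$ only through $\beta(a)v$, rescales $v$ by $\lambda=\|v\|/\sqrt{d+1}$, and inverts $\beta$ by taking the positive root of $C\tilde a^{2}+\tilde a-C=0$. Your explicit check that this root lies in $(0,1)$, and your handling of the degenerate cases $v=0$, $a=0$, and $a=1$, tighten points the paper leaves implicit.
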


Second, for the base density we can multiply our vectors by any diagonal matrix $\Lambda$ and maintain the property that each conditional expectation is a pure linear tilt. We call this method 
\textbf{$\Lambda$ focusing}.

\begin{figure}
\begin{center}

\end{center}
\end{figure}

\begin{lemma}[$\Lambda$ focusing]\label{lem:Lambda_focus}
Let $\Lambda$ be a diagonal matrix, and $U$ a vector produced by first putting $X \sim \pi$ through the matrix and then performing VP noising,
\begin{align*}
U = a \Lambda X+\sqrt{1-a^{2}}Z, X \sim \pi, Z \sim N(0,I), a \in [0,1].
\end{align*}
Then the denoiser is an evaluation of $G$ at the same signal level, but a $\Lambda$ mapping of the observed $u$
\begin{align*}
E[X|U = a \Lambda X+\sqrt{1-a^{2}}Z]&= G(\Lambda u, a).
\end{align*}
\end{lemma}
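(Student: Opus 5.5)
The proof writes the posterior of $X$ given $U=u$ with Bayes' rule and simplifies the Gaussian likelihood until only a linear term in $x$ survives. It then matches that term to the definition of $G$. Since $U=a\Lambda X+\sqrt{1-a^{2}}Z$ with $Z\sim N(0,I)$ independent of $X$, the conditional law is
\begin{align*}
P(X=x\mid U=u) \propto \pi(x)\exp\Big(-\tfrac{1}{2(1-a^{2})}\|u-a\Lambda x\|^{2}\Big),
\end{align*}
normalized by the sum over $\bar x$ in $\{-1,1\}^{d+1}$. I would expand the square as $\|u\|^{2}-2a\langle u,\Lambda x\rangle+a^{2}\|\Lambda x\|^{2}$. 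The first term does not depend on $x$ and cancels between numerator and denominator.

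The quadratic term is the one step that needs care. It must be exactly constant over the support, and this is where both the hypercube structure and the diagonality of $\Lambda$ are used. Write $\Lambda=\mathrm{diag}(\lambda_{1},\dots,\lambda_{d+1})$. Then
\begin{align*}
\|\Lambda x\|^{2}=\sum_{i}\lambda_{i}^{2}x_{i}^{2}=\sum_{i}\lambda_{i}^{2}=\mathrm{tr}(\Lambda^{2})
\end{align*}
for every $x\in\{-1,1\}^{d+1}$, because $x_{i}^{2}=1$. So this term also cancels. For a non-diagonal matrix the cross terms $x_{i}x_{j}$ would survive, so the diagonal assumption is essential here.

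What remains is the linear term. Since $\Lambda$ is symmetric, $\langle u,\Lambda x\rangle=\langle \Lambda u,x\rangle$, so the posterior weight is proportional to
\begin{align*}
\pi(x)\exp\Big(\tfrac{a}{1-a^{2}}\langle \Lambda u,x\rangle\Big)=\pi(x)e^{\beta(a)\langle \Lambda u,x\rangle},
\end{align*}
using $\beta(a)=a/(1-a^{2})$. Taking the expectation of $x$ under these normalized weights gives exactly $G(\Lambda u,a)$ by the definition of the general mean-tilt operator.

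The endpoint $a=1$ makes $\beta$ undefined, so I would either exclude it or treat it as a limit. At $a=0$ we have $\beta(0)=0$ and both sides reduce to $E_{\pi}[X]$, so that case needs no separate argument. The argument also holds for any diagonal $\Lambda$, including ones with zero or negative entries, because no invertibility is used.
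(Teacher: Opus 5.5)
Your proposal is correct and follows the paper's proof essentially line for line: Bayes' rule, dropping $\|u\|^{2}$ as a normalizing constant, using $x_i^{2}=1$ to make $\sum_i \Lambda_{ii}^{2}x_i^{2}$ constant, and reading off the linear tilt $\beta(a)\langle \Lambda u, x\rangle$ as $G(\Lambda u,a)$. Your endpoint remarks at $a=0$ and $a=1$ are a reasonable addition. One small correction to a side remark: diagonality is sufficient but not strictly essential, since any $\Lambda$ with $\Lambda^{\top}\Lambda$ diagonal (for example an orthogonal matrix times a diagonal one) also removes the cross terms, with the tilt vector then being $\Lambda^{\top}u$.
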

We construct our diagonal matrix $\Lambda$ to maintain $\text{tr}(\Lambda^{2})= d+1$ by taking a norm $\sqrt{d}$ vector $q \in \mathbb{R}^{d}$, and for some $ \lambda \in [1,\sqrt{d+1}]$,
\begin{align*}
\Lambda_{j,j} &= \sqrt{1+\frac{1}{d}-\frac{\lambda^{2}}{d}}q_{j}, j \neq d+1, &\Lambda_{d+1, d+1} = \lambda.
\end{align*}
This effectively moves our embedding points $x$ from the symmetric hypercube $\{-1,1\}^{d+1}$ to a non-symmetric hypercube more focused near the one-hot reward vector $e_{d+1}$ we are likely to encounter at sample time, visualized in Figure \ref{reward_focusing}. We consider two choices of $q$ which we call Mode 0 and Mode 1.

\begin{figure}
\begin{center}
\includegraphics[width = \textwidth]{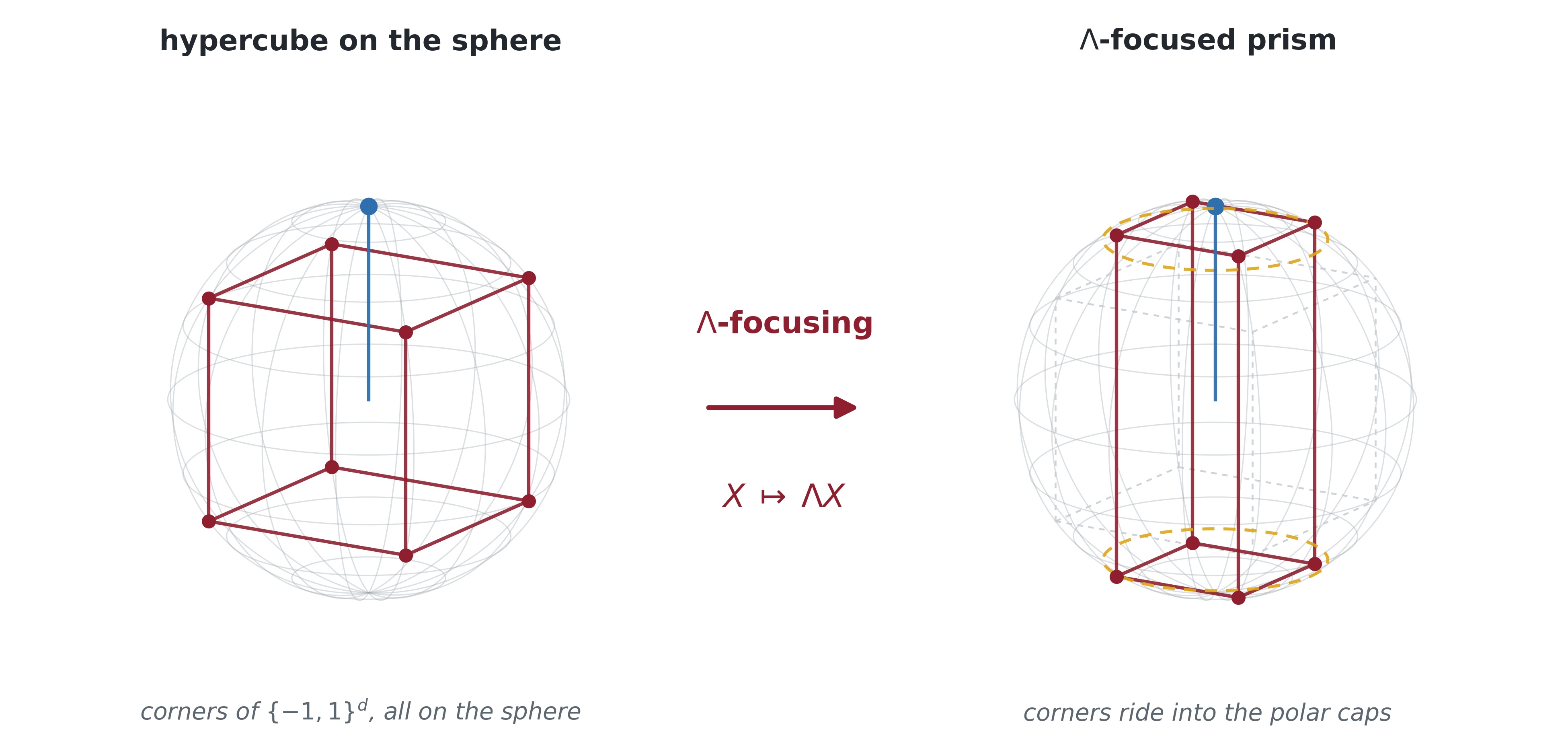}
\end{center}\caption{Reward based $\Lambda$ focusing Mode 0 maintains norm of hypersphere points while placing more mass in reward coordinate and increasing cosine similarity with the one-hot vector of the reward coordinate at the pole of the hypersphere.}\label{reward_focusing}
\end{figure}

\textbf{Mode 0: Reward Focusing}
Here we set $q = \vec{1}_{d}$ so all coordinates are present in the $\Lambda$ matrix and we trade mass from the token component of the vector to the reward coordinate, forcing the hypercube to reside near the pole of the hypersphere.

\textbf{Mode 1: Infill Training}
Let $A_{+}$ be a set of indexes which we would like to fix as positive, and $A_{-}$ a set of indexes we would like to fix as negative. Then $\pi$ conditioned on these indexes is the limit of an infinite strength linear tilt by the positive and negative vector
\begin{small}
\begin{align*}
\pi(x|x_{A_{+}} = 1, x_{A_{-}} = -1) = \lim_{\tau \to \infty} \pi(x)e^{\tau (\langle e_{A_{+}}, x \rangle - \langle e_{A_{-}}, x \rangle)}.
\end{align*}
\end{small}
As such, pick a set of indexes $A$ of size $|A| = k$, and set $q$ as the norm $d$ indicator set of A, shutting off coordinates outside the set, $q_{j} = 1\{j \in A\} \sqrt{d/k}$. With the indexes $A^{C}$ set to 0, this is forcing the trained network denoiser to learn conditionals with only partial information supplied by the indexes present in the set,
\begin{align*}
E[X|U_{A}, U = aX+\sqrt{1-a^{2}}Z] = G(\text{Diag}(1_{A}) u, a).
\end{align*}
We refer to this concept as ``soft masking'', similar in spirit to masked diffusion models \citep{austin2021structured,sahoo2024simple, shi2024simplified} where certain tokens are set to a `MASK' token and do not influence the conditional expectation. Here, we can get a similar effect by shutting off certain indexes with a $q_{j} = 0$, forcing the denoiser to learn conditional distributions with full information of the $U$ vector, and partial information with various random subsets $U_{A}$.

\subsection{Combined Results}
Combining these symmetries of shifting, normalizing, and $\Lambda$ focusing, we show how points generated via $\Lambda$ focusing the training set and points generated via the stochastic localization path of a tilted density can both be recast as evaluations of the base denoiser at a norm $\sqrt{d+1}$ vector and an equivalent VP signal level.
\begin{corollary}\label{Lambda_train_cor}
Let $U$ be a point generated at training time by $\Lambda$ focusing a training point and then applying VP noising
\begin{align*}
U = a \Lambda X+\sqrt{1-a^{2}}Z, X \sim \pi, Z \sim N(0,I), a \in [0,1],
\end{align*}
then the denoiser at this observation point is equal to an evaluation of the base denoiser at a norm $\sqrt{d+1}$ vector and an equivalent signal level $\tilde{a}$,
\begin{align*}
&E_{\pi}[X|U = a\Lambda X+\sqrt{1-a^{2}}Z]= G( \sqrt{d+1}\frac{\Lambda u}{\|\Lambda u\|}, \tilde{a})\\
&C = \frac{a}{1-a^{2}}\frac{\|\Lambda u\|}{\sqrt{d+1}}, \tilde{a} = \frac{1}{2}(\sqrt{\frac{1}{C^{2}}+4}-\frac{1}{C}).
\end{align*}
\end{corollary}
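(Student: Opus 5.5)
The corollary is a direct composition of Lemma~\ref{lem:Lambda_focus} followed by Lemma~\ref{renorm_lemma}. First I will apply Lemma~\ref{lem:Lambda_focus} to the observation $U = a\Lambda X + \sqrt{1-a^{2}}Z$. This gives $E_{\pi}[X\mid U=u] = G(\Lambda u, a)$, so the $\Lambda$-focused training point is a $G$ evaluation at signal level $a$ and input vector $v = \Lambda u$. To keep the argument self-contained, I will recheck this step from the definitions. The posterior weight of $x$ is
\[
\pi(x)\exp\!\Big(-\tfrac{1}{2(1-a^{2})}\|u - a\Lambda x\|^{2}\Big).
\]
Expanding the square gives $\|u\|^{2} - 2a\langle \Lambda u, x\rangle + a^{2}\langle x, \Lambda^{2} x\rangle$. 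On the hypercube $x_j^{2}=1$, so since $\Lambda$ is diagonal, $\langle x,\Lambda^{2}x\rangle = \mathrm{tr}(\Lambda^{2})$ is independent of $x$. After normalization only $\exp(\beta(a)\langle \Lambda u, x\rangle)$ survives, which is exactly $G(\Lambda u,a)$.

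Second, I will apply Lemma~\ref{renorm_lemma} with $v=\Lambda u$. This yields $G(\Lambda u,a) = G(\sqrt{d+1}\,\Lambda u/\|\Lambda u\|, \tilde a)$, where $C=\beta(a)\|\Lambda u\|/\sqrt{d+1}$ and $\tilde a$ is as stated. Substituting $\beta(a)=a/(1-a^{2})$ gives precisely the displayed formula for $C$.

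For completeness I will also verify the renormalization identity itself. $G$ depends on $(v,a)$ only through the product $\beta(a)v$, so I need $\beta(\tilde a)\sqrt{d+1}/\|v\| = \beta(a)$, i.e.\ $\beta(\tilde a) = C$. Solving $\tilde a/(1-\tilde a^{2}) = C$ gives the quadratic $C\tilde a^{2} + \tilde a - C = 0$. Its root is $\tilde a = \tfrac{-1+\sqrt{1+4C^{2}}}{2C} = \tfrac12\big(\sqrt{1/C^{2}+4} - 1/C\big)$. This is the nonnegative root, and it lies in $[0,1)$ because $\beta$ is an increasing bijection from $[0,1)$ onto $[0,\infty)$.

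The main point needing care is the degenerate case $\Lambda u = 0$, where the normalization is undefined. This occurs with probability zero, and in the limit $C\to 0$ we get $\tilde a \to 0$, consistent with $G(0,\cdot)=E_\pi[X]$. I would state the corollary for $\Lambda u\neq 0$ and remark on this limit. Otherwise the proof is a two-line chaining of the cited lemmas.
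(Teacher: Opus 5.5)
Your proposal is correct and follows the paper's proof: apply Lemma~\ref{lem:Lambda_focus} to get $G(\Lambda u, a)$, then apply Lemma~\ref{renorm_lemma} with $v = \Lambda u$. Your extra checks are accurate additions the paper omits: re-deriving the pure linear tilt from $x_j^2 = 1$, solving $\beta(\tilde a) = C$ for the positive root, and flagging the degenerate case $\Lambda u = 0$. That degenerate case has probability zero whenever $\Lambda \neq 0$ and $a<1$, which holds in the paper's construction since $\Lambda_{d+1,d+1} = \lambda \ge 1$.
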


\begin{corollary}\label{SL_cor}
Let $X_{t}$ be a point observed at time $t$ in a stochastic localization for a target density which is a linear tilt to $\pi$ by a vector $ \tau v$. Then the evaluation of the denoiser can be cast as an evaluation of the base denoiser at a norm $\sqrt{d+1}$ vector and an equivalent signal level $\tilde{a}$,
\begin{align*}
&E_{\pi_{v}}[X|X_{t} = tX+\sqrt{t}Z]= G( \sqrt{d+1} \frac{\tau v + X_{t}}{\|\tau v+X_{t}\|}, \tilde{a})\\
C &= \frac{\|\tau v + X_{t}\|}{\sqrt{d+1}},  \tilde{a} = \frac{1}{2}(\sqrt{\frac{1}{C^{2}}+4}-\frac{1}{C}).
\end{align*}
\end{corollary}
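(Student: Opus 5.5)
The plan is to chain Lemma \ref{lem:SL_tilt} with Lemma \ref{renorm_lemma}; the only work is tracking the tilt strength $\beta$ through the two steps.

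First, Lemma \ref{lem:SL_tilt} gives the stochastic localization denoiser of $\pi_{\tau v}$ at time $t$ as
\begin{align*}
E_{\pi_{v}}[X|X_{t} = tX+\sqrt{t}Z] = G(\tau v + X_{t}, \varphi).
\end{align*}
Here $\varphi$ is chosen so that $\beta(\varphi) = \varphi/(1-\varphi^{2}) = 1$. To see why, expand $\|X_t - t x\|^2/(2t)$ over $x\in\{-1,1\}^{d+1}$. Because $\|x\|^{2}=d+1$ is constant, the quadratic term cancels in the normalized ratio. What remains is the linear exponent $\langle X_{t},x\rangle$. Combining it with the tilt factor $e^{\langle \tau v,x\rangle}$ gives exactly $e^{\langle \tau v + X_t, x\rangle}$, which is $G(\tau v+X_t,\varphi)$ with unit tilt strength.

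Second, apply Lemma \ref{renorm_lemma} with vector $w=\tau v+X_{t}$ and signal level $\varphi$. It returns
\begin{align*}
G(w,\varphi) = G(\sqrt{d+1}\, w/\|w\|, \tilde a), \qquad C=\beta(\varphi)\|w\|/\sqrt{d+1}.
\end{align*}
Substituting $\beta(\varphi)=1$ gives $C=\|\tau v+X_{t}\|/\sqrt{d+1}$, which is the stated constant. The formula for $\tilde a$ is inherited unchanged.

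As a sanity check on that formula, $\tilde a$ should be the root in $[0,1]$ of $\beta(\tilde a)\sqrt{d+1}=\beta(\varphi)\|w\|$, i.e.\ $\tilde a/(1-\tilde a^2)=C$. This is the quadratic $C\tilde a^2+\tilde a - C=0$, whose positive root is $\tilde a=\tfrac12(\sqrt{1/C^2+4}-1/C)$, matching the statement.

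I expect no real obstacle. The only points needing care are confirming that Lemma \ref{lem:SL_tilt}'s identification holds with tilt strength exactly $1$, and handling two edge cases. The degenerate case $w=0$ (for instance at $t=0$ with $\tau v=0$) must be excluded, or read as $C\to0$, $\tilde a\to 0$, where both sides equal the mean of $\pi$. Also, $\tilde a$ always lies in $(0,1)$ for $C>0$, so it is a valid VP signal level.
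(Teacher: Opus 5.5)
Your proposal is correct and follows the paper's proof exactly: apply Lemma~\ref{lem:SL_tilt} to get $G(\tau v + X_t, \varphi)$, then renormalize with Lemma~\ref{renorm_lemma}, where $\beta(\varphi)=1$ gives $C=\|\tau v+X_t\|/\sqrt{d+1}$. Two of your points go slightly beyond the paper, which leaves them implicit: the explicit check that $\beta(\tilde a)=C$, and the note that the case $\tau v + X_t = 0$ is degenerate.
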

At sampling time, we encounter evaluations of the $G$ operator at various $\tilde{a}$ levels and norm $\sqrt{d+1}$ vectors with most of their magnitude contained in the $d+1$ coordinate. Via $\Lambda$  focusing, points in our training set which do achieve reward create vectors with similar properties the network can train on. Both types of vectors inhabit the portion of the hypersphere near the one-hot pole $e_{*}$, which forces overlap of the training distribution and the typical points witnessed at sample time. A further visualization of these symmetries can be found in Appendix Figure \ref{overlap_sketch}.

\section{Training and Sampling Algorithms}

In training we may noise the data under any joint distribution of signal level $a$ and $\Lambda$ focusing matrix. After renormalizing the noisy vector to norm $\sqrt{d+1}$ with equivalent signal level $\tilde{a}$, the mean-tilt operator $G$ is the population minimizer of the inverse problem regardless of that distribution. Standard VP noising is the special case $\Lambda = I$ with no renormalization.

\begin{lemma}[General regression minimizer]\label{lem:reg_target}
Let $\mu$ be any noise distribution over pairs $(a,\Lambda)$ with $a\in[0,1]$ and
$\Lambda$ a diagonal matrix. Draw $(a,\Lambda)\sim\mu$, $X\sim\pi$, $Z\sim N(0,I)$
independently, and set $U = a\Lambda X + \sqrt{1-a^{2}}Z$. Let
$H:\mathbb{R}^{d+1}\times[0,1]\to\mathcal{S} \times [0,1]$ be the renormalization map of
Lemma~\ref{renorm_lemma},
\begin{align*}
H(v,a) &= \Big(\sqrt{d+1}\,\tfrac{v}{\|v\|},\ \tilde{a}\Big)\\ C&=\beta(a)\tfrac{\|v\|}{\sqrt{d+1}},\quad
\tilde{a}=\frac{1}{2}(\sqrt{\frac{1}{C^{2}}+4}-\frac{1}{C}).
\end{align*}
Then over all measurable functions $g$, the tilt mean operator $G$ is the population minimizer,
\begin{align*}
G = \operatorname*{argmin}_{g}\ E\big\|X - g\big(H(\Lambda U,a)\big)\big\|^{2}.
\end{align*}
\end{lemma}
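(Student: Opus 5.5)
The argument reduces to the standard fact that a conditional expectation minimizes mean squared error, combined with the two symmetries already established. Write $W = H(\Lambda U, a)$ for the renormalized feature. For any measurable $g$, the tower property gives
\begin{align*}
E\|X - g(W)\|^{2} = E\|X - E[X\mid W]\|^{2} + E\|E[X\mid W] - g(W)\|^{2},
\end{align*}
because the cross term vanishes after conditioning on $W$. Consequently the minimizer is $g^{\star}(w) = E[X \mid W = w]$, unique $\mu_W$-almost everywhere. Everything therefore comes down to showing that $E[X\mid W] = G(W)$, where $G$ is read as a function of the pair $W = (\sqrt{d+1}\,v/\|v\|, \tilde a)$.

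To establish this, I condition first on the finer variable $(U, a, \Lambda)$. Given $(a,\Lambda)$, $U$ is exactly the $\Lambda$-focused VP observation. Lemma~\ref{lem:Lambda_focus} then gives $E[X\mid U, a, \Lambda] = G(\Lambda U, a)$, and Lemma~\ref{renorm_lemma} (equivalently Corollary~\ref{Lambda_train_cor}) gives $G(\Lambda U, a) = G(H(\Lambda U, a)) = G(W)$. Since $W$ is a measurable function of $(U,a,\Lambda)$, the tower property yields
\begin{align*}
E[X\mid W] = E\big[E[X\mid U,a,\Lambda]\,\big|\, W\big] = E[G(W)\mid W] = G(W).
\end{align*}
This holds for every mixing distribution $\mu$, because $(a,\Lambda)$ is drawn independently of $(X,Z)$. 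In particular, conditioning on $(a,\Lambda)$ leaves $X\sim\pi$, so Lemma~\ref{lem:Lambda_focus} applies conditionally.

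The main obstacle is checking that Lemma~\ref{lem:Lambda_focus} really identifies $G(\Lambda u, a)$ pointwise. The posterior of $X$ given $U=u$ is proportional to $\pi(x)\exp\!\big(-\tfrac{1}{2(1-a^2)}\|u - a\Lambda x\|^2\big)$. The term $\|\Lambda x\|^2 = \operatorname{tr}(\Lambda^2)$ is constant on the hypercube, and this remains true for any diagonal $\Lambda$, not only those with $\operatorname{tr}(\Lambda^2)=d+1$. So the posterior reduces to the linear tilt $\pi(x)e^{\beta(a)\langle \Lambda u, x\rangle}$, which is exactly $G(\Lambda u, a)$.

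A few measurability and degeneracy points remain. The map $H$ is undefined at $v=0$ and at $a\in\{0,1\}$; these events have $\mu$-probability zero, or can be handled by a convention, because $\Lambda U$ has a Gaussian component whenever $a<1$. At $a=1$ the feature still determines the result, since $\beta=\infty$ corresponds to the limit $\tilde a\to 1$. Finally, $X$ is bounded, so all second moments are finite. Uniqueness then holds almost everywhere with respect to the law of $W$, which is the sense in which the $\operatorname{argmin}$ is to be read.
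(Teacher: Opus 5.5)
Your proposal is correct and follows essentially the paper's argument: both rest on $E[X\mid U,a,\Lambda]=G(\Lambda U,a)=G(H(\Lambda U,a))$, which comes from Lemma~\ref{lem:Lambda_focus} (applied conditionally, using independence of $(a,\Lambda)$ from $(X,Z)$) together with Lemma~\ref{renorm_lemma}. The paper concludes that the unconstrained minimizer over functions of $(U,a,\Lambda)$ already factors through $H$ and hence also minimizes over the smaller class, while you get the same sufficiency fact from the tower identity $E[X\mid W]=E\big[E[X\mid U,a,\Lambda]\mid W\big]=G(W)$. One small caveat: your claim that the degenerate events ($a\in\{0,1\}$, or $\Lambda U=0$, e.g.\ when $\Lambda=0$) have $\mu$-probability zero is not true for an arbitrary $\mu$, so it is really your limiting convention ($\tilde a\to 0$ or $1$) that covers them, and the paper's proof is no more careful on this point.
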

We minimize the empirical loss by SGD in Algorithm~\ref{alg:batch-sampler}, drawing a fresh $(a,\Lambda)$ per training point. We sample $\beta$ log-uniformly on $[\beta_{\min},\beta_{\max}] = [0.02, 50]$ and set $a = a(\beta)$, the inverse of $\beta(a)=a/(1-a^{2})$. With probability $p_0$ we take $q = 1_d$, else a random $k$-subset is active. With probability $p_\lambda$ we take $\lambda = 1$, else $\lambda$ is log-uniform on $[1,\sqrt{d+1}]$. When $q = 1_d$ and $\lambda = 1$ this is plain VP noising and no renormalization is applied. We use $p_0 = 3/4$, $p_\lambda = 1/4$, so $3/16$ of draws are plain VP noising. 

\begin{algorithm}
\caption{Batch SGD step for the tilt--mean network $g_\theta$}
\label{alg:batch-sampler}
\begin{algorithmic}[1]
\Require Data $\{X_i\} \subset \{-1,1\}^{d+1}$ (reward in coordinate $d+1$); batch size $N$; step size $\gamma$
\State $\mathcal{L} \gets 0$
\For{$n = 1, \dots, N$}
        \State Draw $x \sim \{X_i\}$,\; $Z \sim \mathcal N(0, I_{d+1})$,\; $\beta \sim \mathrm{LogUnif}[\beta_{\min},\beta_{\max}]$,\; $a \gets a(\beta)$
    \State W.p.\ $p_{0}$: $q \gets \mathbf 1_d$;\; else draw $k \sim \mathrm{Unif}\{2,\dots,d\}$ and set $q \gets \sqrt{d/k}1_{S}$ for a random $k$-subset $S$
    \State W.p.\ $p_{\lambda}$: $\lambda \gets 1$;\; else $\lambda \sim \mathrm{LogUnif}[1,\sqrt{d+1}]$
    \State $\Lambda_{j,j} \gets \sqrt{1 + \tfrac1d - \tfrac{\lambda^2}{d}} q_j ,j \le d$;\quad $\Lambda_{d+1,d+1} \gets \lambda$
    \State $\eta \gets\bigl(a \Lambda^{2} x + \sqrt{(1-a^2)} \Lambda Z\bigr)$ \Comment{tilt vector}
    \If{$q = \mathbf 1_d$ \textbf{and} $\lambda = 1$ } 
        \State $(v^*, s^*) \gets \bigl(\eta,\ a\bigr)$ 
        \Comment{basic VP noising}
    \Else
        \State $C \gets \beta(a)\tfrac{\|\eta\|}{\sqrt{d+1}},\quad
\tilde{a} \gets \frac{1}{2}(\sqrt{\frac{1}{C^{2}}+4}-\frac{1}{C})$;\\ \quad\quad ~~~$v^* \gets \sqrt{d+1}\;\eta/\|\eta\|$
    \EndIf
    \State $\mathcal{L} \gets \mathcal{L} + \|x - g_\theta(v^*, s^*)\|^2$
\EndFor
\State $\theta \gets \theta - \gamma \nabla_\theta \mathcal{L}$
\end{algorithmic}
\end{algorithm}

We note our sampler also makes use of self-conditioning \citep{chen2023analog} which is a common practice in the field. We discuss the form of self-conditioning in Appendix \ref{app:self-cond} to avoid over complication of the discussion in the main body.

Sampling a linear tilt follows Corollary~\ref{SL_cor}: at each step the state is shifted by the tilt vector and renormalized before querying the network, shown in Algorithm~\ref{alg:sl-sampling}. The terminal $X_{T}$ is continuous, and is decoded with one final call to the denoiser network.

\begin{algorithm}
\caption{Stochastic-localization sampling from the tilted density $\mu \propto \pi\, e^{\langle \nu,\cdot\rangle}$ using symmetries}
\label{alg:sl-sampling}
\begin{algorithmic}[1]
\Require Trained network $g_\theta$; tilt vector $\nu = \tau_{\rm infill}\bigl(e_{A^+} - e_{A^-}\bigr) + \tau_{\rm reward}\, e_*$; time grid $0 = t_0 < t_1 < \dots < t_K = T$
\State $X_{t_0} \gets 0$
\For{$k = 0, \dots, K-1$}
    \State $\delta \gets t_{k+1} - t_k$
    \State $\eta \gets \nu + X_{t_k}$
    \State $C\gets \tfrac{\|\eta\|}{\sqrt{d+1}},\quad
s^{*}\gets \frac{1}{2}(\sqrt{\frac{1}{C^{2}}+4}-\frac{1}{C})$,\\~~~ $v^* \gets \sqrt{d+1}\;\eta/\|\eta\|$ 
    \State Draw $\xi \sim \mathcal N(0, I_{d+1})$
    \State $X_{t_{k+1}} \gets X_{t_k} + \delta\, g_\theta(v^*, s^{*}) + \sqrt{\delta}\,\xi$ 
\EndFor
\State $\eta \gets \nu + X_{T}$, \quad compute $(v^*, s^*)$ from $\eta$ as above
\State \Return $\hat x \gets \mathrm{sign}\big(g_\theta(v^*, s^*)\big)$
\end{algorithmic}
\end{algorithm}

\section{Experiments}

We demonstrate the performance of the method in three experimental settings: a 12-dimensional Ising model with a non-linear reward tilt, 4x4, 6x6, and 9x9 Sudoku unconditional grid generation and puzzle solving (infilling), and MNIST conditional digit generation. Full technical descriptions of the target density, training set, model architecture, and additional details for each experiment can be found in Appendix \ref{appendix:experiments}. All experiments are sampled up to a stochastic localization time $T = 20$, with the discrete grid $\{t_{k}\}_{k=0}^{K}$ is constructed with $K = 1000$ and a scheduling algorithm defined in Appendix \ref{appendix:schedule}. We note that at each $\tau$ tilt strength, we know the theoretical probability of producing a reward-achieving solution using the log-odds curve as discussed in Appendix \ref{appendix:logodds}.
\subsection{12-Dimensional Ising Model}

The 12-dimensional Ising model has a reward tilt which gives reward when the number of positive bits is exactly 10. The state space of the Ising model is small enough to enumerate giving exact ground truth for the denoiser to compare our network against. In Appendix \ref{appendix:experiments} Figure \ref{fig:ising_ood} we see the network denoiser has learned the ground truth denoiser well with low $L^{2}$ error across tilt strengths, and produces the correct proportion of reward-achieving solutions as predicted by the log-odds curve in Appendix \ref{appendix:experiments} Figure \ref{fig:ising_logodds}. The Ising model is a small scale experiment with ground truth denoiser to calibrate the method before proceeding to larger state spaces.

\subsection{4x4 Sudoku}

The 4x4 Sudoku problem has a small enough state space to enumerate the ground truth denoiser. We compare the network denoiser to the ground truth in $L^{2}$ error in Appendix \ref{appendix:experiments} Figure \ref{fig:4x4_ood}, the network learns the denoiser well.
For the log-odds curve in Appendix \ref{appendix:experiments} Figure \ref{fig:4x4_logodds} we can consider the stochastic localization algorithm implemented with either the ground truth denoiser or the network estimation. Both methods track the log-odds up to 98\% reward-achieving solutions at $\tau \approx 3.5$, but the network sampler does start to lag at higher tilt strengths while the ground truth denoiser does continue to track at high tilts up to $\tau = 5$.

With 288 reward-achieving puzzles, we can track if the network is uniformly sampling these puzzles. In 3,000 generated reward-achieving puzzles, the exact-denoiser sampler visits all 288 solutions, each between 4 and 21 times, and
passes a $\chi^{2}$ test of uniformity ($\chi^{2}_{287} = 280.1$, $p = 0.60$). The
network visits 287 of the 288 solutions, each between 0 and 25 times, and fails the
test ($\chi^{2}_{287} = 391.9$, $p < 10^{-4}$). The network therefore covers nearly
the whole solution set, but it does not sample it uniformly: under the reward tilt it
has learned to favor some solutions over others. Results are visualized in Appendix \ref{appendix:experiments} Figure \ref{fig:visit_hist}.

An added benefit of embedding in the hypercube is we can do early bit-by-bit decoding of the continuous vector $X_{t}$ at a time $t$ based on the coordinate-wise entries of the decoder $|E[X_{i}|X_{t}]| \geq 0.75$. One of the main complaints against continuous diffusion samplers for discrete targets relative to autoregressive models and masked diffusion is that because they operate in continuous space there is no discernible discrete token string until the final decoding step. This technique solves that problem and produces a partially complete token string before terminal decoding, visualized in Appendix \ref{appendix:experiments} Figure \ref{fig:4x4_fuzzy}.

\subsection{6x6 Sudoku}

For 6x6 Sudoku, the state space is large and there are vastly more reward-achieving puzzles than the 4x4 problem. In the 4x4 problem, the state space was size $2^{32}$ with only 288 reward-achieving puzzles. The training set of size 150,000 contained 5\% reward-achieving puzzles indicated by a 1 in their reward coordinate. Thus each reward-achieving puzzle was seen in the training set and a memorization check was performed with a $\chi^{2}$ test of uniform sampling.

For 6x6 Sudoku the state space is $2^{108}$ with 28 million reward-achieving puzzles. The training set of size 5 million contained 5\% reward-achieving puzzles, which is 250,000 of the 28 million possible valid grids, 0.88\%.

We cannot fully enumerate the denoiser for reward tilts in the 6x6 setting, but we can show the $L^{2}$ error for the untilted density $\tau = 0$ is controlled for the learned network in Appendix \ref{appendix:experiments} Figure \ref{fig:6x6_ood}. The network also properly tracks the log-odds across tilt strengths in Appendix \ref{appendix:experiments} Figure \ref{fig:6x6_logodds}.

We check memorization by the rate the network reproduces a reward-achieving puzzle provided in the training set. With the training set containing $0.88\%$ of the reward-achieving puzzles, in 1,000 reward-achieving generations proper uniform sampling would produce exact matches with the training set by chance $3$--$14$ with a $95\%$ interval. We observe $10$ matches in 1,000 samples, consistent with no memorization.

\subsection{9x9 Sudoku}

For 9x9 Sudoku the state space is too large to enumerate a ground truth denoiser. The number of reward-achieving puzzles is so large there is a near 0 probability of recreating a training set puzzle under proper sampling. The training set is 10 million Sudoku grids with 5\% being reward-achieving puzzles.

The 9x9 Sudoku target density is the first example where the effect of self-conditioning is noticeable, self-conditioning is discussed in more detail in Appendix \ref{app:self-cond}. In summary, we consider two modes of self-conditioning: mode ``two-pass'' reuses the network's denoiser evaluation at a given time and maintains a memory one Markov structure. Mode ``carry'' reuses a previous time point's denoiser value and breaks this Markov structure but is more commonly used in modern methods.

The two methods give different results when tracking the log-odds in Appendix \ref{appendix:experiments} Figures~\ref{fig:9x9_logodds_twopass} and~\ref{fig:9x9_logodds_carry}. The reward coordinate is denoised jointly with the tokens, so the sampler reports its own prediction of reward in the $d+1$ coordinate, and a correctly sampling model should have this prediction agree with the graded reward. The two-pass self-conditioning mode tracks the log-odds in its own reward prediction in its $d+1$ coordinate, but not in the graded validity, which plateaus near $60\%$. The model's $d+1$ coordinate reports it is producing reward-achieving grids far more often than it actually is. The carry self-conditioning mode plateaus near $90\%$ graded validity with a $8$--$12\%$ gap between predicted and graded reward. The reward function is very sharp, mis-sampling a single bit gives 0 reward. Sampled non-reward grids are typically close to reward ones: $90\%$ of two-pass samples lie within $10$ out of $324$ bits of the nearest valid grid in Hamming distance (Appendix \ref{appendix:experiments} Figure~\ref{fig:ham_cdf}), consistent with a few bits mis-sampled late in the trajectory being penalized by the sharp validity reward.

The model performs both unconditional grid generation and infilling, which some may call ``solving'' a puzzle, by adding a cell-fixing tilt to the reward tilt (Figure~\ref{fig:sudoku9-both}). This type of sampling is natural for the diffusion model here which operates under linear tilts and decodes in any order, but would be difficult for a left-to-right autoregressive model which would need to backtrack against later given values. We use a reward tilt strength and infill tilt strength of $\tau = 3$ and carry self-condition mode across various numbers of given cell values in Appendix \ref{appendix:experiments} Table~\ref{tab:sudoku9-infill} and Figure~\ref{fig:infill_vals_plot}. Note the sampler may not maintain the initial given cell values as they are a tilt vector not a hard constraint. Given $71/81$ cells the model maintains the initial cells and achieved reward in $100\%$ of $2000$ draws. Given $11/81$ cells, the initial cell values are maintained $92\%$ of the time, a reward achieved $87\%$ of the time, and both objectives satisfied $80\%$ of the time.

Finally, the model has no indication of memorizing the training set. Of $1000$ unconditional draws at $\tau = 3$ none match the training set. Splitting the training set into two halves, the cell-wise Hamming distance from each reward-achieving sample to its nearest reference grid in one half has the same distribution as that of sample from the split training set half in Appendix Figure~\ref{fig:9x9_mem_check}. Model produced reward-achieving puzzles are novel and no more similar than a two part split of training points to themselves.

\begin{figure}
    \centering
    \begin{subfigure}[t]{0.4\textwidth}
        \centering
        \includegraphics[width=\linewidth]{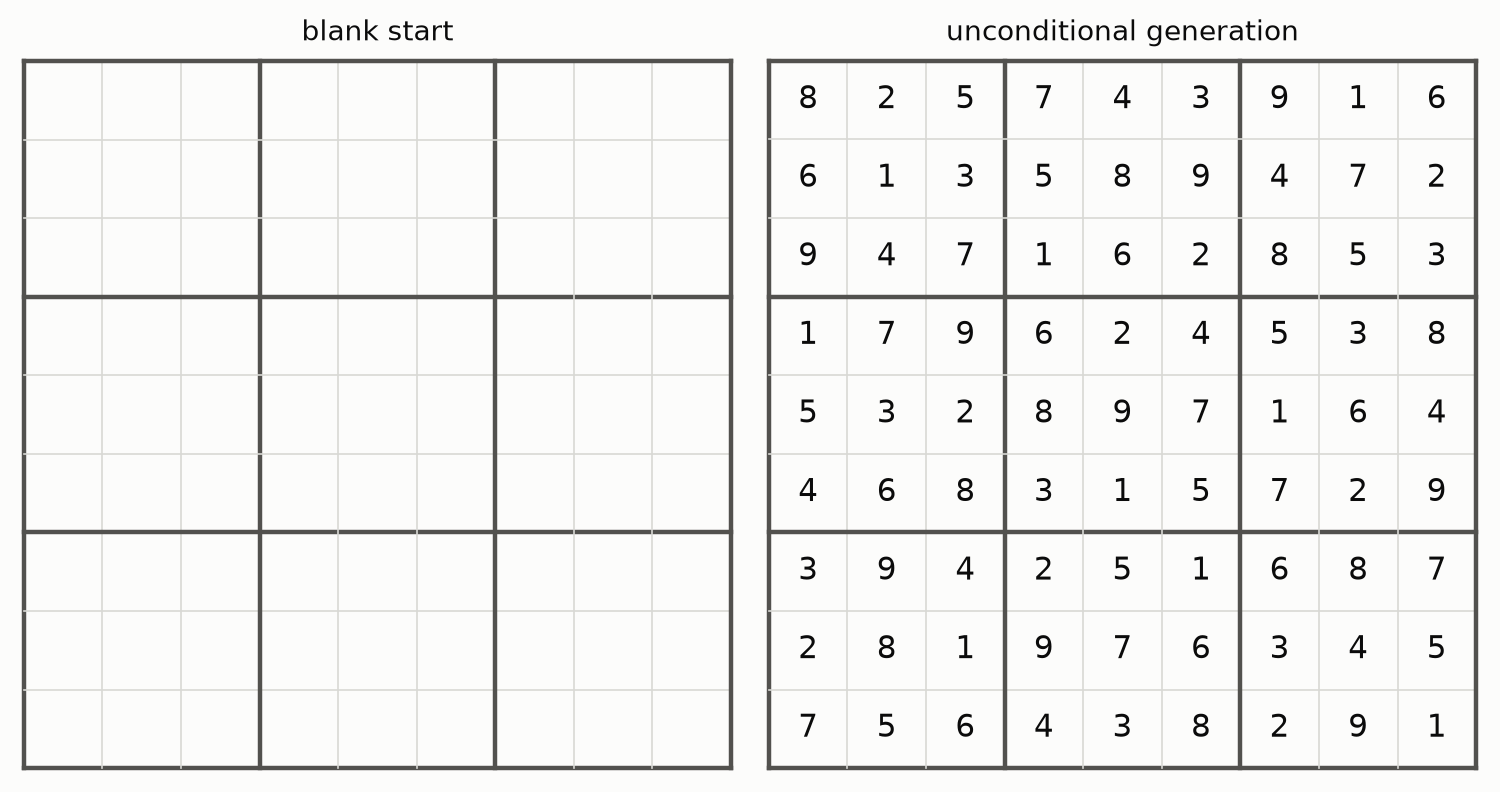}
        \caption{Unconditional generation from a blank grid.}
        \label{fig:sudoku9-unconditional}
    \end{subfigure}
    \hfill
    \begin{subfigure}[t]{0.4\textwidth}
        \centering
        \includegraphics[width=\linewidth]{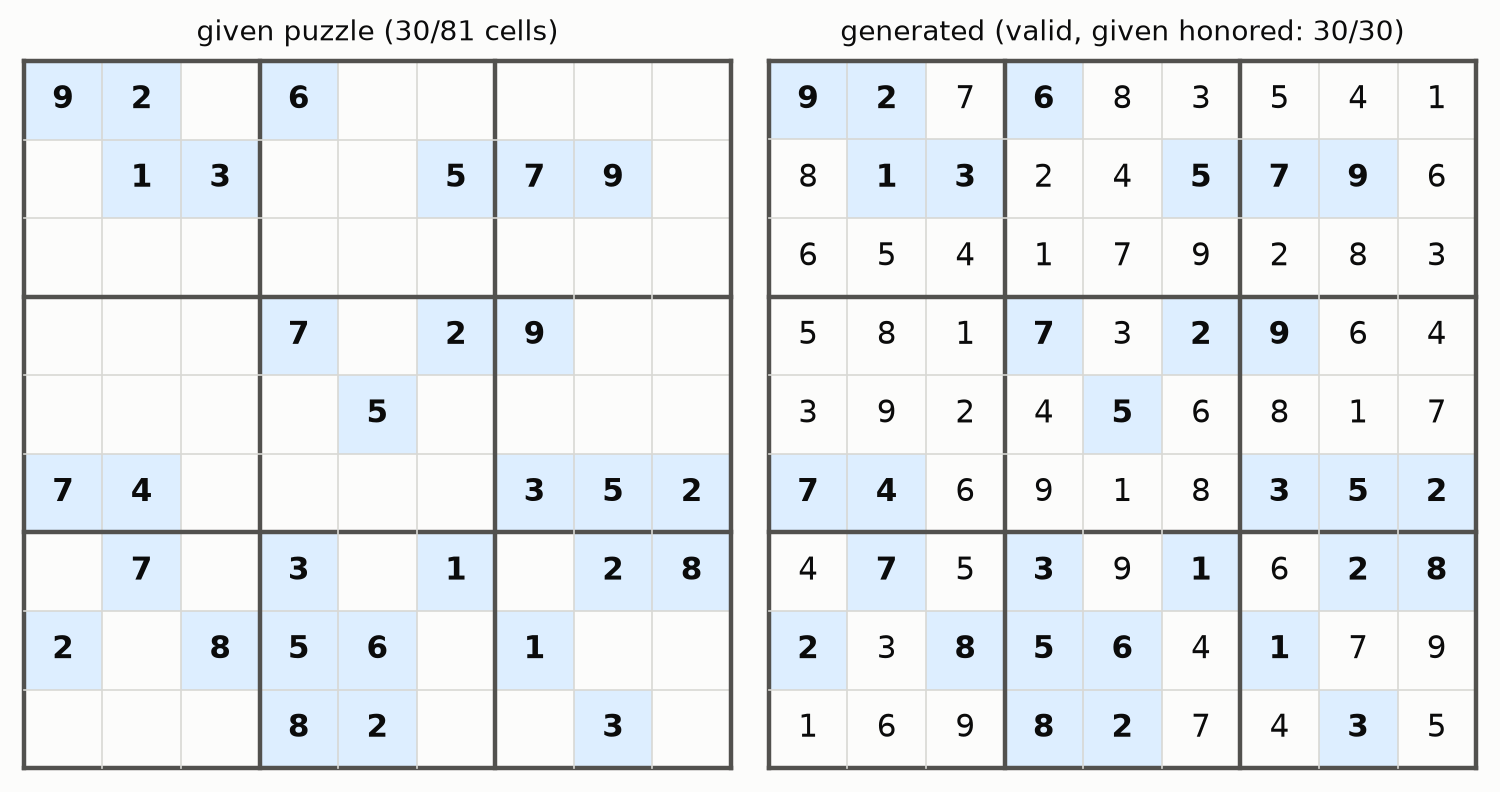}
        \caption{Conditional generation from a partially given puzzle.}
        \label{fig:sudoku9-infill}
    \end{subfigure}
    \caption{Sudoku 9x9 completions sampled from the tilt-operator network.}
    \label{fig:sudoku9-both}
\end{figure}

\begin{figure}
\begin{center}
\includegraphics[width = 0.44\textwidth]{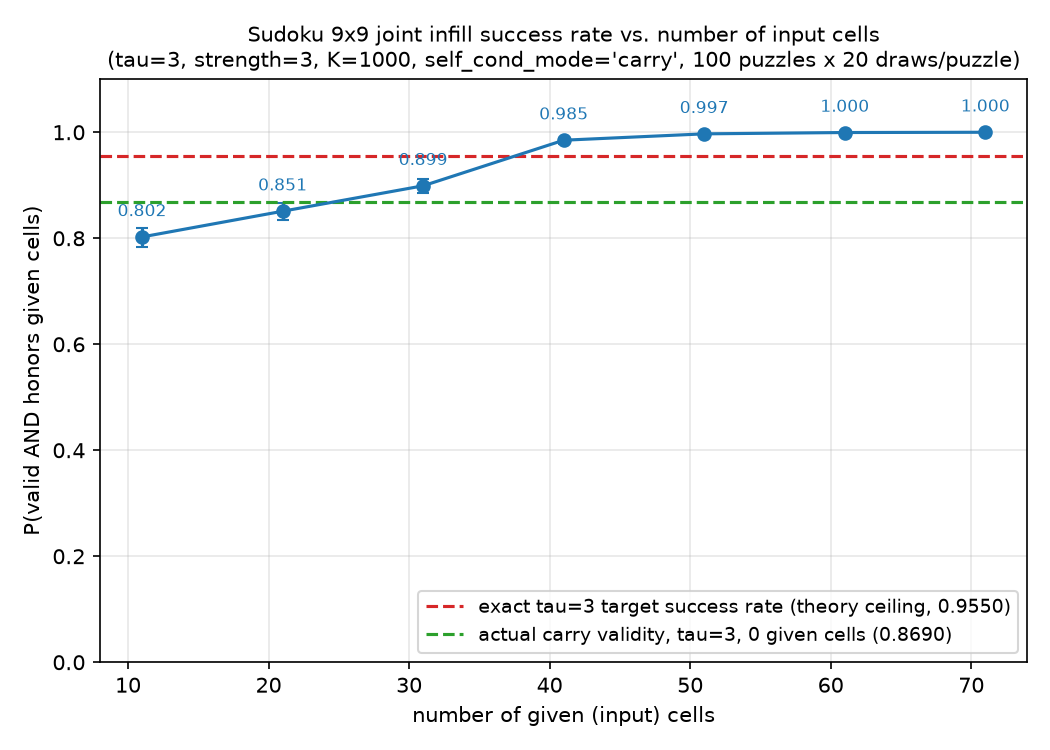}
\end{center}\caption{Probability of correct grid infilling across input cell numbers. Given no initial cells, should return a valid grid 95\% of the time, we measure saturation at 87\% without any infilling.}\label{fig:infill_vals_plot}
\end{figure}

\subsection{MNIST Digit Images}
The MNIST data set consists of 60,000 images of hand drawn digits 0-9 and 10,000 hold out examples. We encode reward as a $10$-bit one-hot class block, and generating class tilts along the whole block with 1 class being positively tilted and the other 9 negatively tilted. Generated images are classified by an independent CNN classifier ($98.74\%$ held-out accuracy). Since the scorer has a ceiling to its classifier accuracy, the observable success rate saturates at the classifier's per-class sensitivity; points beyond $\tau \approx 1.69$ are uninformative.

In Appendix \ref{appendix:experiments} Figure \ref{fig:MNIST_big_grid} we see generated pools of images show similar variety and quality as hold out examples not used for training indicating no mode collapse or degradation of digit quality under tilting. 

In Appendix \ref{appendix:experiments} Figures \ref{fig:mnist-logodds-twopass}-\ref{fig:mnist-logodds-carry-pooled} self-conditioning mode two-pass and carry are also compared for their tracking of the log-odds. In this experiment, carry mode seems to over-sample a class at medium tilts. Two-pass tracks the classifier-corrected exact rate within its confidence interval through $\tau \approx 1$, in both the classifier score and the network's own reward block. Carry overshoots the tilt in every class: at $\tau = 0.51$ it produces the requested class $57.8\%$ of the time against an exact $45.9\%$, with the $95\%$ interval excluding the exact value at every $\tau$ from $0.26$ to $1.03$. Both methods produce 95\% class correct images at large tilts, and further tilt strengths go beyond the judge CNN's sensitivity.

\subsubsection*{Acknowledgments}
We gratefully acknowledge the support of the NSF for FODSI through grant DMS-2023505, of the NSF and the Simons Foundation for the Collaboration on the Theoretical Foundations of Deep Learning through awards DMS-2031883 and 814639, and of the ONR through MURI award N000142112431, and a research grant from Bridgewater to the Simons Institute.

\section*{AI Use Statement}

In this work, we used generative AI tools to generate synthetic data sets, help develop theoretical models or conceptual frameworks, formulate mathematical claims, assist in the writing of proofs, design or provide feedback on research methodology or experiments, implement methods, clean and reformat datasets, support qualitative and thematic data analysis, interpret results. AI assisted coding was used in the project to generate synthetic training sets, write training code, and write evaluation code. AI was used to suggest hyper-parameter choices and present analysis of outputs. All code and outputs also human verified. All theorems and proofs are original constructs of the authors. A first draft written with purely human analysis. This draft is then presented to the AI for suggested edits, which may be incorporated by the authors into a refined proof. All proofs are human verified as correct and began with the authors own steps of logic as the foundation.

We have not used generative AI tools to provide critical ingredients for proving mathematical claims, propose or refine hypotheses, or assist with translation. ``Critical ingredients'' for proofs is subjective. The authors do not consider AI as presenting any of the critical ideas that make the proofs work, but rather suggesting useful notation and suggesting more succinct summaries of steps. All logical steps in the proofs were human designed initially.

Additionally, we used generative AI tools to suggest experimental parameters, create or edit software code, summarize or analyse existing literature, discover research topics or identify gaps, brainstorming, sourcing/searching for information, edit a research paper to improve readability, identify relevant literature.

We have reviewed all AI-assisted work. We take responsibility for the final content of this work, including text, claims or artifacts produced with the aid of generative AI.

\bibliographystyle{apalike}
\bibliography{tilted_diff}


\clearpage
\appendix
\thispagestyle{empty}

\onecolumn

\section{PROOFS}

\textbf{Proof of Lemma \ref{lem:const_norm}}
\begin{proof}
Throughout the proof $X$ and $Z$ are independent and $X_i^{2}=1$ for every $i$. 

\textbf{Condition (1):}

Note the moments $E[Z_i]=E[Z_i^{3}]=0$, $E[Z_i^{2}]=1$, $E[Z_i^{4}]=3$ for a standard normal random variable.

The random variable $U$ is the linear combination of $X$ and $Z$, and $Z$ is mean 0. The norm of the sum decomposes into the sum of the norms with a cross term that is 0,
\begin{align*}
E[\|U\|^{2}]&= a^{2}E[\|X\|^{2}]+2a\sqrt{1-a^{2}}\langle E[X], E[Z] \rangle+(1-a^{2})E[\|Z\|^{2}]\\
&= a^{2}E[\|X\|^{2}]+(1-a^{2})E[\|Z\|^{2}].
\end{align*}
Since the state space of $X$ is restricted to the hypercube $\{-1,1\}^{d+1}, \|X\|^{2}=d+1$ w.p. 1. Since $Z$ is standard normal. $E[\|Z\|^{2}] = d+1$. The expectation is then $d+1$ for any VP combination $a$. The variance decomposes as
\begin{align*}
E[(\|U\|^{2}-(d+1))^{2}]&= \sum_{i,j = 1}^{d+1}E[(U_{i}^{2}-1)(U_{j}^{2}-1)].
\end{align*}
Note that $X_{i}^{2}=1$ constantly, so we expand
\begin{align*}
 U_i^{2}-1 &= 2a\sqrt{1-a^{2}}X_iZ_i + (1-a^{2})(Z_i^{2}-1).
\end{align*}
Consider first the diagonal terms $i = j$,
\begin{align*}
E[(U_i^{2}-1)^{2}]
  &= 4a^{2}(1-a^{2})E[Z_i^{2}]
  + (1-a^{2})^{2}E[(Z_i^{2}-1)^{2}]
  + 4a(1-a^{2})^{3/2}E[X_i]E[Z_i^{3}-Z_i]\\
  &= 4a^{2}(1-a^{2}) + 2(1-a^{2})^{2}\\
  &= 2(1-a^{4}).
\end{align*}
Then the cross terms $i \neq j$ are mean 0. Expand $E[(U_{i}^{2}-1)(U_{j}^{2}-1)]$, every term vanishes since $Z_i\perp Z_j$ and both are independent of $X$:
\begin{align*}
  E[X_iX_jZ_iZ_j] &= E[X_iX_j]\,E[Z_i]\,E[Z_j] = 0,\\
  E[X_iZ_i(Z_j^{2}-1)] &= E[X_i]\,E[Z_i]\,E[Z_j^{2}-1] = 0,\\
  E[(Z_i^{2}-1)(Z_j^{2}-1)] &= E[Z_i^{2}-1]\,E[Z_j^{2}-1] = 0.
\end{align*}
Therefore $\operatorname{Var}\|U\|^{2}=\sum_i\operatorname{Var}U_i^{2}=2(1-a^{4})(d+1)$. Since the variance is on the same scale as the expectation $d+1$, a Chebyshev's inequality shows the norm is highly concentrated around its expected value in ratio,
\[
  P\Big(\Big|\tfrac{\|U\|^{2}}{d+1}-1\Big|>\epsilon\Big)
  = P\big(|\|U\|^{2}-(d+1)|>\epsilon(d+1)\big)
  \le \frac{2(1-a^{4})(d+1)}{\epsilon^{2}(d+1)^{2}}
  = \frac{2(1-a^{4})}{\epsilon^{2}(d+1)}.
\]

\textbf{Condition (2):}

Condition on $X=x\in\{-1,1\}^{d+1}$. Since $|x_i|=1$,
\[
  |U_i| = |a x_i + \sqrt{1-a^{2}}\,Z_i| = |a + \sqrt{1-a^{2}}\,x_iZ_i|.
\]
Coordinate-wise sign flips preserve $N(0,I)$, so $(x_1Z_1,\dots,x_{d+1}Z_{d+1})$
has the same law as $Z$. Hence, conditionally on $X=x$,
$(|U_1|,\dots,|U_{d+1}|)$ has the law of
$(|a+\sqrt{1-a^{2}}\,Z_1|,\dots,|a+\sqrt{1-a^{2}}\,Z_{d+1}|)$.
This vector has i.i.d.\ coordinates with law $|N(a,1-a^{2})|$. The conditional
law does not depend on $x$, so it is also the unconditional law, whatever $\pi$ is.
\end{proof}

\textbf{Proof of Lemma \ref{lem:SL_tilt}}
\begin{proof}
Let $q(x)\propto \pi(x)e^{\tau \langle x, v \rangle}$. It is a property of the stochastic localization path the marginal density of $X_{t}$ is equivalent to a specific linear combination with a normal random variable,
\begin{align*}
X_{t} = t X+\sqrt{t}Z, p(x_{t}|x) \sim N(t x, t I).
\end{align*}
The conditional density for $p(x|x_{t})$ is then
\begin{align*}
p(x|x_{t}) &\propto q(x)e^{-\frac{1}{2 t}\|x_{t}-tx\|^{2}}\\
&\propto \pi(x)e^{\tau \langle x, v \rangle}e^{-\frac{1}{2 t}\|x_{t}\|^{2}+\frac{t}{t}\langle x, x_{t} \rangle- \frac{t}{2}\|x\|^{2}},
\end{align*}
with $x$ as the random variable and $x_{t}$ as the conditioning point, $\|x_{t}\|^{2}$ can be dropped as a normalizing constant scaling all $x$ locations equally. Furthermore, by the assumption $x$ lives in the hypercube, it is constant norm and $\|x\|^{2}$ can also be dropped. Thus $p(x|x_{t})$ is proportional to a pure linear tilt to the base density $\pi$,
\begin{align*}
p(x|x_{t}) &\propto \pi(x)e^{\langle \tau v + x_{t}, x \rangle}.
\end{align*}
Recall the definition of the tilt strength function $\beta(a) = \frac{a}{1-a^{2}}, a \in [0,1]$. The golden ratio $\theta = \frac{1+\sqrt{5}}{2}\approx 1.618$ is precisely the number such that $\theta+1= \theta^{2}$. We define $\varphi = \theta - 1 \approx 0.618$ as the number such that
\begin{align*}
1-\varphi^{2}&= 1-(\theta-1)^{2}= - \theta^{2}+2\theta= -1+\theta= \varphi.
\end{align*}
Thus it is the case that $\beta(\varphi) = 1$ and we can express the stochastic localization's denoiser in terms of $G$ using $\varphi$ as our mixture level and $\tau v + X_{t}$ as the evaluation location,
\begin{align*}
E_{\pi_{v}}[X|X_{t} = tX+\sqrt{t}Z]
=\sum_{x} x \frac{\pi(x)e^{ \beta(\varphi)\langle \tau \nu+X_{t}, x \rangle}}{E_{\pi}[e^{ \beta(\varphi) \langle \tau \nu+X_{t}, x \rangle}]} = G(\tau v+X_{t}, \varphi).
\end{align*}
\end{proof}

\textbf{Proof of Lemma \ref{renorm_lemma}}
\begin{proof}
For the mean-tilt operator $G(v,a)$, the VP tilt strength operator $\beta(a) = \frac{a}{1-a^{2}}$ captures the term that scales the inner product $\langle v,x \rangle$,
\begin{align*}
G(v,a) = \sum_{x} x \frac{\pi(x)e^{ \beta(a) \langle v, x \rangle}}{ \sum_{\bar{x}} \pi(\bar{x})e^{ \beta(a) \langle v, \bar{x} \rangle}}.
\end{align*}
The operator $\beta$ maps $[0,1] \to [0,\infty)$ and is increasing, thus any tilt strength scaling $b$ can be achieved by some signal level that is the $\beta$ mapping at that value $a = \beta^{-1}(b)$. Inverting $\beta$ results in finding the roots of a quadratic solved by quadratic formula,
\begin{align*}
\beta(a)&= C\\
a &= (1-a^{2})C\\
C a^{2}+a-C&=0\implies a = \frac{-1\pm \sqrt{1+4C^{2}}}{2C},
\end{align*}
only the positive root results in $a \in[0,1]$, so we have inverse mapping for any $C \in [0,\infty)$,
\begin{align*}
\beta^{-1}(C)&= \frac{1}{2}\left(\sqrt{\frac{1}{C^{2}}+4}-\frac{1}{C} \right).
\end{align*}
For any inputs $v,a$ to the base denoiser, we can multiply and divide the $v$ vector by any scaling $\lambda$ in the exponent and collect the remaining term in front of the inner product as some new tilt strength,
\begin{align*}
 \sum_{x} x \frac{\pi(x)e^{ \beta(a) \langle v, x \rangle}}{ \sum_{\bar{x}} \pi(\bar{x})e^{ \beta(a) \langle v, \bar{x} \rangle}}&=  \sum_{x} x \frac{\pi(x)e^{ (\beta(a)\lambda) \langle \frac{v}{\lambda}, x \rangle}}{ \sum_{\bar{x}} \pi(\bar{x})e^{ (\beta(a)\lambda) \langle \frac{v}{\lambda}, \bar{x} \rangle}}.
\end{align*}
Setting $\lambda = \frac{\|v\|}{\sqrt{d+1}}$ results in $\tilde{v} = \frac{v}{\lambda}$ as being a norm $\sqrt{d+1}$ vector, and we solve for the equivalent signal level $\tilde{a}$ via the inverse map $\beta^{-1}(\beta(a)\lambda)$.

\end{proof}

\textbf{Proof of Lemma \ref{lem:Lambda_focus}}
\begin{proof}
In this proof we take advantage of each coordinate of $x$ itself being $\{-1,1\}$ valued such that $x_{i}^{2} = 1$ for all coordinates $i$. This is a property of the hypercube $\{-1,1\}^{d}$, while the hypersphere only has the property that $\|x\|^{2} = d$. By construction the conditional density of $U|X$ is $N(a \Lambda X, (1-a^{2})I)$. Therefore, the reverse conditional density via Bayes rule is a quadratic tilt to the base density
\begin{align*}
p(x|u,a,\Lambda)&\propto \pi(x)p(u|x,a,\Lambda)\\
&\propto \pi(x)e^{-\frac{1}{2(1-a^{2})}\|u - a\Lambda x\|^{2}}\\
&\propto \pi(x)e^{\frac{a}{1-a^{2}} \langle \Lambda u, x \rangle}e^{-\frac{1}{2(1-a^{2})}\|u\|^{2}-\frac{a^{2}}{2(1-a^{2})} \sum_{i=1}^{d+1}\Lambda_{i,i}^{2}x_{i}^{2}}.
\end{align*}
$X$ is the random variable while $u$ is the conditioning value, thus $\|u\|^{2}$ can be ignored as a normalizing constant. Since each $x_{i}^{2} = 1$ for any element of the hypercube, no matter the values of $\Lambda_{i,i}$ the $x$ dependent quadratic term is also constant and is removed in normalization. Thus the conditional expectation is a pure linear tilt to the base density.
\end{proof}

\textbf{Proof of Corollary \ref{Lambda_train_cor}}
\begin{proof}
In this proof we collect the previous denoiser symmetry results of Lemma \ref{renorm_lemma} and \ref{lem:Lambda_focus} the specify the form of the base denoiser under $\Lambda$ focusing as a $\sqrt{d+1}$ vector with an equivalent signal level $\tilde{a}$. Via Lemma \ref{lem:Lambda_focus}, the conditional expectation after $\Lambda$ focusing and noising is $G$ at $\Lambda U$ and $a$,
\begin{align*}
E_{\pi}[X|U = a\Lambda X+\sqrt{1-a^{2}}Z]= G( \Lambda u, a),
\end{align*}
we can then renormalize to $\tilde{u} = \sqrt{d+1}\frac{\Lambda u}{\|\Lambda u\|}$ and solve for $\tilde{a}$ using Lemma \ref{renorm_lemma}.
\end{proof}

\textbf{Proof of Corollary \ref{SL_cor}}
\begin{proof}
In this proof we collect the previous denoiser symmetry results of Lemma \ref{lem:SL_tilt} and \ref{renorm_lemma} to specify the form of the tilted denoiser under a stochastic localization path as a $\sqrt{d+1}$ vector with an equivalent signal level $\tilde{a}$. By Lemma \ref{lem:SL_tilt}, the denoiser of the stochastic localization path for a tilt defined by a vector $\tau v$ is a linear shift to the evaluation point of the mean-tilt operator at the signal level $\varphi \approx 0.618$,
\begin{align*}
E_{\pi_{v}}[X|X_{t} = tX+\sqrt{t}Z] = G(\tau v+X_{t}, \varphi),
\end{align*}
we can then renormalize to $\tilde{u} = \sqrt{d+1}\frac{\tau v + X_{t}}{\|\tau v+ X_{t}\|}$ and solve for $\tilde{a}$ using Lemma \ref{renorm_lemma}.

\end{proof}

\textbf{Proof of Lemma \ref{lem:reg_target}}
\begin{proof}
The loss function here is set up as an inverse problem, where for any two random variables with a joint distribution, the minimizer of the expected squared loss of one variable minus a function of the other is the conditional mean. The previous Lemmas have established that various conditional means can be expressed as evaluations of $G$ at norm $\sqrt{d+1}$ vectors, which shows this loss function with renormalization is an equivalent expression of an inverse problem.

Throughout the proof let $(a,\Lambda)\sim\mu$, $X\sim\pi$ and $Z\sim N(0,I)$ be independent random variables and
$U = a\Lambda X + \sqrt{1-a^{2}}Z$. For a measurable $g$ write
\begin{align*}
L(g) = E_{(a,\Lambda)\sim\mu,\,X\sim\pi,\,Z}\big\|X - g(U,a,\Lambda)\big\|^{2}.
\end{align*}
The proof proceeds in three steps: (i) for fixed $(a,\Lambda)$ the minimizer of
the squared loss is the conditional mean of $X$ given $U$; (ii) for random
$(a,\Lambda)\sim\mu$, the tower property reduces the problem to (i) because
$(a,\Lambda)$ is independent of $X$; (iii) the resulting minimizer is
$G(\Lambda u,a)$, which Lemma~\ref{renorm_lemma} rewrites as $G(H(\Lambda u,a))$,
a function of the renormalized input alone.

\medskip
\noindent\emph{Step 1: fixed $(a,\Lambda)$.}
Fix $a\in[0,1]$ and a diagonal $\Lambda$, and let $U_{a,\Lambda} = a\Lambda X + \sqrt{1-a^{2}}Z$
with $X\sim\pi$, $Z\sim N(0,I)$ independent. For any measurable $f$,
the tower property gives the orthogonal decomposition
\begin{align*}
E_{X,Z}\big\|X - f(U_{a,\Lambda})\big\|^{2}
&= E_{X,Z}\big\|X - E[X\mid U_{a,\Lambda}]\big\|^{2}
 + E_{X,Z}\big\|E[X\mid U_{a,\Lambda}] - f(U_{a,\Lambda})\big\|^{2},
\end{align*}
It is then a standard fact that the minimizing function is the conditional mean
\begin{align*}
g_{a,\Lambda}(u) := E_{X\sim\pi}\big[X \mid U_{a,\Lambda} = u\big]
\quad\text{satisfies}\quad
g_{a,\Lambda} = \operatorname*{argmin}_{f} E_{X,Z}\big\|X - f(U_{a,\Lambda})\big\|^{2}.
\end{align*}
By Lemma~\ref{lem:Lambda_focus}, the posterior of $X$ given $U_{a,\Lambda}=u$ is
the linear tilt $\pi(x)e^{\beta(a)\langle\Lambda u,x\rangle}$, hence $g_{a,\Lambda}(u) = G(\Lambda u, a)$.

\medskip
\noindent\emph{Step 2: random $(a,\Lambda)\sim\mu$.}
Condition on $(a,\Lambda)$ and apply the tower property with the
$(a,\Lambda)$ expectation outermost,
\begin{align*}
L(g) = E_{(a,\Lambda)\sim\mu}\Big[\, E\big[\|X - g(U,a,\Lambda)\|^{2} \,\big|\, a,\Lambda\big] \Big].
\end{align*}
Because $(a,\Lambda)$ is independent of $(X,Z)$, the conditional law of $(X,U)$
given $(a,\Lambda)$ is the law of $(X, U_{a,\Lambda})$ from Step~1; in density form,
\begin{align*}
p(x,u\mid a,\Lambda) = p(u\mid x,a,\Lambda)\,p(x\mid a,\Lambda) = p(u\mid x,a,\Lambda)\,\pi(x),
\end{align*}
i.e.\ $X$ is conditionally independent of $(a,\Lambda)$, and $(a,\Lambda)$ enters only
through the noising kernel $p(u\mid x,a,\Lambda)$. Therefore the inner conditional expectation is
\begin{align*}
E\big[\|X - g(U,a,\Lambda)\|^{2} \,\big|\, a,\Lambda\big]
= E_{X\sim\pi,\,Z}\big\|X - g(U_{a,\Lambda},a,\Lambda)\big\|^{2},
\end{align*}
which, for each fixed $(a,\Lambda)$, is exactly the objective of Step~1 with
$f = g(\cdot,a,\Lambda)$. Since $g$ is allowed to depend on $(a,\Lambda)$, the choice of
$g(\cdot,a,\Lambda)$ for one value of $(a,\Lambda)$ does not constrain its choice for another,
so the outer expectation is minimized by minimizing the integrand pointwise in $(a,\Lambda)$:
\begin{align*}
\min_{g} L(g)
= E_{(a,\Lambda)\sim\mu}\Big[\min_{f} E_{X,Z}\big\|X - f(U_{a,\Lambda})\big\|^{2}\Big],
\qquad
g^{\star}(u,a,\Lambda) = g_{a,\Lambda}(u) = G(\Lambda u,a).
\end{align*}

\medskip
\noindent\emph{Step 3: renormalization.}
By Lemma~\ref{renorm_lemma}, for any $v\in\mathbb{R}^{d+1}\setminus\{0\}$ and $a\in[0,1]$,
\begin{align*}
G(v,a) = G\Big(\sqrt{d+1}\,\tfrac{v}{\|v\|},\ \tilde{a}\Big) = G\big(H(v,a)\big),
\qquad \tilde{a} = \beta^{-1}\Big(\beta(a)\tfrac{\|v\|}{\sqrt{d+1}}\Big),
\end{align*}
so $g^{\star}(u,a,\Lambda) = G(H(\Lambda u,a))$. In particular the unconstrained minimizer
depends on $(U,a,\Lambda)$ only through $H(\Lambda U,a)$. The class
$\{g\circ H(\Lambda\,\cdot,\cdot) : g \text{ measurable on } \mathcal{S}\times[0,1]\}$
is a subset of the measurable functions of $(U,a,\Lambda)$, and the minimizer over the
larger class lies in it, so it is also the minimizer over the smaller class:
\begin{align*}
G = \operatorname*{argmin}_{g} E\big\|X - g\big(H(\Lambda U,a)\big)\big\|^{2},
\end{align*}
uniquely up to sets of measure zero under the law of $H(\Lambda U,a)$.
\end{proof}

\FloatBarrier 
\section{Additional Discussion}
\subsection{Visualization of Denoiser Symmetry}
\FloatBarrier 
\begin{figure}[t]
\begin{center}
\includegraphics[width = \textwidth]{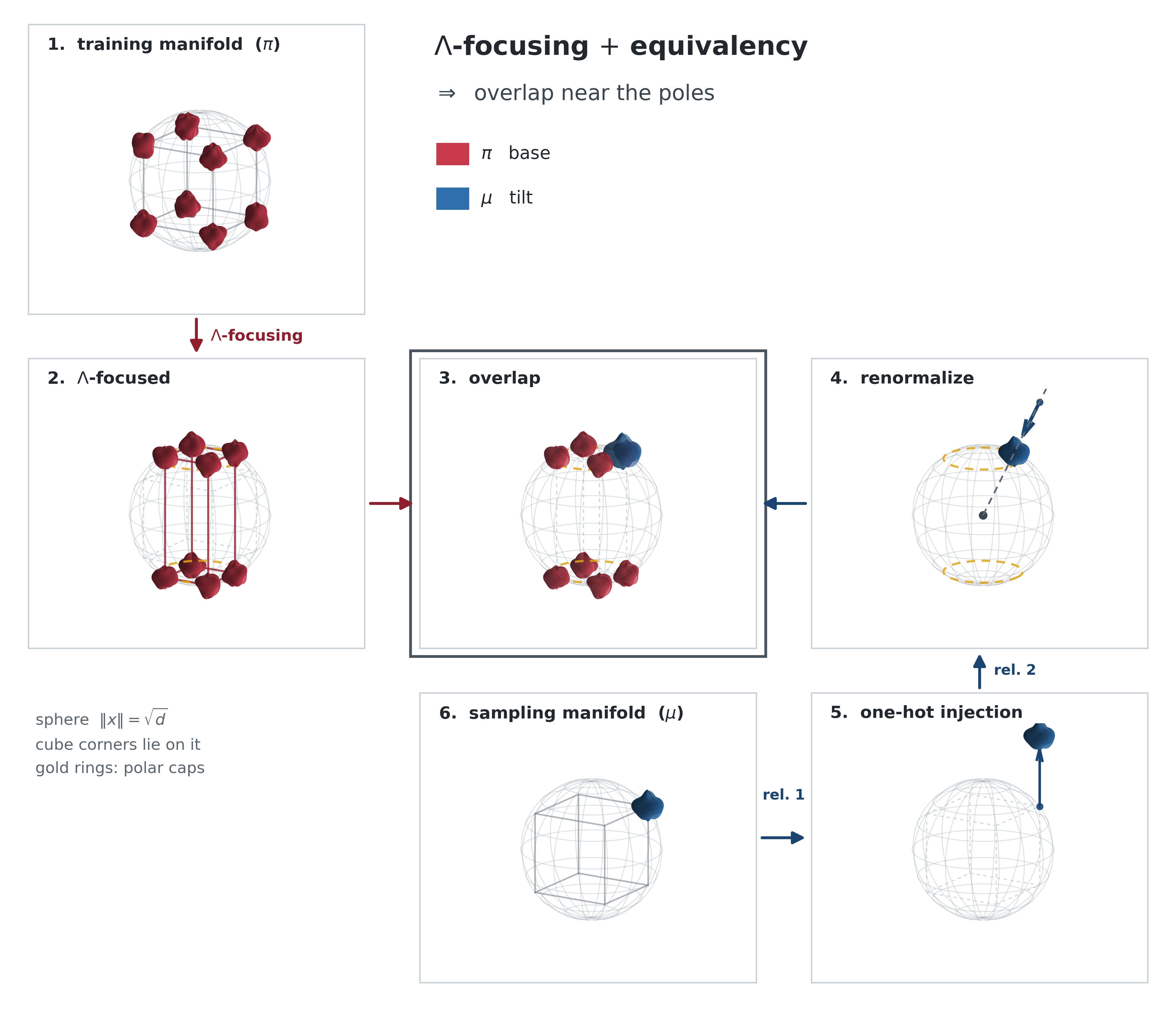}
\end{center}
\caption{Sketch of overlap between renormalized training set via $\Lambda$ focusing and points shifted by tilt along stochastic localization path.}\label{overlap_sketch}
\end{figure}
\FloatBarrier 

\subsection{Stochastic Localization}\label{app:SL}

Stochastic localization can be viewed as a different time scaling and spatial scaling than the usual score-based diffusion derived from the forwards flow Ornstein-Uhlenbeck process as outlined in \citet{montanari2023sampling}. We re-derive the equivalence here for an audience more familiar with the score-based construction. The score-based diffusion is variance preserving, initialized at a standard normal draw, and uses the score as its drift. The stochastic localization process is not variance preserving (it is growing in magnitude over time), is initialized at a Dirac mass at the origin, and uses the denoiser as its drift term. Both can be derived from the same reverse diffusion process theory.

\paragraph{Time reversal.}
Both constructions are reverse-time implementations of a forward diffusion producing a sequence of marginal densities that are equivalent to convolutions with Gaussian random variables of various mixture amounts.
By the time-reversal result of \citet{anderson1982reverse}, popularized in modern literature by
\citet{song2021score, lipman2023flow}, consider any target density $\pi$ and initialize $X_{0} \sim \pi$. Consider any forward process
\begin{align*}
dX_{t} = f(X_{t}, t)\,dt + g(t)\,dB_{t}, \qquad t \in [0,T], X_{0} \sim \pi,
\end{align*}
and let $p_{t}$ denote the marginal density of $X_{t}$ with score
$s_{t} = \nabla \log p_{t}$. The reverse process started from $Y_{0} \sim p_{T}$,
\begin{align}
dY_{t} = -\big[f(Y_{t}, T-t) - g(T-t)^{2}\, s_{T-t}(Y_{t})\big]dt + g(T-t)\,dB_{t},
\label{eq:reverse_sde}
\end{align}
satisfies $Y_{t} \sim p_{T-t}$, i.e.\ it traverses the forward marginals in
reverse order. The only quantity required to implement
\eqref{eq:reverse_sde} is the score, hence the name score-based diffusion.
For general $f$ and $g$ the marginal $p_{t}$, and hence $s_{t}$, has no specific closed form.

\paragraph{Linear forward processes.}
When the forward drift $f(X_{t}, t)$ is linear in $X_{t}$ the marginals are Gaussian
convolutions and the score is available through Tweedie's formula \citep{robbins1992empirical}. Let
$\alpha, \sigma : [0,T] \to \mathbb{R}_{+}$ be differentiable with
$\sigma/\alpha$ non decreasing, and define
\begin{align*}
\kappa(t) := \frac{d}{dt}\log\frac{\sigma(t)}{\alpha(t)}
 = \frac{\dot{\sigma}(t)}{\sigma(t)} - \frac{\dot{\alpha}(t)}{\alpha(t)} \;\ge\; 0 .
\end{align*}
Consider the linear forward process
\begin{align}
dX_{t} = \frac{\dot{\alpha}(t)}{\alpha(t)} X_{t}\,dt
 + \sigma(t)\sqrt{2\kappa(t)}\,dB_{t},
\qquad X_{0} = \alpha(0)X + \sigma(0)Z,
\label{gaussian_flow}
\end{align}
with $X \sim \pi$ and $Z \sim N(0,I)$ independent. A direct computation shows that its marginals are equivalent to convolution with a normal random variable at a specific scaling,
\begin{align*}
X_{t} \overset{d}{=} \alpha(t) X + \sigma(t) Z,
\qquad\text{i.e.}\qquad
p_{t}(x_{t} \mid X = x_{0}) = N\big(x_{t};\, \alpha(t) x_{0},\, \sigma(t)^{2} I\big).
\end{align*}
Writing $m_{t}(x) := E[X \mid X_{t} = x]$ for the denoiser, Tweedie's formula
\citep{robbins1992empirical} gives the score as a posterior mean under a quadratic tilt of $\pi$ and is related to the denoiser,
\begin{align*}
s_{t}(x_{t})
 = \int -\frac{x_{t} - \alpha(t) x_{0}}{\sigma(t)^{2}}\;
   \frac{\pi(x_{0})\, e^{-\frac{1}{2\sigma(t)^{2}}\|x_{t} - \alpha(t) x_{0}\|^{2}}}
        {\int \pi(x_{0}')\, e^{-\frac{1}{2\sigma(t)^{2}}\|x_{t} - \alpha(t) x_{0}'\|^{2}} dx_{0}'}\, dx_{0}
 = \frac{\alpha(t)\, m_{t}(x_{t}) - x_{t}}{\sigma(t)^{2}} .
\end{align*}
Substituting into \eqref{eq:reverse_sde}, and abbreviating
$\alpha = \alpha(T-t)$, $\sigma = \sigma(T-t)$, $\kappa = \kappa(T-t)$, the
reverse process is
\begin{align}
dY_{t}
 &= \Big[-\frac{\dot{\alpha}}{\alpha} Y_{t} + 2\sigma^{2}\kappa\, s_{T-t}(Y_{t})\Big]dt
    + \sigma\sqrt{2\kappa}\,dB_{t}.
\label{eq:reverse_score}
\end{align}
Equivalently stated in terms of the denoiser instead of the score,
\begin{align}
dY_{t}
 &=\Big[-\Big(\frac{\dot{\alpha}}{\alpha} + 2\kappa\Big) Y_{t}
    + 2\kappa\alpha\, m_{T-t}(Y_{t})\Big]dt
    + \sigma\sqrt{2\kappa}\,dB_{t}.
\label{eq:reverse_denoiser}
\end{align}

\paragraph{Signal level and growth.}
A linear combination $\alpha(t)X + \sigma(t)Z$ is called variance preserving (VP) when $\alpha(t)^{2}+\sigma(t)^{2} = 1$ for all $t$. Any general combination can be factored into a
scalar growth term and a variance-preserving (VP) combination. Define the signal-to-noise fraction $\lambda$ and growth term $\tau$
\begin{align*}
\lambda(t)^{2} := \frac{\alpha(t)^{2}}{\alpha(t)^{2} + \sigma(t)^{2}},
\qquad
\tau(t) := \sqrt{\alpha(t)^{2} + \sigma(t)^{2}},
\end{align*}
so that
\begin{align*}
\alpha(t) X + \sigma(t) Z
 = \tau(t)\Big(\lambda(t) X + \sqrt{1 - \lambda(t)^{2}}\, Z\Big).
\end{align*}
The VP case is $\tau \equiv 1$. The VP scaling uniquely identifies the probability distribution of the random variable induced by a convolution with a normal, any scaling does not change the ``shape'' of the density but simply the magnitude of the random variable.

Score-based diffusion and stochastic localization are defined by two different choices of $\alpha$ and $\sigma$.

\subsubsection{Score-Based Diffusion from the Ornstein--Uhlenbeck Process}
Taking $\alpha(t) = e^{-t}$, $\sigma(t) = \sqrt{1 - e^{-2t}}$ gives
$\dot{\alpha}/\alpha = -1$ and $2\sigma^{2}\kappa = 2$, so
\eqref{gaussian_flow} is the Ornstein--Uhlenbeck (OU) process with drift linear
in $X_{t}$ and constant in time, and \eqref{eq:reverse_score} reads
\begin{align*}
dX_{t} &= -X_{t}\,dt + \sqrt{2}\,dB_{t}, \qquad X_{0} \sim \pi,\\
dY_{t} &= \big(Y_{t} + 2 s_{T-t}(Y_{t})\big)dt + \sqrt{2}\,dB_{t}.
\end{align*}
This is a VP combination, $\tau(t) \equiv 1$, with signal level
$\lambda(t)^{2} = e^{-2t}$. The reverse drift is implemented by the score
alone.

\subsubsection{The Stochastic Localization Time Scaling}
Fix $T > 0$ and take $\alpha(t) = T-t$, $\sigma(t) = \sqrt{T-t}$ on
$t \in [0,T]$. Then
\begin{align*}
\frac{\dot{\alpha}(t)}{\alpha(t)} = -\frac{1}{T-t},
\qquad
\frac{\dot{\sigma}(t)}{\sigma(t)} = -\frac{1}{2(T-t)},
\qquad
\kappa(t) = \frac{1}{2(T-t)},
\qquad
\sigma(t)^{2}\, 2\kappa(t) = 1,
\end{align*}
so the forward process \eqref{gaussian_flow} is
\begin{align*}
dX_{t} = -\frac{1}{T-t} X_{t}\,dt + dB_{t},
\qquad X_{0} = TX + \sqrt{T}Z,
\end{align*}
which is driven to $X_{T} = 0$. In the reverse process
\eqref{eq:reverse_denoiser} the coefficient of $Y_{t}$ vanishes,
$\dot{\alpha}/\alpha + 2\kappa = 0$, and $2\kappa\alpha = 1$, so the
reverse drift is exactly the denoiser:
\begin{align*}
dY_{t} = E\big[X \,\big|\, tX + \sqrt{t}Z = Y_{t}\big]\,dt + dB_{t},
\qquad Y_{0} = 0 .
\end{align*}
This is the stochastic localization process of
\citet{montanari2023sampling}, with $Y_{t} \overset{d}{=} tX + \sqrt{t}Z$
and $Y_{t}/t \to X$ as $t \to \infty$.

\paragraph{Relation to the OU diffusion.}
Both processes are naturally indexed by their reverse-time clocks, along which the signal level increases. In stochastic localization time $s$, the
signal level and growth are
\begin{align*}
\lambda_{\mathrm{SL}}(s)^{2} = \frac{s}{1+s},
\qquad
\tau_{\mathrm{SL}}(s) = \sqrt{s(1+s)} .
\end{align*}
For the OU diffusion on a horizon $[0,T]$, write $Y^{\mathrm{OU}}_{r} \sim X_{T-r}$
for the reverse process at reverse time $r \in [0,T]$, so that
\begin{align*}
\lambda_{\mathrm{OU}}(r)^{2} = e^{-2(T-r)},
\qquad
\tau_{\mathrm{OU}} = 1 .
\end{align*}
Matching signal levels, $e^{-2(T-r)} = s/(1+s)$, gives the time change
\begin{align*}
r(s) = T - \tfrac{1}{2}\log\big(1 + 1/s\big),
\end{align*}
which is increasing in $s$ and maps $s \in \big[(e^{2T}-1)^{-1}, \infty\big)$
onto $[0,T)$; as $T \to \infty$ every $s > 0$ is covered. Therefore
\begin{align*}
Y^{\mathrm{SL}}_{s} \;\overset{d}{=}\; \sqrt{s(1+s)}\; Y^{\mathrm{OU}}_{r(s)} .
\end{align*}
Stochastic localization is thus the OU reverse diffusion run under the time
change $r(s)$ and rescaled in space by $\tau_{\mathrm{SL}}(s)$: the marginal
density produced at reverse time $s$ of stochastic localization is the
marginal produced at reverse time $r(s)$ of the score-based reverse
diffusion, scaled by $\sqrt{s(1+s)}$. An example of the equivalent marginals produced for a simple mixture of normals is shown in Figure \ref{fig:sl_ou} to visualize the equivalence.

\begin{figure}[H]
\begin{center}
\includegraphics[width = \textwidth]{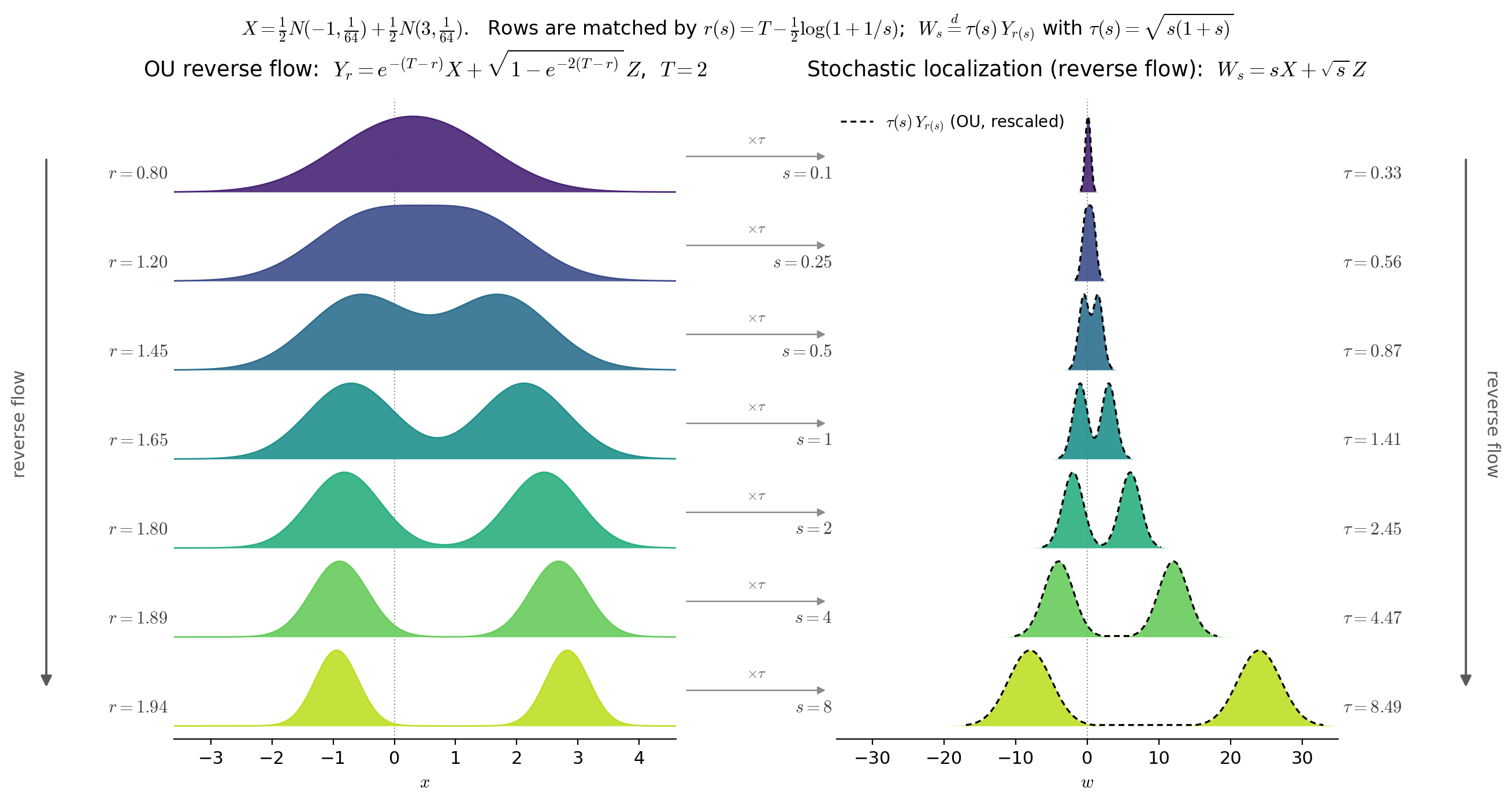}
\end{center}\caption{Visualization of score-based diffusion and stochastic localization reverse flow path for an asymmetric mixture of normals target density in one dimension. Marginal densities produced at each stage of the processes are equivalent under a time shift and rescaling.}
\label{fig:sl_ou}
\end{figure}

\subsection{Log-Odds Calculation}
\label{appendix:logodds}

All tilts used in the experiments are linear tilts of the extended base density by a vector supported on the reward coordinates. A linear tilt reweights $\pi$ between reward classes without altering the conditional distribution within a class, and the resulting class probabilities have a closed form in $\tau$. This provides an exact reference value at every tilt specifying the probability of producing a reward-achieving solution at that tilt strength.

\paragraph{Single reward bit.}
Let $r : \{-1,1\}^{d} \to \{-1,1\}$, $A = \{x : r(x) = 1\}$, and $p = \pi(A)$. The reward tilt is the linear tilt by $\tau e_{*}$ in the extended state space, and
\begin{align*}
q_{\tau}(x) = \frac{\pi(x)\,e^{\tau r(x)}}{E_{\pi}[e^{\tau r(X)}]}
 = \pi(x \mid A)\,\frac{p\,e^{\tau}}{p\,e^{\tau} + (1-p)\,e^{-\tau}}
 + \pi(x \mid A^{c})\,\frac{(1-p)\,e^{-\tau}}{p\,e^{\tau} + (1-p)\,e^{-\tau}} .
\end{align*}
The conditional laws $\pi(\cdot \mid A)$ and $\pi(\cdot \mid A^{c})$ are unchanged. With $\operatorname{logit} p = \log \frac{p}{1-p}$ and $p_{\tau} = q_{\tau}(A)$,
\begin{align}
\operatorname{logit} p_{\tau} = \operatorname{logit} p + 2\tau .
\label{eq:logodds_binary}
\end{align}
The slope $2$ is the difference $r(x) - r(x') = 2$ between the two reward values. In the Ising and Sudoku experiments $p$ is the fraction of reward-achieving examples in the training set.

\paragraph{One-hot class block.}
For $K$ classes the reward is a block of $K$ coordinates and class $c$ is encoded by $r_{c} \in \{-1,1\}^{K}$, $+1$ in position $c$ and $-1$ elsewhere. The tilt $h = \tau r_{c}$ assigns class $k$ the weight $\exp(\tau \langle r_{c}, r_{k} \rangle)$, with $\langle r_{c}, r_{c} \rangle = K$ and $\langle r_{c}, r_{k} \rangle = K - 4$ for $k \neq c$, since the two patterns differ in exactly two coordinates. Canceling the common factor $e^{(K-4)\tau}$,
\begin{align*}
\pi_{\tau}(c) = \frac{\pi(c)\,e^{4\tau}}{\pi(c)\,e^{4\tau} + 1 - \pi(c)},
\qquad
\pi_{\tau}(k) = \frac{\pi(k)}{\pi(c)\,e^{4\tau} + 1 - \pi(c)}, \quad k \neq c ,
\end{align*}
so that
\begin{align}
\operatorname{logit} \pi_{\tau}(c) = \operatorname{logit} \pi(c) + 4\tau ,
\label{eq:logodds_onehot}
\end{align}
and $\pi_{\tau}(k)/\pi_{\tau}(k') = \pi(k)/\pi(k')$ for $k, k' \neq c$. The slope $4$ is the gap $\langle r_{c}, r_{c} \rangle - \langle r_{c}, r_{k} \rangle$. Tilting along the single axis $e_{c}$ yields slope $2$ with the same proportionality among non-target classes. In general, a tilt $h$ supported on the reward block shifts the log-odds of class $c$ against class $k$ by $\langle h, r_{c} - r_{k} \rangle$; \eqref{eq:logodds_binary} and \eqref{eq:logodds_onehot} are the two cases used.

\paragraph{Tilt grid.}
Inverting \eqref{eq:logodds_binary}, the tilt strength at which reward is achieved with probability $p^{\star}$ is
\begin{align*}
\tau(p^{\star}) = \tfrac{1}{2}\big(\operatorname{logit} p^{\star} - \operatorname{logit} p\big),
\end{align*}
with $\tfrac{1}{4}$ in place of $\tfrac{1}{2}$ under \eqref{eq:logodds_onehot}. Experiments fix $\tau_{98} = \tau(0.98)$ ($\tau_{95}$ for MNIST, where the classifier ceiling limits resolution) and evaluate five tilt strengths below $\tau_{98}$ and five above, up to the maximum tilt strength $\sqrt{d+1}$.

\paragraph{Estimation.}
At each $\tau$, $N$ samples are drawn ($N = 2000$ unless stated), decoded, and graded by the reward function, or by the scoring classifier for MNIST. The observed fraction $\hat{p}_{\tau}$ is reported on the logit scale against \eqref{eq:logodds_binary} or \eqref{eq:logodds_onehot}, with $95\%$ Clopper--Pearson binomial intervals mapped through the logit. Near $p_{\tau} = 1$ the logit is steep: intervals narrow in probability are wide in log-odds and often include $\hat{p} = 1$ as an upper endpoint, and the largest log-odds resolvable at sample size $N$ is of order $\operatorname{logit}(1 - 1/N)$, approximately $7.6$ for $N = 2000$. Points whose exact value exceeds this are uninformative regarding under-sampling. Conversely, a discrepancy in log-odds at high $\tau$ may correspond to a difference of less than one percentage point in reward rate. For MNIST the exact rate is passed through the classifier's per-class sensitivity and false-positive rate prior to comparison, as described in Appendix \ref{appendix:experiments}.

\paragraph{Clopper--Pearson Confidence Intervals.}
Let $K \sim \operatorname{Binomial}(N, p_{\tau})$ be the number of rewarded samples at a given $\tau$, with observed value $k$ and $\hat{p}_{\tau} = k/N$. The Clopper--Pearson interval is obtained by inverting two one-sided exact binomial tests. Its endpoints are the values of $p$ at which the observed count would just be rejected at level $\alpha/2$ in each tail:
\begin{equation*}
    \sum_{j=k}^{N} \binom{N}{j} p_L^{\,j} (1 - p_L)^{N-j} = \frac{\alpha}{2},
    \qquad
    \sum_{j=0}^{k} \binom{N}{j} p_U^{\,j} (1 - p_U)^{N-j} = \frac{\alpha}{2}.
\end{equation*}
By the identity linking binomial tails to the regularized incomplete beta function, these endpoints have the closed form
\begin{equation*}
    p_L = B^{-1}\!\left(\tfrac{\alpha}{2};\, k,\, N - k + 1\right),
    \qquad
    p_U = B^{-1}\!\left(1 - \tfrac{\alpha}{2};\, k + 1,\, N - k\right),
\end{equation*}
where $B^{-1}(q; a, b)$ is the $q$-quantile of a $\operatorname{Beta}(a, b)$ distribution, with the conventions $p_L = 0$ when $k = 0$ and $p_U = 1$ when $k = N$. Because the interval comes from exact tail probabilities and not from a normal approximation, its coverage is at least $1 - \alpha$ for every $p_{\tau}$ and every $N$. This property is useful here, since many of our estimates lie close to the boundary $p_{\tau} = 1$, where Wald-type intervals are badly anti-conservative or collapse to zero width. Coverage is in general strictly greater than $1 - \alpha$, so the intervals are conservative. The logit is strictly increasing, so $[\operatorname{logit}(p_L), \operatorname{logit}(p_U)]$ is a valid $1 - \alpha$ interval for the log-odds. When every sample is rewarded ($k = N$), the interval becomes $[(\alpha/2)^{1/N}, 1]$. Its upper endpoint in log-odds is then $+\infty$, and its lower endpoint is $\operatorname{logit}\big((\alpha/2)^{1/N}\big) \approx 6.3$ for $N = 2000$ and $\alpha = 0.05$.

\paragraph{Model-reported reward.}
Since the reward coordinate is denoised jointly with the token coordinates, the sampler also produces an internal prediction of reward: the sign of the terminal $d+1$ coordinate, or the argmax of the class block. For a correctly sampling model this prediction and the externally graded reward agree in expectation. Comparing the two curves against the exact line separates two failure modes: a gap between the internal prediction and the exact line indicates the sampler does not produce the tilted marginal on the reward coordinate; a gap between the internal prediction and the graded reward indicates the token coordinates are inconsistent with the reward coordinate produced. The 9x9 Sudoku two-pass and carry results in Appendix \ref{appendix:experiments} are distinguished by this comparison.

\subsection{Self-Conditioning}\label{app:self-cond}

Self-conditioning as originally introduced in \citet{chen2023analog} is a method to improve the sample quality of a denoising diffusion model. It was originally derived as a mostly ad hoc engineering decision, rather than a method arising from sound theory. Its interaction with standard theory for error control for diffusion models has not yet been fully understood \citep{song2021maximum, lee2022convergence, chen2023sampling, chen2023improved, benton2024nearly, conforti2025kl, wang2024evaluating}. In this paper we consider two forms of self-conditioning: one method we call mode ``two-pass'' is a Markovian form of self-conditioning, it uses only information available at a given time point of a discrete diffusion to estimate the denoiser as in Algorithm 6 in \citet{chen2023analog}. This method can be considered an architecture choice for the denoiser estimator network and is compatible with existing diffusion model error control theory which views the reverse process as a Euler-Maruyama discretization of the continuous SDE with pointwise denoiser estimation.

The second method of self-conditioning we call mode ``carry'' is when the denoiser reuses its previous estimate at time $t-1$ to make its denoiser estimate at time $t$. This is the more common method implemented in modern denoising diffusion models, however it breaks the memory-one Markov structure under which diffusion model error control theory is currently understood.

The denoiser is a conditional expectation on the origin point of a noisy observation given partial information provided by the noised observation,
\begin{align*}
F(u,a) = E_{\pi}[X|U=aX+\sqrt{1-a^{2}}Z].
\end{align*}
If we had some additional side information $W$ from some other random variable, this could produce a more precise conditional estimate of the origin point $X$
\begin{align*}
F(u,w,a)&=E_{\pi}[X|U=aX+\sqrt{1-a^{2}}Z,W].
\end{align*}
A self-conditioned network takes the form $g_\theta(u,w,a)$, where $(u,a)$ are the noisy input and signal level and $w \in [-1,1]^{d+1}$ is a prior estimate of $x$. The input $w=0$ is equidistant from all points of $\{-1,1\}^{d+1}$ and encodes the absence of side information, so that $g_\theta(u,0,a)$ is an estimate of $G(u,a)$. In Algorithms \ref{alg:batch-sampler} and \ref{alg:sl-sampling}, $(u,a)$ is the renormalized pair $(v^*,s^*)$.

In the training algorithm, a self-conditioned network is trained by some portion of the time being fed $w = 0$ as its second argument, and some portion of the time begin fed its own no side information estimate as its second argument $w = g_{\theta}(u,0,a)$. This modifies the single step of batch SGD training algorithm in lines 13 and 14 in Algorithm \ref{alg:batch-sampler-sc}.

\begin{algorithm}[h]
\caption{Batch SGD step for the self-conditioned tilt--mean network $g_\theta(u,w,a)$}
\label{alg:batch-sampler-sc}
\begin{algorithmic}[1]
\Require Data $\{X_i\} \subset \{-1,1\}^{d+1}$ (reward in coordinate $d+1$); batch size $N$; step size $\gamma$; self-conditioning probability $p_{\mathrm{sc}}$
\State $\mathcal{L} \gets 0$
\For{$n = 1, \dots, N$}
    \State Draw $x \sim \{X_i\}$,\; $Z \sim \mathcal N(0, I_{d+1})$,\; $\beta \sim \mathrm{LogUnif}[\beta_{\min},\beta_{\max}]$,\; $a \gets a(\beta)$
    \State W.p.\ $p_{0}$: $q \gets \mathbf 1_d$;\; else draw $k \sim \mathrm{Unif}\{2,\dots,d\}$ and set $q \gets \sqrt{d/k}\,1_{S}$ for a random $k$-subset $S$
    \State W.p.\ $p_{\lambda}$: $\lambda \gets 1$;\; else $\lambda \sim \mathrm{LogUnif}[1,\sqrt{d+1}]$
    \State $\Lambda_{j,j} \gets \sqrt{1 + \tfrac1d - \tfrac{\lambda^2}{d}}\, q_j,\; j \le d$;\quad $\Lambda_{d+1,d+1} \gets \lambda$
    \State $\eta \gets a \Lambda^{2} x + \sqrt{1-a^2}\, \Lambda Z$ \Comment{tilt vector}
    \If{$q = \mathbf 1_d$ \textbf{and} $\lambda = 1$}
        \State $(v^*, s^*) \gets (\eta,\ a)$ \Comment{basic VP noising}
    \Else
        \State $C \gets \beta(a)\tfrac{\|\eta\|}{\sqrt{d+1}}$,\quad $s^* \gets \tfrac{1}{2}\Big(\sqrt{\tfrac{1}{C^{2}}+4}-\tfrac{1}{C}\Big)$,\quad $v^* \gets \sqrt{d+1}\;\eta/\|\eta\|$
    \EndIf
    \State W.p.\ $1-p_{\mathrm{sc}}$: $w \gets 0$ \Comment{no side information}
    \Statex \hspace{\algorithmicindent} else $w \gets \mathrm{sg}\big[g_\theta(v^*, 0, s^*)\big]$ \Comment{own first-pass estimate, no gradient}
    \State $\mathcal{L} \gets \mathcal{L} + \|x - g_\theta(v^*, w, s^*)\|^2$
\EndFor
\State $\theta \gets \theta - \gamma \nabla_\theta \mathcal{L}$
\end{algorithmic}
\end{algorithm}

Both self-conditioning modes ``two-pass'' and ``carry'' are trained in the same way, and differ in their implementation as a sampling algorithm.

Two-pass is the simpler method, as in sampling it reuses its own estimate with no side information as a second input to a second pass evaluation. This can be viewed as an architecture choice for a larger network that is ultimately still a function of a single pair $(u,a)$ alone,
\begin{align*}
\tilde g_\theta(u,a) = g_\theta\big(u,\, g_\theta(u,0,a),\, a\big),
\end{align*}
Using $\tilde{g}$ in place of $g$ in line 8 of Algorithm \ref{alg:sl-sampling} is the sampling algorithm implemented by two-pass.

The mode carry reuses the previous iteration's denoiser estimate as its second argument in self-conditioning. Initialize $\hat{G}_{-1} = 0$, and let $\hat{G}_{k}$ be the denoiser estimate at iteration $k$. The denoiser estimates are defined recursively as
\begin{align*}
\hat G_k &= g_\theta\big(v^*_k,\, \hat G_{k-1},\, s^*_k\big),\\
X_{t_{k+1}} &= X_{t_k} + \delta\, \hat G_k + \sqrt{\delta}\,\xi_k,
\end{align*}
This modifies the stochastic localization discrete sampling algorithm as shown in Algorithm \ref{alg:sl-sampling-carry}.

\subsubsection*{Implementation}
\textbf{Ising Model and 4x4 Sudoku}
Target distributions for the Ising Model and 4x4 Sudoku model have state spaces of size $2^{13}$ and $2^{33}$, small relative to the other target distributions. Self-conditioning was not implemented for these problems and the denoiser was trained using Algorithm \ref{alg:batch-sampler} and stochastic localization implemented using Algorithm \ref{alg:sl-sampling}.

\textbf{6x6 Sudoku}
For the 6x6 Sudoku self-conditioning mode ``two-pass'' was implemented and the sampler tracked performance well across log-odds levels considering the confidence intervals in Figure \ref{fig:6x6_logodds}.

\textbf{9x9 Sudoku}
The 9x9 Sudoku problem, both modes of self-conditioning were implemented and saw differing results.

Mode two-pass saturated its probability of producing proper puzzles at 60\% and did not increase at higher tilt strengths. The model also saw significant differences between the model's own prediction of if it was producing a reward in its $d+1$ coordinate and actual reward given to the token string produced by the reward function, seen in Figure \ref{fig:9x9_logodds_twopass}.

Mode carry tracked the log-odds better and saturated at 90\% reward-achieving solution, showing less mismatch between its prediction of reward and the actual reward of the token string produced, see Figure \ref{fig:9x9_logodds_carry}.

\textbf{MNIST Digits}
For the MNIST digit image generation problem, both modes of self-conditioning were implemented. In this problem mode two-pass follows the log-odds curve better than the carry mode. Mode carry over-samples the reward-achieving solutions at medium strength tilts as seen in Figure \ref{fig:mnist-logodds-carry-pooled}. As a conjecture of what may be the cause, carry self-conditioning may cause the model to become too confident in its early predictions, continuing to refine upon estimates made early in the process while two-pass is a more faithful implementation of a proper Euler-Maruyama discretization of the continuous-time SDE which defines stochastic localization.

\begin{algorithm}[h]
\caption{Stochastic-localization sampling from the tilted density $\mu \propto \pi\, e^{\langle \nu,\cdot\rangle}$ with carry-mode self-conditioning}
\label{alg:sl-sampling-carry}
\begin{algorithmic}[1]
\Require Trained self-conditioned network $g_\theta(u,w,a)$; tilt vector $\nu = \tau_{\rm infill}\bigl(e_{A^+} - e_{A^-}\bigr) + \tau_{\rm reward}\, e_*$; time grid $0 = t_0 < t_1 < \dots < t_K = T$
\State $X_{t_0} \gets 0$,\quad $\hat{G}_{-1} \gets 0$
\For{$k = 0, \dots, K-1$}
    \State $\delta \gets t_{k+1} - t_k$
    \State $\eta \gets \nu + X_{t_k}$
    \State $C\gets \tfrac{\|\eta\|}{\sqrt{d+1}}$,\quad $s^* \gets \tfrac{1}{2}\Big(\sqrt{\tfrac{1}{C^{2}}+4}-\tfrac{1}{C}\Big)$,\quad $v^* \gets \sqrt{d+1}\;\eta/\|\eta\|$
    \State $\hat{G}_{k} \gets g_\theta\big(v^*,\, \hat{G}_{k-1},\, s^*\big)$ \Comment{previous estimate as side information}
    \State Draw $\xi \sim \mathcal N(0, I_{d+1})$
    \State $X_{t_{k+1}} \gets X_{t_k} + \delta\, \hat{G}_{k} + \sqrt{\delta}\,\xi$
\EndFor
\State $\eta \gets \nu + X_{T}$, \quad compute $(v^*, s^*)$ from $\eta$ as above
\State \Return $\hat x \gets \mathrm{sign}\Big(g_\theta\big(v^*,\, g_\theta(v^*, 0, s^*),\, s^*\big)\Big)$ \Comment{two-pass final decode}
\end{algorithmic}
\end{algorithm}

\subsection{Discrete Step Size Scheduling}\label{appendix:schedule}

The stochastic localization discretization in Algorithm \ref{alg:sl-sampling} requires the sequence of discrete time steps $\{t_{k}\}_{k=0}^{K}$ to be specified as an input. Here we describe for an input time frame $[0,T]$, discretization level $K$, target tilt vector $\nu$, and dimension $d$ how we construct the $t_{k}$ schedule.

The learned denoiser $g_\theta(v^*, s^*)$ depends on the sampling time $t$ only through the equivalent signal level $s^*$ which is determined by the norm of the $\eta = X_{t}+\nu$ vector with the applied shift. Thus there is a non-linear relationship between how the $t_{k}$ scaling interacts with the $s^{*}$ values the denoiser will be evaluated at.

Recall that at time $t$ Algorithm \ref{alg:sl-sampling} queries the network at $\eta = \nu + X_t$ with
\begin{align*}
C = \frac{\|\eta\|}{\sqrt{d+1}}, \qquad s^* = \beta^{-1}(C) = \frac{1}{2}\Big(\sqrt{\tfrac{1}{C^{2}} + 4} - \tfrac{1}{C}\Big), \qquad \beta(s) = \frac{s}{1-s^{2}}.
\end{align*}
Since $\|\eta\|$ is random, we replace it by a deterministic proxy. Under the theoretical proper sampling marginal, $X_t = tX + \sqrt{t}Z$ with $\|X\|^2 = d+1$, we have $E\|X_t\|^2 = (d+1)(t^2 + t)$. Considering $E\|X_{t}+\nu\|^{2}$ and expanding the norm, the cross term $\sqrt{t}\langle \nu, Z\rangle$ has mean zero, and we evaluate the remaining cross term at its aligned value $\langle \nu, X \rangle = \|\nu\|$, which corresponds to the tilt succeeding. This gives
\begin{align*}
\bar{C}(t) = \sqrt{\frac{\|\nu\|^{2} + 2t\|\nu\| + (d+1)(t^{2}+t)}{d+1}}, \qquad \bar{s}(t) = \beta^{-1}\big(\bar{C}(t)\big),
\end{align*}
this is an approximation but gives a deterministic function of $t$ for the anticipated ratio of the norm of $\eta$ to $\sqrt{d+1}$ for the purposes of designing a time schedule. $\bar{s}$ is strictly increasing in $t$. Setting $s_k = \bar{s}(0) + \tfrac{k}{K}\big(\bar{s}(T) - \bar{s}(0)\big)$ for $k = 0, \dots, K$, the grid is $t_k = \bar{s}^{-1}(s_k)$. The inverse is available in closed form, since $\bar{C}(t)^2$ is quadratic in $t$: with $C_k = \beta(s_k)$,
\begin{align*}
t_k = \frac{-(d+1+2\|\nu\|) + \sqrt{(d+1+2\|\nu\|)^{2} - 4(d+1)\big(\|\nu\|^{2} - (d+1)C_k^{2}\big)}}{2(d+1)},
\end{align*}
with the endpoints fixed at $t_0 = 0$ and $t_K = T$. The grid is deterministic, is computed once before sampling, and depends on the tilt only through $\|\nu\|$; for $\nu = 0$ it starts from $\bar{s}(0) = 0$. The resulting steps are finest near $t = 0$ and grow monotonically towards $T$. All experiments use this schedule with $T = 20$ and $K = 1000$.

\begin{algorithm}[h]
\caption{Time grid equally spaced in the queried signal level $s^*$}
\label{alg:sstar-grid}
\begin{algorithmic}[1]
\Require Horizon $T$; number of steps $K$; dimension $d$; tilt norm $n = \|\nu\|$ ($n = 0$ if untilted)
\State $D \gets d+1$
\State Define $\bar{C}(t) = \sqrt{\big(n^{2} + 2tn + D(t^{2}+t)\big)/D}$ \Comment{proxy for $\|\nu + X_t\|/\sqrt{d+1}$}
\State Define $\beta^{-1}(C) = \tfrac{1}{2}\Big(\sqrt{\tfrac{1}{C^{2}}+4} - \tfrac{1}{C}\Big)$ for $C > 0$, and $\beta^{-1}(0) = 0$
\State $s_{\mathrm{lo}} \gets \beta^{-1}\big(\bar{C}(0)\big)$,\quad $s_{\mathrm{hi}} \gets \beta^{-1}\big(\bar{C}(T)\big)$
\For{$k = 0, \dots, K$}
    \State $s_k \gets s_{\mathrm{lo}} + \tfrac{k}{K}\,(s_{\mathrm{hi}} - s_{\mathrm{lo}})$ \Comment{uniform in $s^*$}
    \State $C_k \gets s_k / (1 - s_k^{2})$ \Comment{$C_k = \beta(s_k)$}
    \State $t_k \gets \dfrac{-(D + 2n) + \sqrt{(D+2n)^{2} - 4D\,(n^{2} - D\,C_k^{2})}}{2D}$ \Comment{solves $\bar{C}(t_k) = C_k$}
\EndFor
\State $t_0 \gets 0$,\quad $t_K \gets T$ \Comment{fix endpoints against rounding}
\State \Return $(t_0, t_1, \dots, t_K)$
\end{algorithmic}
\end{algorithm}

\FloatBarrier 
\section{ADDITIONAL EXPERIMENTS INFORMATION} \label{appendix:experiments}

Code for all experiments, including data generation, training, and evaluation is available at the the GitHub link:\\ \url{https://github.com/CMcDonald-1/internal_guidance_diffusion_models}

\noindent The datasets and trained checkpoints can be downloaded from the link:\\ \url{https://osf.io/kewg9/}

\FloatBarrier 
\subsection{12-Dimensional Ising Model}

\textbf{Target Density:}

The target density is on $\{-1,1\}^{d}$ with $d=12$, with pmf proportional to
\begin{align*}
\pi(x) \propto e^{\frac{1}{2}x^{T}Jx+ \langle s, x \rangle}.
\end{align*}

$J$ is a Wigner (GOE-like) coupling matrix, generated by drawing a full
$d\times d$ matrix of i.i.d.\ standard normals scaled by $J_{0}/\sqrt{d}$ with
$J_{0}=1.5$, retaining the strict upper triangle, and mirroring it onto the lower
triangle so that $J=J^{T}$. The diagonal is set to zero. The $s$
vector has i.i.d.\ Gaussian entries with standard deviation $\sigma = 0.3$.

The reward is the non-linear indicator function
\begin{align*}
r(x)&= \begin{cases}
1& \text{exactly 10 of the 12 coordinates positive}\\
-1& \text{else}
\end{cases}
\end{align*}
equivalently $r(x)=+1$ iff $\langle \mathbf{1},x\rangle = 8$. We append the
reward as a 13th coordinate in the extended state space $\{-1,1\}^{13}$, which
turns the non-linear reward tilt $q_{\tau}\propto \pi\, e^{\tau r(x)}$ into the
linear tilt $q_{\tau}\propto \pi(x,x^{*})e^{\tau x^{*}}$.

\textbf{Training Set:}

The training set is 100,000 samples from the base distribution with reward information. $4.41\%$ achieve reward, which is the base probability of reward under the target distribution.

\textbf{Model Architecture:}

The operator network is a residual MLP with a mean-field skip connection,
\begin{align*}
G_{\theta}(u,a) = \tanh\!\big(\beta(a)\,u + g_{\theta}(a)\odot r_{\theta}(u,a)\big),
\end{align*}
where $\beta(a)u$ is the exact operator for a uniform base distribution, so the
residual stack only has to learn the correlation correction, and the $\tanh$
enforces the $[-1,1]^{13}$ box by construction. The signal level enters only
through a 10-dimensional level embedding
$e(a)=[\,a,\ \log(1+\beta(a)),\ \sin(2\pi a f),\ \cos(2\pi a f)\,]$ with Fourier
frequencies $f\in\{1,2,4,8\}$. The input $u\in\mathbb{R}^{13}$ is scaled by
$1/\sqrt{d}$, concatenated with $e(a)$, and mapped by a linear layer to width
96. This is followed by 4 pre-activation residual blocks, each consisting of
LayerNorm followed by FiLM modulation $(1+g_k)\,\mathrm{LN}(h)+b_k$ with
$(g_k,b_k)$ predicted from $e(a)$, then two width-96 linear layers with SiLU
activations. A final LayerNorm and linear layer project to the 13 output
coordinates, and a separate gain head ($10\to32\to13$, SiLU) predicts the
per-coordinate, level-dependent gain $g_{\theta}(a)$. The output layer is
zero-initialized, so training begins exactly at the mean-field prediction. The
model has $88{,}254$ parameters. Full implementation details are in the supplementary code.

\textbf{Training Run:}

The model was trained for 30,000 Adam steps with batch size 1024, learning rate
$10^{-3}$, and a cosine-annealing learning rate schedule over the full 30,000
steps. Each batch re-noises freshly drawn training samples rather than using a
fixed noise realization: $\beta$ is drawn log-uniformly on $[0.02, 50]$, and the
$\Lambda$-augmentation is applied with $\lambda\sim\mathrm{Unif}(1,3.5)$. The $\Lambda$ profile is applied to every training row, with $\lambda$ drawn
independently per row; no infill-focused ($k$-sparse) profiles were used in
this run. The objective is the unweighted
per-coordinate squared error $\mathbb{E}\|X - G_{\theta}(V,a)\|^{2}$ summed over
the 13 coordinates.

Training ran on a single NVIDIA GeForce RTX 5070 Ti Laptop GPU (Blackwell, 5888
CUDA cores, 12.8\,GB, CUDA 12.9) and took 10.7 minutes of wall-clock time
(0.18 GPU-hours). Processing $30{,}000 \times 1024 \approx 3.07\times10^{7}$
samples through an $88{,}254$-parameter network corresponds to approximately
$1.6\times10^{13}$ training FLOPs under the standard $6ND$ estimate.

\textbf{Extended Results}

For all models in question, $\tau$ tilt strengths tested are capped at $\sqrt{d+1}$, as this is the maximum amount of $\Lambda$ focusing seen in the training runs. Note this tilt strength is still very strong producing near 100\% probability of reward for the problems analyzed. In Figure \ref{fig:ising_cov}, we note the untilted model matches the mean and covariance of the target Ising model, effectively recreating the first and second moments.

\begin{figure}
\begin{center}
\includegraphics[width = \textwidth]{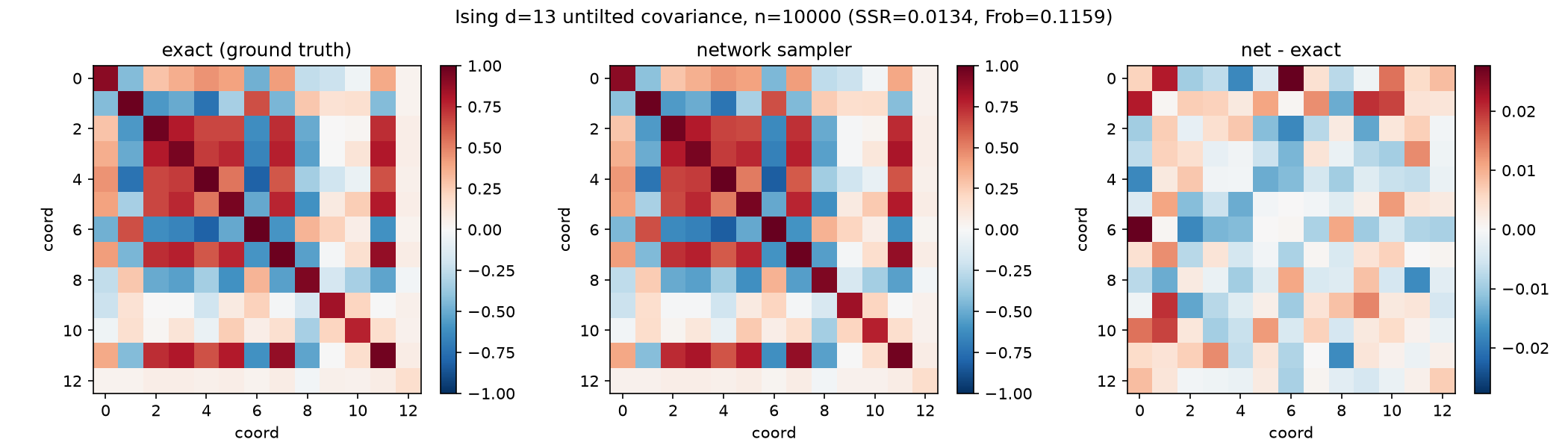}
\end{center}
\caption{Ising model covariance matrix ground truth and network sampled.}
\label{fig:ising_cov}
\end{figure}

For various signal levels $a$ and various tilt strengths $\tau$, 100 samples are generated for a pair of $(u,a)$ and average squared Euclidean distance between network denoiser and ground truth denoiser is computed in Figure \ref{fig:ising_ood}. Error does increase with tilt strength, as evaluations of tilted densities are out-of-distribution evaluations of the base denoiser only learned via $\Lambda$ focusing. However, there is still a uniform bound across tilt strengths and signal levels of $0.12$ which can be converted into a bound on terminal KL error at the final sample point across all $\tau$ strengths in the tested range of $0$ to $3.6$.

\begin{figure}
\begin{center}
\includegraphics[scale=0.7]{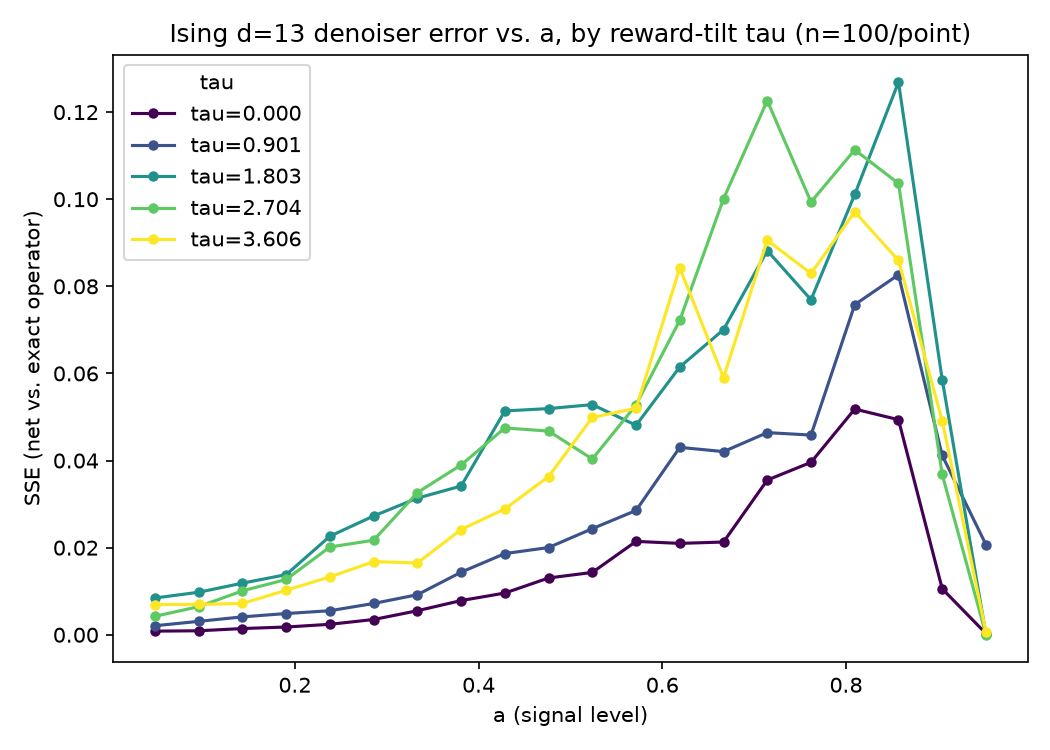}
\end{center}\caption{Comparison of $L^{2}$ error of learned denoiser for Ising model across tilt strengths.}\label{fig:ising_ood}
\end{figure}

The probability of reward and log-odds plot is seen in Figure \ref{fig:ising_logodds} across a range of $\tau$ from 0 to $\sqrt{d+1}$. At 2000 samples per $\tau$ the model tracks the log-odds perfectly.

The 12-dimensional Ising model is a small toy model with exact enumeration of the ground truth denoiser to demonstrate highly accurate sampling across tilts and denoiser learning when a ground truth is available.

\begin{figure}
\begin{center}
\includegraphics[width = \textwidth]{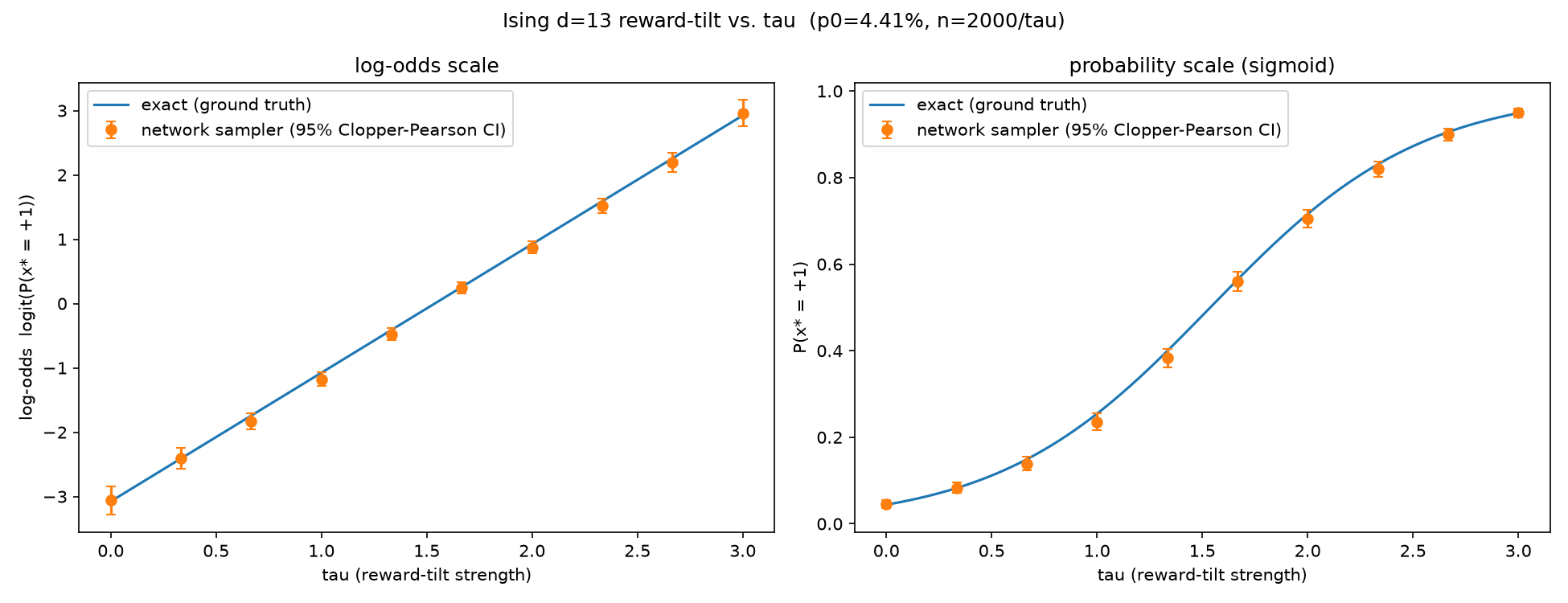}
\end{center}
\caption{Ising model log-odds across tilt strengths.}
\label{fig:ising_logodds}
\end{figure}

\FloatBarrier 

\FloatBarrier 
\subsection{4x4 Sudoku}

\textbf{Target Density}

$4\times4$ Sudoku has a state space of size $2^{32}$, with only 288 valid puzzles which satisfy row, column and box constraints. Each cell is a $2$-bit
encoding of the numbers $1$--$4$ with no dead code words, so the 16 cells give
exactly 32 bits and every bit string decodes to a grid.

\begin{figure}[H]
\begin{center}
\includegraphics[scale=0.5]{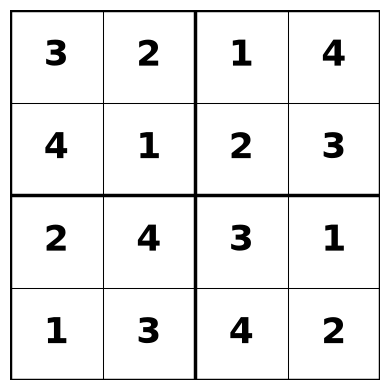}
\end{center}
\caption{Example valid 4x4 grid.}
\end{figure}

The base distribution is a weak generator sampling each row independently. With
probability $1-\rho$ it makes the row a uniformly random permutation of the
numbers $1$--$4$, and thus a valid row; with probability $\rho$ it instead
draws the row's 8 bits uniformly at random, giving any of the 256 possible
rows. We use $\rho = 0.02$.

A single row is therefore valid with probability
$(1-\rho) + \rho\cdot\frac{24}{256} = 98.19\%$, and all four rows are
simultaneously valid $92.94\%$ of the time. The generator has no knowledge of
column or box constraints, however, so it produces a proper puzzle only
$0.081\%$ of the time by chance.

The reward is the indicator of a valid puzzle appended as a 33rd coordinate, giving the extended state space
$\{-1,1\}^{33}$.

\textbf{Training Set}

Because a proper puzzle arises by chance only $0.081\%$ of the time, the base distribution is mixed with an injection component: each sample is drawn from the
uniform distribution over the 288 valid solutions with probability
$p_{\text{inject}} = 0.05$, and from the weak generator otherwise. The training
set consists of 150,000 samples drawn from this mixture, of which in
expectation 142,500 come from the weak generator and 7,500 are injected proper
Sudoku puzzles. The resulting exact reward probability is $5.08\%$; the
realized training file contains 7,590 genuinely valid grids ($5.06\%$).

\textbf{Model Architecture}

The operator network is the same residual MLP with mean-field skip connection
used for the Ising experiment,
\begin{align*}
G_{\theta}(u,a) = \tanh\!\big(\beta(a)\,u + g_{\theta}(a)\odot r_{\theta}(u,a)\big),
\end{align*}
scaled up for the larger state space. The signal level enters through a
10-dimensional level embedding
$e(a)=[\,a,\ \log(1+\beta(a)),\ \sin(2\pi a f),\ \cos(2\pi a f)\,]$ with Fourier
frequencies $f\in\{1,2,4,8\}$. The input $u\in\mathbb{R}^{33}$ is scaled by
$1/\sqrt{d}$, concatenated with $e(a)$, and mapped by a linear layer to width
1024, followed by 8 pre-activation residual blocks, each consisting of
LayerNorm, FiLM modulation $(1+g_k)\,\mathrm{LN}(h)+b_k$ with $(g_k,b_k)$
predicted from $e(a)$, and two width-1024 linear layers with SiLU activations.
A final LayerNorm and linear layer project to the 33 output coordinates, and a
separate gain head ($10\to32\to33$, SiLU) predicts the per-coordinate gain
$g_{\theta}(a)$. The output layer is zero-initialized so training begins exactly
at the mean-field prediction. The model has $17{,}072{,}582$ parameters. Full implementation details are in the supplementary code.

\textbf{Training Run}

The model was trained for 30,000 Adam steps with batch size 8192, learning rate
$10^{-3}$, and a cosine-annealing learning rate schedule over the full 30,000
steps. Each batch re-noises freshly drawn training samples, with $\beta$ drawn
log-uniformly on $[0.02, 50]$. Two focusing profiles are mixed per row: with
probability $0.25$ the row uses the sparse infill profile, in which $k$ is drawn
uniformly on $\{2,\dots,32\}$ and a random $k$-subset of the core coordinates is
set to $\sqrt{32/k}$ with the rest zeroed, and otherwise it uses the
reward-focusing profile. Independently, the reward-axis stretch $\lambda$ is set
to 1 with probability $0.25$ and otherwise drawn log-uniformly on $[1,5]$. Both
profiles preserve the budget identity $\sum_i \Lambda_{ii}^{2} = d$, so the
operator being learned is unchanged. The objective is the unweighted
per-coordinate squared error $\mathbb{E}\|X - G_{\theta}(V,a)\|^{2}$ summed over
the 33 coordinates.

Training ran on a single NVIDIA GeForce RTX 5070 Ti Laptop GPU (Blackwell, 5888
CUDA cores, 12.8\,GB, CUDA 12.9) and took 67.3 minutes of wall-clock time
(1.12 GPU-hours). Processing $30{,}000 \times 8192 \approx 2.46\times10^{8}$
samples through a $17{,}072{,}582$-parameter network corresponds to
approximately $2.5\times10^{16}$ training FLOPs under the standard $6ND$
estimate.

\textbf{Extended Results}

For the 4x4 Sudoku problem, the reward set is only 288 valid grids, and the weak generator decomposes into a product measure across rows. As such, the ground truth denoiser can be computed via enumeration at any reward tilt strength. The estimated $L^{2}$ error across tilt strength is shown in Figure \ref{fig:4x4_ood}. The denoiser error does increase with $\tau$ tilt strength, but does have a uniform bound across tilt and signal levels of $0.5$.

We show the actual 33 values of the base denoiser $E[X_{i}|u,a]$ and learned network for 5 sampled $u$ values from a $\tau= 0$ draw and a $\tau = 4$ draw in Figure \ref{fig:4x4_lollipop}. Expected values are all between $-1$ and $1$. Note the $\tau = 0$ draws are all noisy versions of non-reward-achieving solutions, and the 33rd coordinate (the reward coordinate) accurately assigns them all a -1. The 5 $\tau = 4$ draws are noisy reward-achieving solutions and the 33rd coordinate assigns them all a +1 value.

The untilted $\tau = 0$ network achieves near 0 error on these examples and is visually indistinguishable. The $\tau = 4$ network is still very good, but is noticeably a bit off on a few coordinates, which agrees with the average increase in squared error seen in Figure \ref{fig:4x4_ood}.

\begin{figure}
\begin{center}
\includegraphics[scale=0.45]{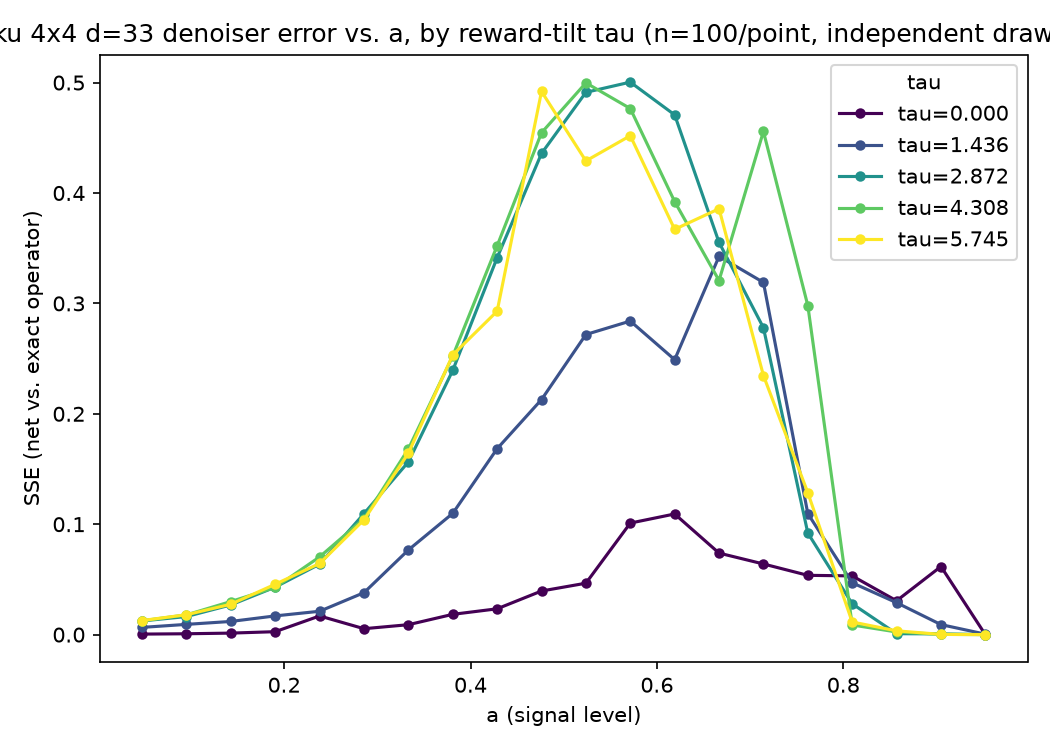}
\end{center}
\caption{Comparison of $L^{2}$ error of learned denoiser of 4x4 Sudoku problem across tilt strengths.}\label{fig:4x4_ood}
\end{figure}

For the 4x4 Sudoku problem we have access to the ground truth denoiser and the net estimated denoiser. Thus we can run stochastic localization with exact denoiser evaluations and with network approximated denoiser evaluation and track the error introduced by incorrect denoiser evaluations isolated from discretization error.

To track the log-odds in Figure \ref{fig:4x4_logodds}, we compute the $\tau$ that would achieve 98\% reward if sampled correctly. We then look at 5 $\tau$ levels before this value and 5 levels after up to maximum tilt value strength $\sqrt{d+1} = 5.74$. Note all levels past the dashed line should be achieving reward more than 98\% of the time. Confidence intervals are computed using a Clopper-Pearson method which often includes 100\% reward as an upper bound.

The localization samples using the ground truth denoiser tracks the log-odds across all levels, even past 98\% when considering the confidence intervals from 2000 samples. The network based sampler tracks well up to the 98\% level, but then does strictly under-sample past this high bar with the confidence intervals not overlapping the theoretical log-odds. Note when the network based sampler fails to exactly track the log-odds at this high tilt it is still generating over 98\% reward-achieving puzzles, but still under-sampling its theoretical value at this tilt.

\begin{figure}
\begin{center}
\includegraphics[width = \textwidth]{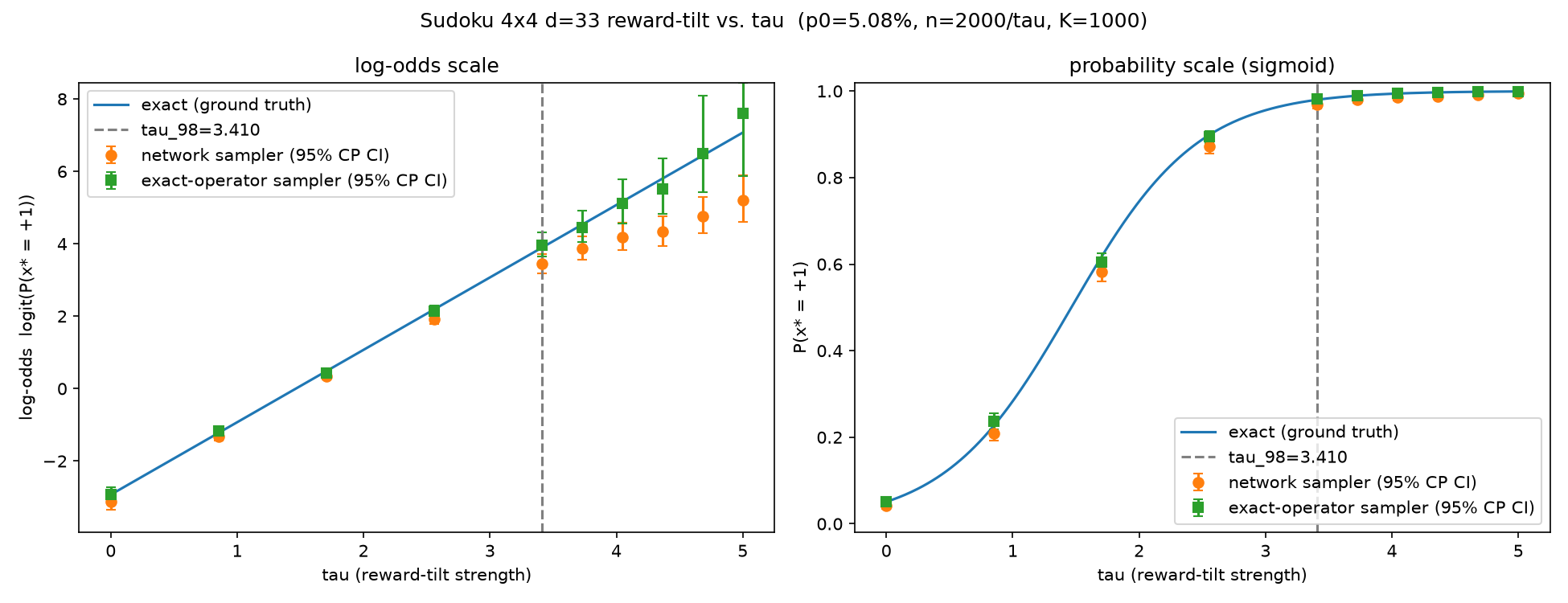}
\end{center}
\caption{4x4 Sudoku log-odds across tilt strengths.}
\label{fig:4x4_logodds}
\end{figure}

As an added bonus to the stochastic localization path, the denoiser $E[X_{i}|X_{t}]$ acts as both the drift term and a prediction of the terminal sign of a given coordinate. As such, if a coordinate ever passes a threshold $|E[X_{i}|X_{t}]|>\ell$, we can consider that bit ``confident'' and fix that bit to the sign of the denoiser at that moment. This allows for ``early decoding'' seeing which bits the network fixes in which order and at what point of the trajectory. Note a bit is allowed to flip signs, all that is decoded is the terminal sign value. Fixed bits determine what tokens (cell values) are still valid among the unfixed bits.

One of the arguments against continuous diffusion for discrete targets is that the trajectory remains a continuous value that cannot be interpreted in discrete token space until the terminal decoding step, unlike masked diffusion or autoregressive models which build partial discrete token strings along their trajectory.

By using the confidence based bit decoding, we can achieve a similar effect to see what discrete set of tokens the puzzle was considering at each moment in the trajectory and soft decode at any time.

When both bits in a cell are fixed we record the cell value, when one bit is fixed such as $(*,-1)$ this means the cell will get either a $(1,-1$) which is token 2 or $(-1-1)$ which is token 1. We record this with Sudoku ``pencil marks'' showing what is considered still possible in the cell. 

We note the network is not ``solving'' the puzzle ``logically'' as a human would by applying row, column, and box constraints in sequence. For example, in the second frozen grid in Figure \ref{fig:4x4_fuzzy}, there is a 1 fixed in the bottom left box. This would rule out all 1's in that column, but the upper box cells in the same column have not yet ruled out a 1 in that column since the bits for those two cells have not yet reached their own confidence level. This is therefore a sequence of most likely puzzles tracking the most confident bits at each moment in the trajectory.

We call this path a ``fuzzy trajectory'' tracking the denoiser coordinate values across the signal levels, noting when a coordinate achieves a confidence level more than 0.75 in magnitude, and recording events when a cell gets fixed in Figure \ref{fig:4x4_fuzzy}. Note some cells do become confident but ultimately flip their token value by the end of the run. Note the row 2 column 4 cell flips from a 2 to a 4 by the end of sampling. We run our stochastic localization up to a $T$ value of $20$ but the network has mostly decided on its final value much earlier by time $T = 3$ it would seem.

\begin{figure}
\begin{center}
\includegraphics[width = \textwidth]{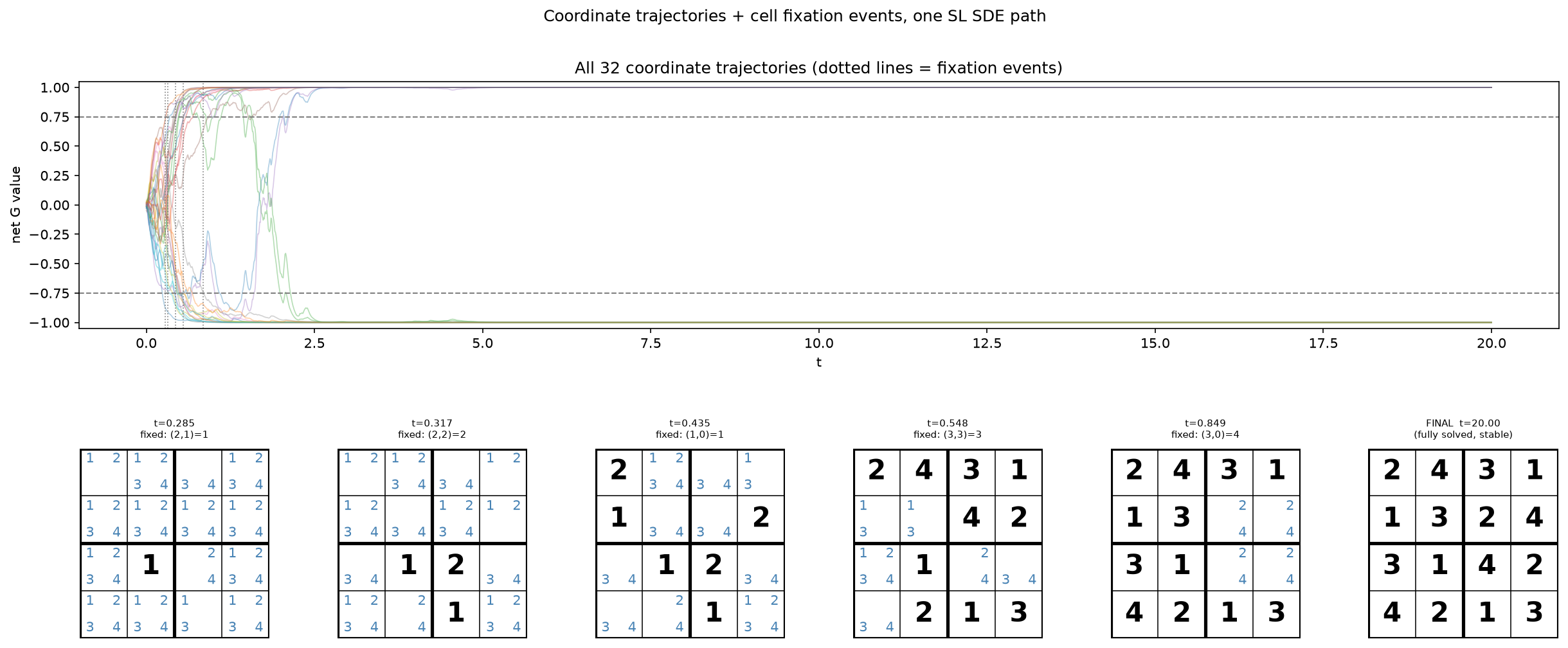}
\end{center}
\caption{Sudoku 4x4 denoiser trajectory and early decoding. Partially complete discrete token sequences fixed when denoiser coordinates pass magnitude threshold.}\label{fig:4x4_fuzzy}
\end{figure}

\begin{figure}[t]
    \centering
    \begin{subfigure}[t]{0.49\textwidth}
        \centering
        \includegraphics[width=0.9\linewidth]{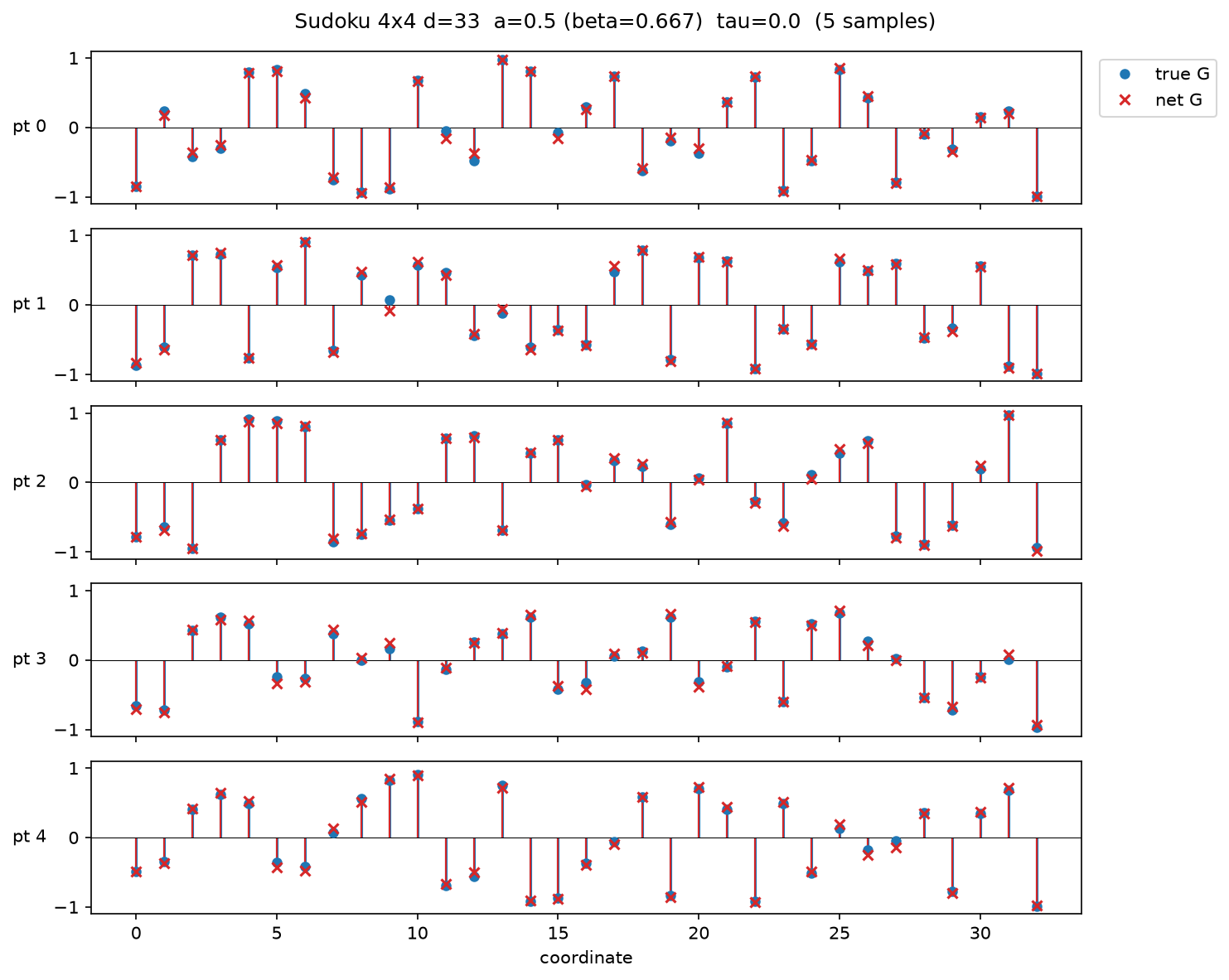}
        \caption{}
        \label{}
    \end{subfigure}
    \hfill
    \begin{subfigure}[t]{0.49\textwidth}
        \centering
        \includegraphics[width=0.9\linewidth]{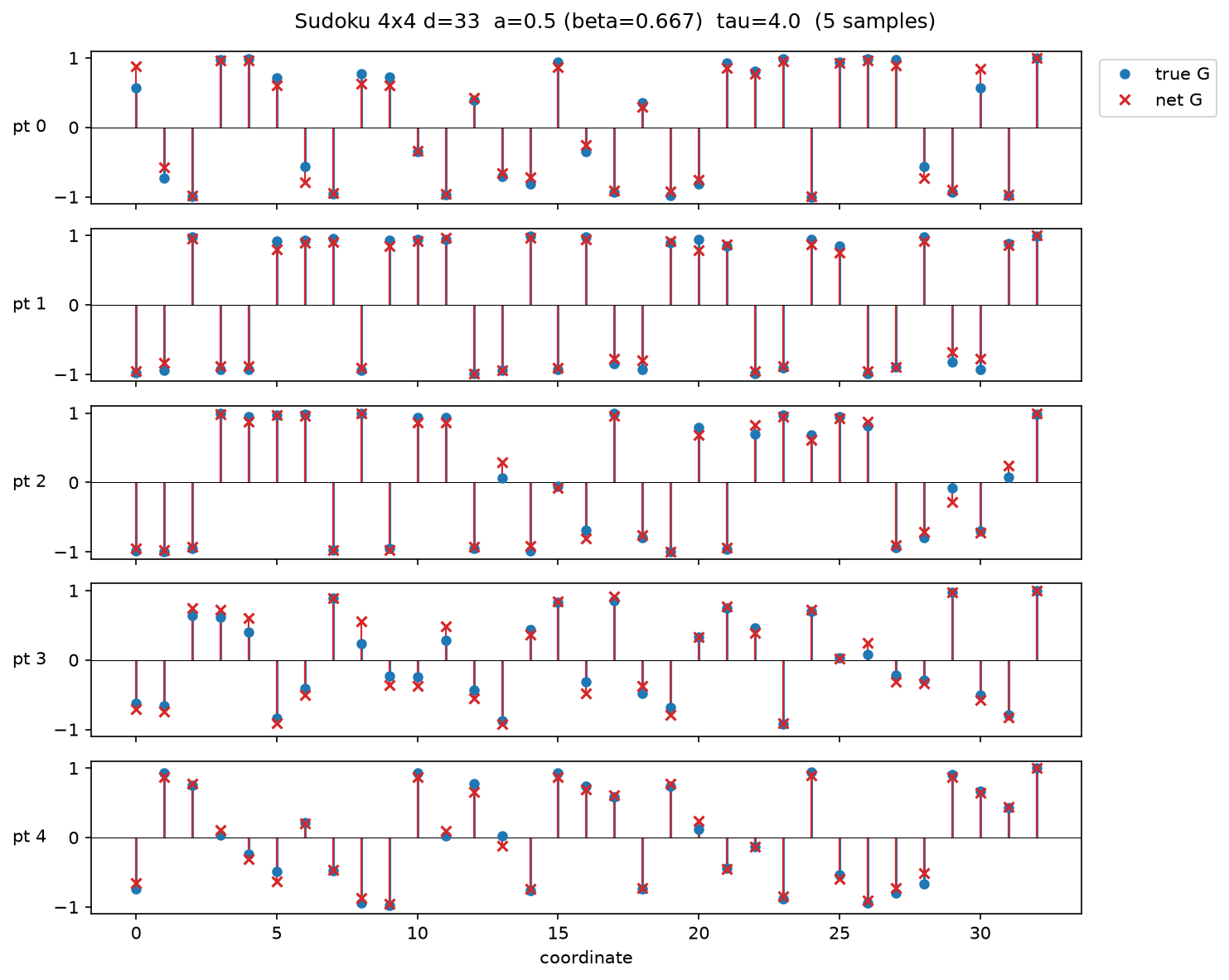}
        \caption{}
        \label{}
    \end{subfigure}
    \caption{Example ground truth denoiser and network denoiser in 33 coordinates at sampled points. Samples in a) are untilted sampled and b) tilted samples.}
    \label{fig:4x4_lollipop}
\end{figure}

Note there are 288 valid 4x4 Sudoku grids, and the training set contained 7,500 examples so each valid puzzle was seen many times in the training run. therefore, the model should be judged on its ability to generate each puzzle with equal probability in a large collection of generated outputs.

We generate 3000 valid puzzles with the stochastic localization path using the network denoiser and the ground truth and perform a chi-squared test for uniformity. The network based sampler visits all puzzles except for 1, and its maximum visit to one particular puzzle is 25. The ground truth denoiser visits every solution, with minimum visits being 4 and maximum being 21. Visually, they are not that distinct in their visit histogram in Figure \ref{fig:visit_hist}. However, the ground truth sampler passes a chi-squared test for uniformity, while the network based sampler fails the test. As such, it has learned some preference for certain solutions, although it is not a strong preference.

\begin{figure}
    \centering
    \begin{subfigure}[t]{\textwidth}
        \centering
        \includegraphics[width=\linewidth]{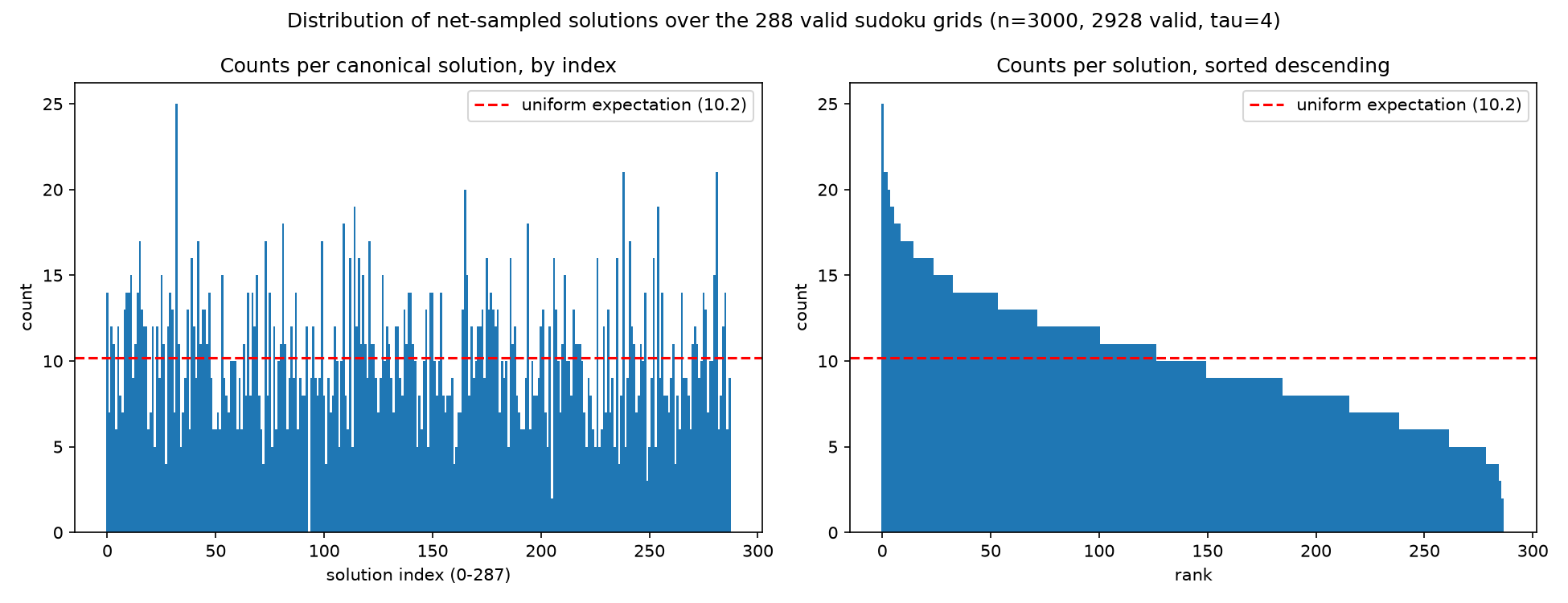}
        \caption{4x4 Sudoku chi-squared test network samples.}
        \label{}
    \end{subfigure}
    \hfill
    \begin{subfigure}[t]{\textwidth}
        \centering
        \includegraphics[width=\linewidth]{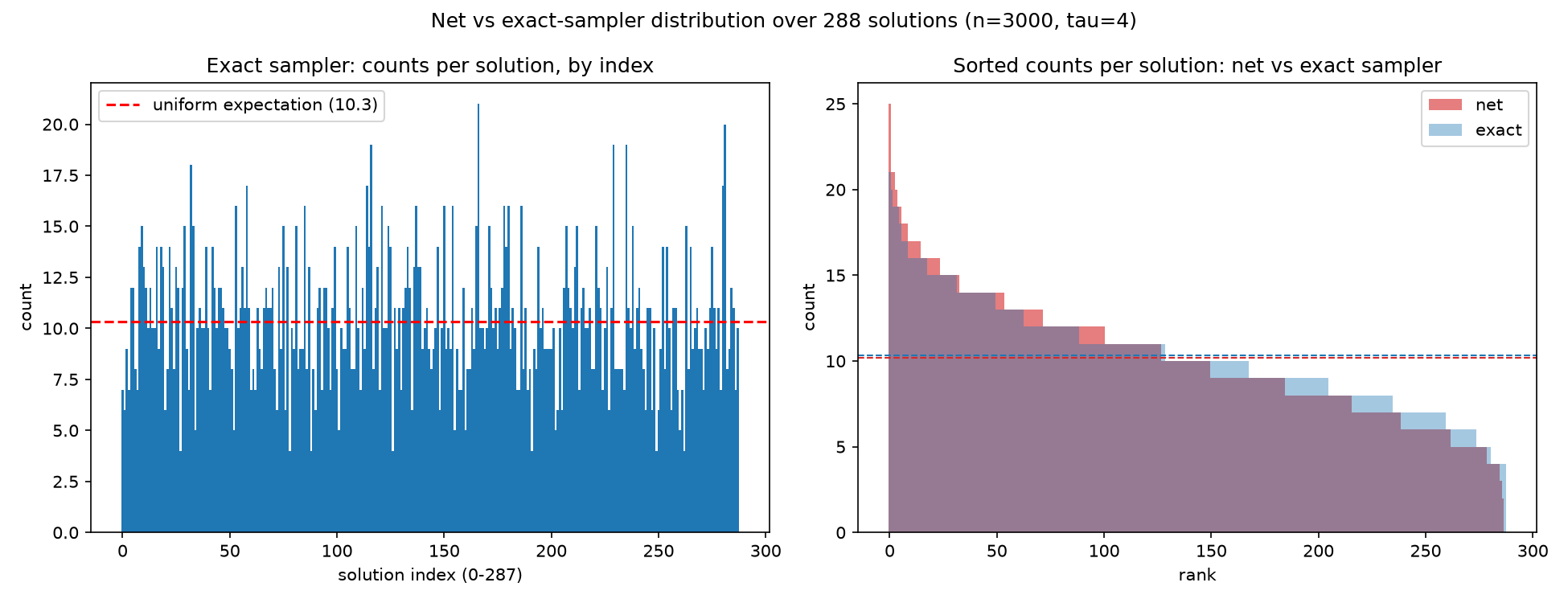}
        \caption{4x4 Sudoku chi-squared test ground truth samples.}
        \label{}
    \end{subfigure}
    \caption{}
    \label{fig:visit_hist}
\end{figure}

\FloatBarrier

\FloatBarrier

\subsection{6x6 Sudoku}

\textbf{Target Density}

$6\times6$ Sudoku has 6 rows, 6 columns, and six non-overlapping
$2\times3$ rectangular boxes, each of which must be a permutation of $1$--$6$ to be valid. Each cell is a $3$-bit encoding of the numbers $1$--$6$, giving
$36 \times 3 = 108$ core bits and a state space of $2^{108}$. Unlike the
$4\times4$ case, this encoding has \emph{dead code words}: three bits admit eight
values, of which only six are legal, so two codes per cell decode to the
invalid values $7$ and $8$. The generator never emits them, but the model
operates on the full $2^{108}$ space and can.

\begin{figure}[H]
\begin{center}
\includegraphics[scale=0.5]{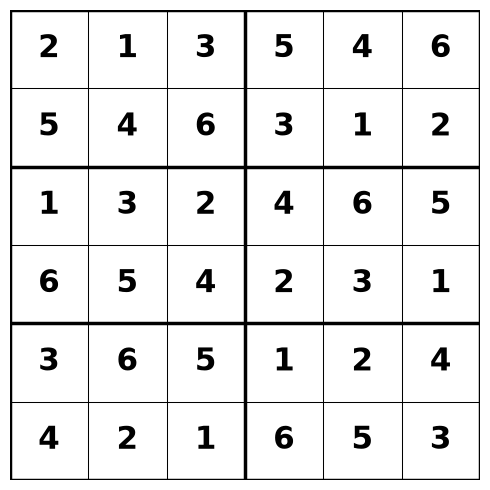}
\end{center}
\caption{Example valid 6x6 grid.}
\end{figure}

There are
$28{,}200{,}960$ proper puzzles, vastly more than the 4x4 puzzle. The training set does not contain all proper puzzles, thus the network's generations should be novel making new proper puzzles not seen in the training set.

The base distribution is again a weak generator sampling each row
independently. With probability $1-\rho$ the row is a uniformly random
permutation of $1$--$6$, and thus valid; with probability $\rho$ it is instead
six i.i.d.\ uniform digits drawn from $1$--$6$, which may repeat. We use
$\rho = 0.02$. A row is therefore valid with probability
$(1-\rho) + \rho \cdot \frac{720}{6^{6}} = 98.03\%$, and all six rows are
simultaneously valid $88.75\%$ of the time.

Because the generator enforces no column or box constraints, and the grid is
much larger than in the $4\times4$ case, it produces a proper puzzle with
probability only $1.8\times10^{-10}$ --- effectively never. The reward is again
the validity indicator.

\textbf{Training Set}

Since the weak generator essentially never produces a proper puzzle on its own,
reward-achieving examples must be injected: each sample is drawn uniformly from
the set of valid solutions with probability $p_{\text{inject}} = 0.05$, and from
the weak generator otherwise. The training is scaled up to $5{,}000{,}000$
samples from this mixture, roughly 33 times larger than the $4\times4$ training
set, which is necessary because the injected solutions are drawn from a pool of
$28.2$ million rather than $288$. The realized set contains $249{,}903$ valid
grids ($5.00\%$), of which $248{,}778$ are distinct.

\textbf{Model Architecture}

The target distributions and models were trained in order from smallest to largest state space starting with the Ising Model, then 4x4, 6x6, 9x9 Sudoku, then the MNIST problem. As the state space scaled up in size, larger data sets were needed to learn the more complicated density and tilt, and more expressive model architectures.

The previous Ising and 4x4 Sudoku problems used MLP models with no attention mechanism. For 6x6, 9x9 Sudoku and MNIST we use a transformer architecture with attention.

This problem uses a transformer featurizer rather than the residual MLP, since
the row, column and box constraints are joint combinatorial structure that a
flat trunk must discover entirely from data, whereas attention can relate cells
directly. The 108 core bits are split into 6 tokens of 18 bits (one per row),
and the reward coordinate is given its own token rather than folded into the
level conditioning, so attention can relate $x_{*}$ to every row and vice versa;
attention is non-causal over the resulting 7 slots. The model has $4$ layers
with $d_{\text{model}} = 256$, $8$ attention heads, and feed-forward dimension
$1024$, and noise-level conditioning enters by AdaLN from the same Fourier level
embedding $e(a)$ used elsewhere.

The output head is unchanged, preserving the mean-field skip
\begin{align*}
G_{\theta}(u,a) = \tanh\!\big(\beta(a)\,u + g_{\theta}(a)\odot r_{\theta}(u,a)\big),
\end{align*}
with every residual branch zero-initialized so that at initialization the
network is exactly the mean-field denoiser. The network is additionally
self-conditioned, as discussed in Appendix \ref{app:self-cond}, the previous estimate of $x_{0}$ is concatenated to the input
along the feature axis of each token, doubling the per-slot input channels from
18 to 36. The model has $3{,}249{,}028$ parameters, substantially fewer than the
$4\times4$ MLP despite the much larger state space. Full implementation details are in the supplementary code.

\textbf{Training Run}

The model was trained for 180,000 Adam steps with batch size 8192, learning rate
$10^{-3}$, and a cosine-annealing schedule over the full 180,000 steps. During
training the self-conditioning input is supplied with probability $0.5$, so the
network also learns the zero-input fallback needed at the first sampling step.
Each batch re-noises freshly drawn samples, with $\beta$ drawn log-uniformly on
$[0.02, 50]$; with probability $0.3$ the row instead draws $\beta$ log-uniformly
from the narrower band $[0.5, 2.5]$, which over-samples the regime empirically
hardest to fit. With probability $0.25$ a row uses the sparse infill profile
($k$ uniform on $\{2,\dots,108\}$, a random $k$-subset set to $\sqrt{108/k}$),
and otherwise the reward-focusing profile; independently, the reward-axis
stretch $\lambda$ is $1$ with probability $0.25$ and otherwise log-uniform on
$[1,10]$. The objective is the unweighted per-coordinate squared error summed
over the 109 coordinates; the final training loss was $53.92$.

Training ran on a single NVIDIA H100 80GB HBM3 GPU (Hopper, CUDA 12.6) on the
PSC Bridges-2 cluster and took 3.70 hours. Processing
$180{,}000 \times 8192 \approx 1.47\times10^{9}$ samples through a
$3{,}249{,}028$-parameter network corresponds to approximately
$2.9\times10^{16}$ training FLOPs under the standard $6ND$ estimate.

\textbf{Extended Results}

For the 6x6 Sudoku problem, the weak generator samples each row independently thus the ground truth denoiser expectation can be enumerated for the untilted density. With a much larger set of valid puzzles we do not enumerate the denoiser with reward tilting applied. The $L^{2}$ error for $\tau = 0$ with the base denoiser is shown in Figure \ref{fig:6x6_ood}.

\begin{figure}
\begin{center}\includegraphics[scale=0.5]{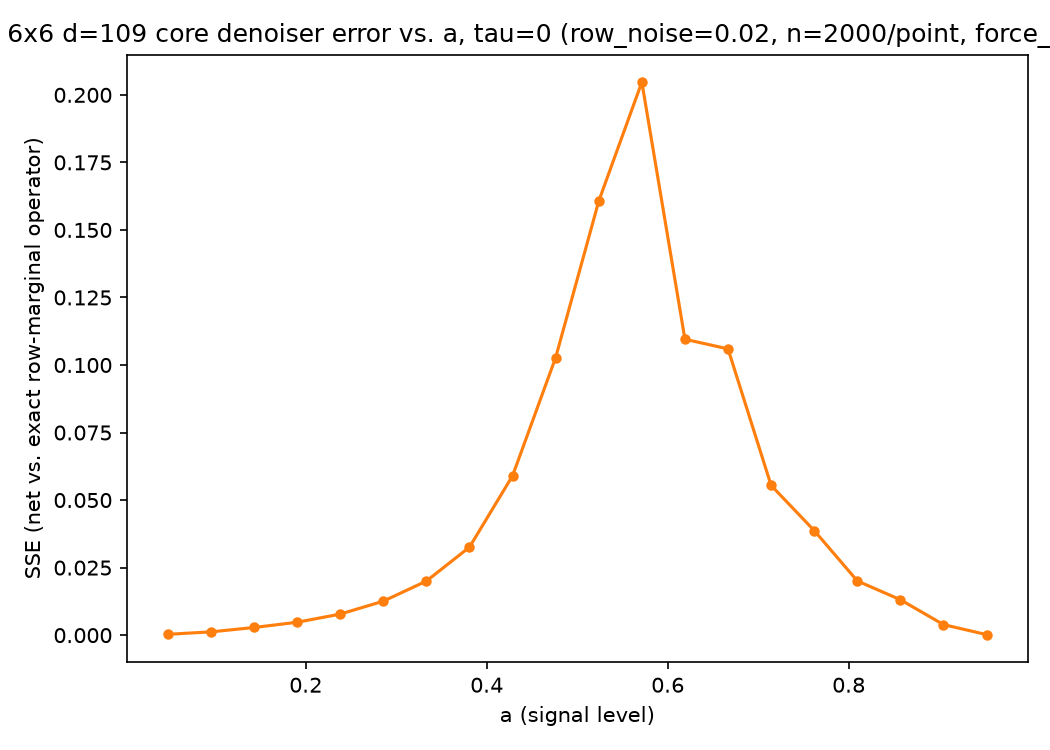}
\caption{$L^{2}$ error with enumerated ground truth denoiser untilted.}
\label{fig:6x6_ood}
\end{center}
\end{figure}

For the log-odds ratio, the $\tau$ achieving 98\% reward is computed, and $5$ values below and 5 values up to $\sqrt{d}$ are computed. With only 2000 samples, we do not have the specificity to detect very high probabilities, hence the confidence intervals on the log-odds start to become uninformative for very high tilts. Note these tilts are already producing over 98\% correct puzzles.

\begin{figure}
\begin{center}\includegraphics[scale=0.5]{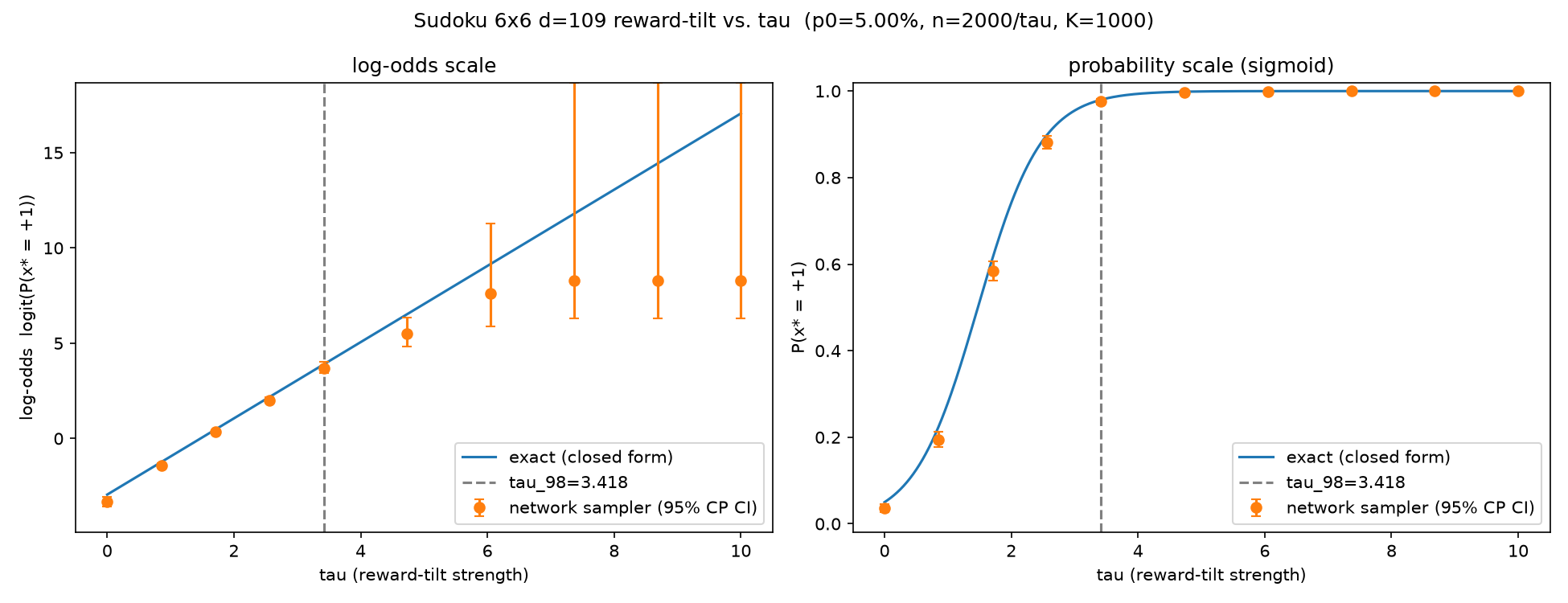}
\end{center}
\caption{6x6 Sudoku log-odds across tilt strengths.}\label{fig:6x6_logodds}
\end{figure}

At $\tau_{98} = 3.42$ we drew $1000$ samples, of which $917$ decoded to valid grids, and checked each for an exact match against all $5{,}000{,}000$ training rows. The training set covers $248{,}778$ of the $28{,}200{,}960$ valid $6\times6$ grids ($0.88\%$), so a sampler drawing uniformly from valid grids would produce $8.1$ exact matches by chance (null $95\%$ interval $3$--$14$); we observe $10$ ($1.09\%$ of valid samples, $95\%$ CI $0.52$--$2.00\%$, $p = 0.29$ for $\geq 10$ under the null), consistent with no memorization.

\FloatBarrier

\FloatBarrier 

\subsection{9x9 Sudoku}

\textbf{Target Density}

Full-size $9\times9$ Sudoku has 9 rows, 9 columns, and nine non-overlapping
$3\times3$ boxes, each required to be a permutation of $\{1,\dots,9\}$. There
are $\approx 6.67\times10^{21}$
proper puzzles.

\begin{figure}[H]
\begin{center}
\includegraphics[scale=0.5]{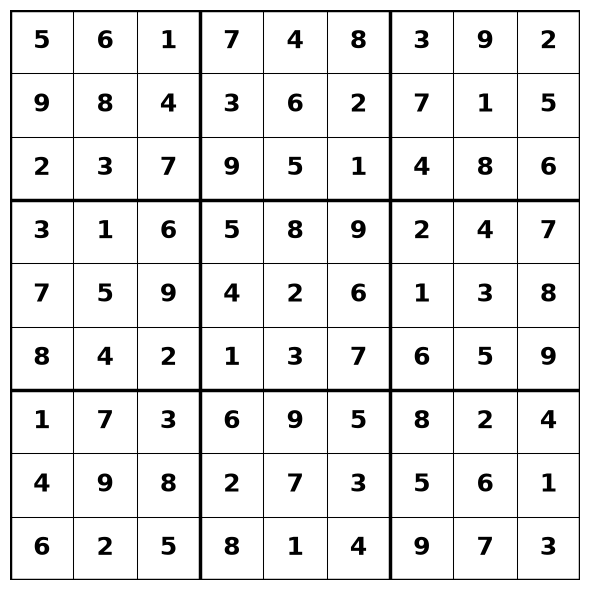}
\end{center}
\caption{Example valid 9x9 grid.}
\end{figure}

Each cell is a $4$-bit encoding of the numbers $1$--$9$, giving
$81 \times 4 = 324$ core bits and a state space of $2^{324}$. The dead-code word
problem is considerably worse than at $6\times6$: four bits admit sixteen
values of which only nine are legal.

The base distribution is again a weak generator sampling each row
independently: with probability $1-\rho$ a uniformly random permutation of
$1$--$9$, and with probability $\rho$ nine i.i.d.\ uniform digits from
$1$--$9$, which may repeat. We use $\rho = 0.02$, giving a row validity of
$98.00\%$ and all nine rows simultaneously valid $83.39\%$ of the time. With no
column or box constraints enforced, the generator produces a proper puzzle with
probability $5.1\times10^{-29}$.

The reward is the validity indicator, appended as a 325th coordinate to give
the extended state space $\{-1,1\}^{325}$.

\textbf{Training Set}

The training set consists of $10{,}000{,}000$ samples, drawn from the weak
generator with probability $1 - p_{\text{inject}}$ and from a valid-solution
sampler with probability $p_{\text{inject}} = 0.05$. Unlike $6\times6$, no
efficient exactly uniform sampler over valid grids exists at this scale: the
$6\times6$ construction enumerated the $39{,}168$ grids with first row
$1,\dots,6$ and composed them with relabellings, whereas the corresponding
$9\times9$ quotient has $1.84\times10^{16}$ reduced grids and is not
enumerable. Injection instead uses randomized backtracking composed with a
uniform draw from the full symmetry group ($|G| = 1{,}218{,}998{,}108{,}160$),
which is exactly uniform within each symmetry orbit and biased only in the
orbit label.

The realized training set contains $500{,}883$ valid grids ($5.01\%$), all of them distinct, so each is
seen roughly 147 times over the training run. Proper sampling should essentially have 0 probability of recreating a training example, thus the model should generate 100\% novel valid puzzles and never recreate a valid puzzle seen in the training set.

\textbf{Model Architecture}

The architecture is the same self-conditioned transformer used at $6\times6$,
scaled up. The 324 core bits are split into 9 tokens of 36 bits (one per row),
the reward coordinate is given its own token, and non-causal attention runs
over the resulting 10 slots. The model has $8$ layers with
$d_{\text{model}} = 512$, $8$ heads, and feed-forward dimension $2048$, with
AdaLN level conditioning and the same zero-initialized mean-field skip
\begin{align*}
G_{\theta}(u,a) = \tanh\!\big(\beta(a)\,u + g_{\theta}(a)\odot r_{\theta}(u,a)\big),
\end{align*}
so that at initialization the network is exactly the mean-field denoiser.
Self-conditioning concatenates the previous $x_{0}$ estimate along each token's
feature axis, doubling per-slot input channels from 36 to 72. The model has
$25{,}558{,}382$ parameters. Full implementation details are in the supplementary code.

\textbf{Training Run}

The model was trained for 180,000 Adam steps with batch size 8192, learning
rate $10^{-3}$, and a cosine-annealing schedule, with self-conditioning
supplied with probability $0.5$. Signal levels are drawn log-uniformly on
$[0.02, 50]$, with probability $0.3$ instead drawn from the harder band
$[0.5, 2.5]$. With probability $0.25$ a row uses the sparse infill profile
($k$ uniform on $\{2,\dots,324\}$, a random $k$-subset set to $\sqrt{324/k}$),
and otherwise the reward-focusing profile; the reward-axis stretch $\lambda$ is
$1$ with probability $0.25$ and otherwise log-uniform on $[1, 17]$.

Training ran on a single NVIDIA H100 80GB HBM3 GPU (Hopper, CUDA 12.6) on the
PSC Bridges-2 cluster and took 22.5 hours. Processing
$180{,}000 \times 8192 \approx 1.47\times10^{9}$ samples through a
$25{,}558{,}382$-parameter network corresponds to approximately
$2.3\times10^{17}$ training FLOPs under the standard $6ND$ estimate.

\textbf{Extended Results}

For the 9x9 Sudoku, there is no ground truth denoiser to compare against since the row space is too large to enumerate. We consider the log-odds ratio.

For the 9x9 model, we consider the two forms of self-conditioning. Mode ``two-pass'' is a Markovian form of self-conditioning which may be interpreted as an architecture choice where the denoiser is fed its own prediction
\begin{align*}
\tilde{G}_{t} = G(X_{t}, G(X_{t},0,a),a).
\end{align*}
Mode ``carry'' uses the previous time point's estimate as input, which breaks the Markov structure and the sampling path is no longer a pure Euler-Maruyama discretization of the continuous SDE with pointwise denoiser estimation,
\begin{align*}
\tilde{G}_{t} =  G(X_{t},\tilde{G}_{t-1}, a).
\end{align*}

For the 9x9 model we see noticeable difference in performance between the two-pass self-conditioning and the carry self-conditioning.

Two-pass fails to properly track the log-odds. It plateaus around 60\% reward-achieving solutions, and is already lagging behind at lower tilt strengths. Note the model has its own prediction of if it thinks it is producing a reward-achieving solution in its d+1 coordinate, and we can determine if the produced token string is reward-achieving as well. For a properly sampling model, these two should agree. For the two-pass mode, we see the model's own predictions of reward (orange) do trace the log-odds, but the actual proportion of reward-achieving solutions does not. Therefore, the model thinks it is producing 90\% plus reward-achieving, but in reality only 60\% are.

The carry mode does much better at tracking the log-odds, plateauing at 90\% reward-achieving. At higher tilt strengths, it also experiences some discrepancy between the solutions it thinks are reward-achieving via its $d+1$ bit being positive, but are graded as invalid puzzles by the Sudoku conditions. This discrepancy is more on the order of 5-10\% for the carry mode, rather than the 30\% of the two-pass mode.

\begin{figure}
    \centering
    \begin{subfigure}[t]{\textwidth}
        \centering
        \includegraphics[width=\linewidth]{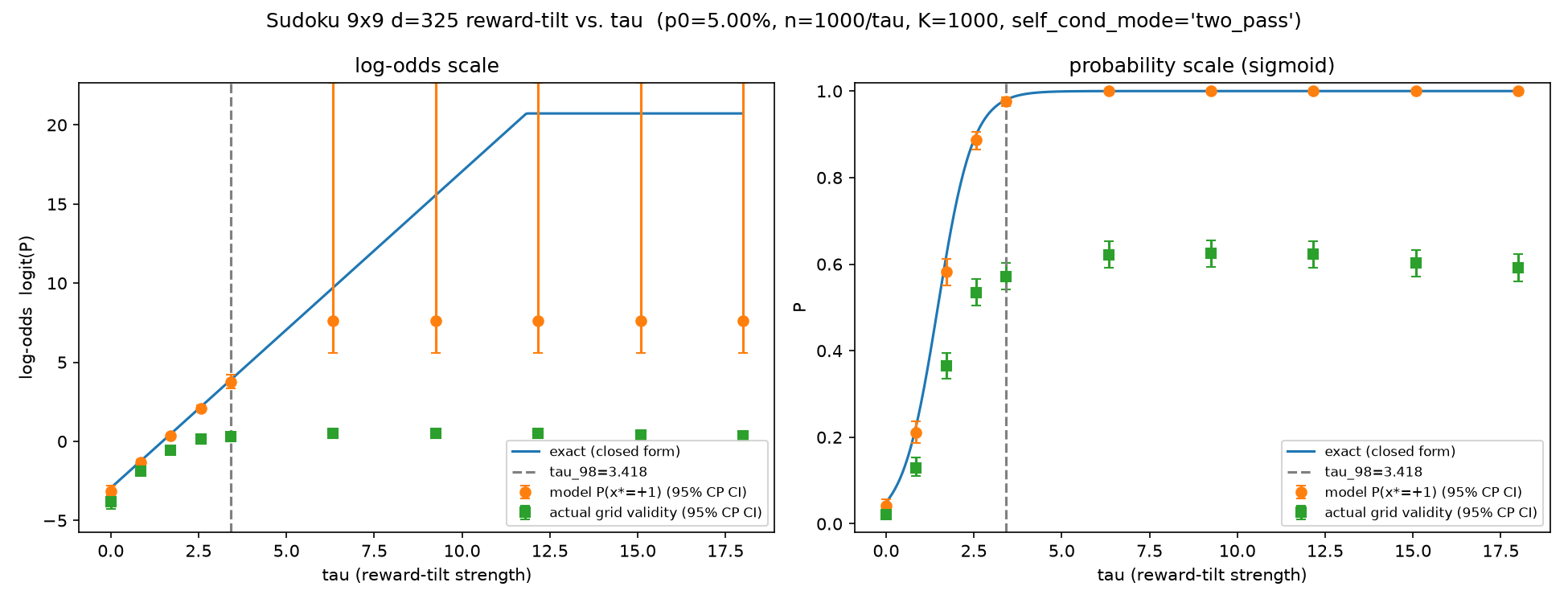}
        \caption{Self-conditioning mode two-pass.}
        \label{fig:9x9_logodds_twopass}
    \end{subfigure}
    \hfill
    \begin{subfigure}[t]{\textwidth}
        \centering
        \includegraphics[width=\linewidth]{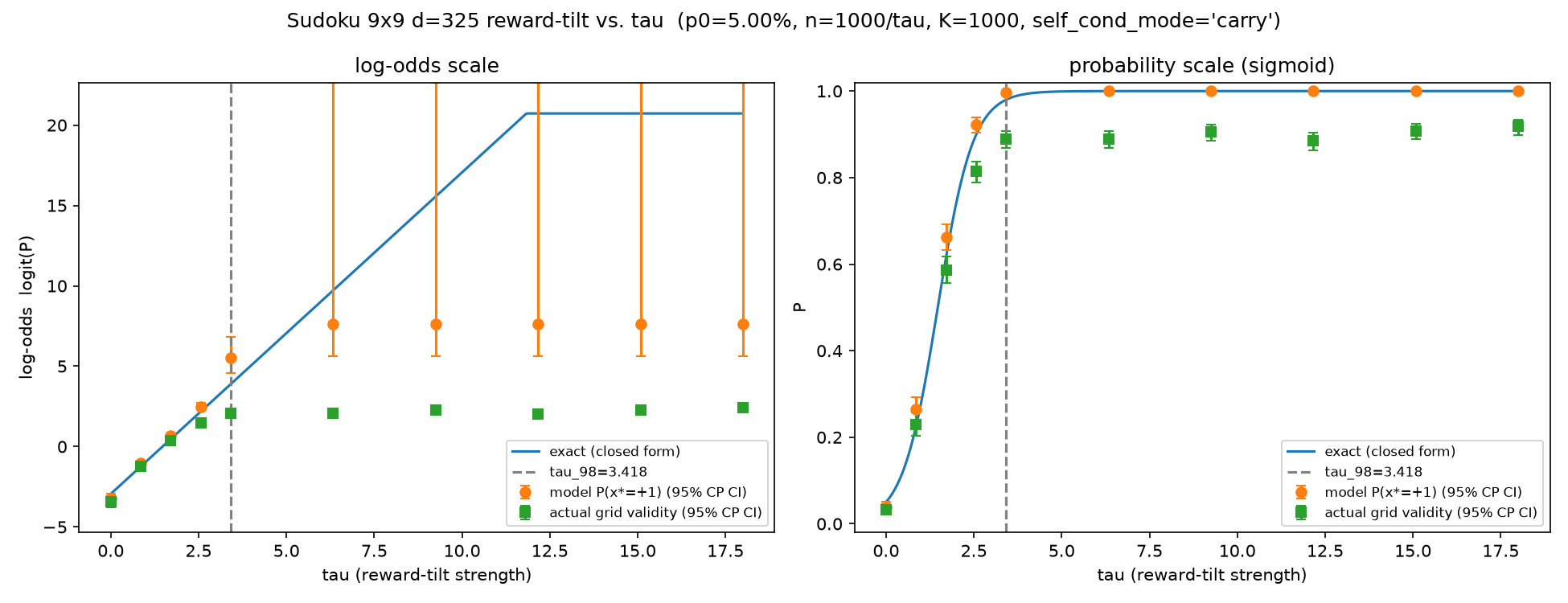}
        \caption{Self-conditioning mode carry.}
        \label{fig:9x9_logodds_carry}
    \end{subfigure}
    \caption{9x9 Sudoku log-odds across tilt strengths for both self-conditioning modes.}
    \label{}
\end{figure}

To consider why this may be the case, it is possible the few missing percent of samples that were ``supposed'' to terminate at a valid puzzle mis-sampled a few bits due to errors in the learned denoiser accumulating over the sample path, which would get a -1 reward from the sharp correct/incorrect puzzle reward function for any cell violations.

Using a CP-SAT optimization problem we can find for each generated puzzle the nearest valid solution in Hamming distance. We plot the CDF of how many produced puzzles in 300 samples are valid (distance 0), 1 bit away from correct, 2, and so on from a valid puzzle in Figure \ref{fig:ham_cdf}. We see an example puzzle in Figure \ref{fig:CP_bit_correct} that was generated by two-pass self-conditioning at a tilt strength of 3 and is 8 bit flips away from a valid puzzle. We see in Figure \ref{fig:ham_cdf} for the two-pass mode, while only 60\% of puzzles are valid, 90\% are within 10 bits out of 324 from a valid puzzle. Two-pass and carry have comparable proportions of puzzles produced within 10 bits of a valid puzzle.

\begin{figure}
\begin{center}
\includegraphics[scale=0.5]{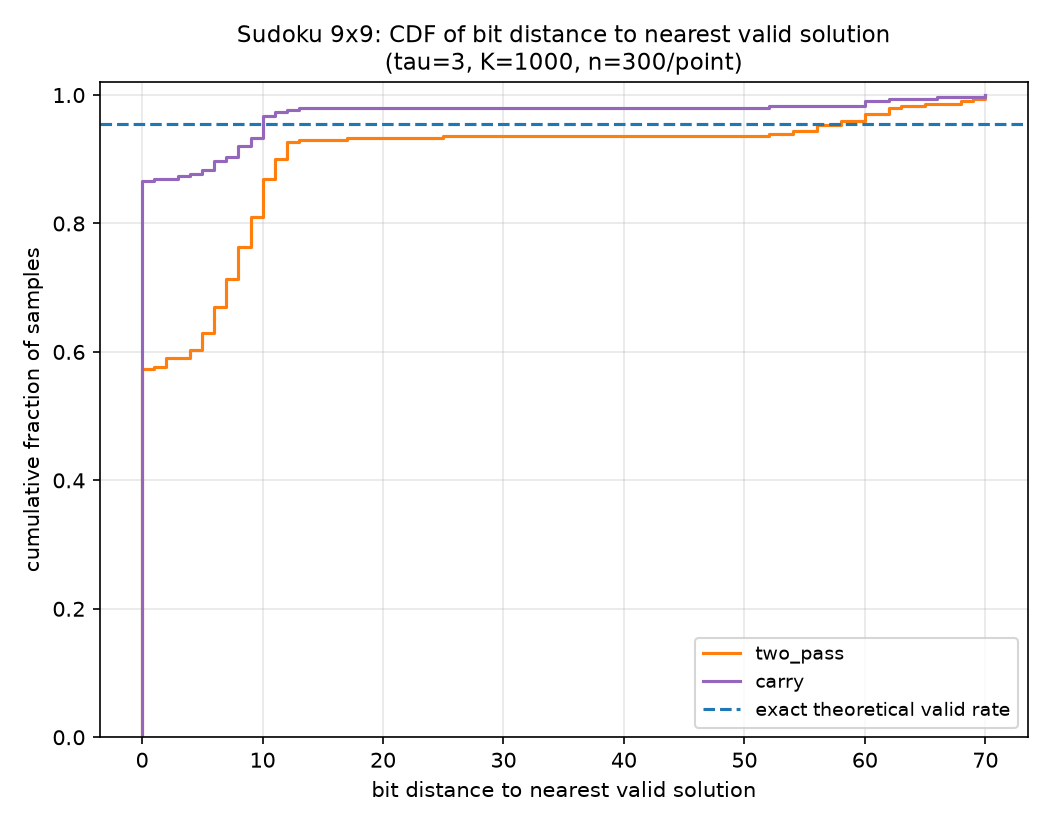}
\end{center}\caption{Cumulative distribution function of Hamming distance to nearest valid puzzle for 300 samples.}\label{fig:ham_cdf}
\end{figure}

Here we summarize the infilling performance of the model, using an infill strength and a reward tilt strength of $\tau = 3$ across all tests. Given less initial cell values, the sampler does produce less valid puzzles and does not always retain the initial cell values given to it. Probability of returning the given initial cells and generating a valid puzzle ranges from 80\% to 100\% across the amount of initial cells given.

\begin{table}[H]
\centering
\caption{Sudoku $9\times9$ infilling as a function of the number of given cells.
Each row is based on 100 uniformly random valid solutions $\times$ 20 draws per puzzle
($n=2000$), with a fresh random mask per puzzle. Tilt strength $\tau = 3$ for both the
reward and the cell-fixing tilts, 1000 Euler steps, self-conditioning mode \texttt{carry}.
Brackets give 95\% Clopper--Pearson intervals. ``Joint'' is the fraction of samples that
are valid grids and also keep every given cell.}
\label{tab:sudoku9-infill}
\small
\begin{tabular}{@{}rcccc@{}}
\toprule
Given & Removed & Valid grid & Keeps given cells & Joint success \\
\midrule
71 & 10 & $1.000\,[0.998,\,1.000]$ & $1.000\,[0.998,\,1.000]$ & $1.000\,[0.998,\,1.000]$ \\
61 & 20 & $1.000\,[0.998,\,1.000]$ & $0.999\,[0.997,\,1.000]$ & $0.999\,[0.997,\,1.000]$ \\
51 & 30 & $0.999\,[0.997,\,1.000]$ & $0.998\,[0.994,\,0.999]$ & $0.997\,[0.994,\,0.999]$ \\
41 & 40 & $0.996\,[0.992,\,0.998]$ & $0.989\,[0.983,\,0.993]$ & $0.985\,[0.979,\,0.990]$ \\
31 & 50 & $0.953\,[0.942,\,0.961]$ & $0.944\,[0.933,\,0.954]$ & $0.899\,[0.885,\,0.912]$ \\
21 & 60 & $0.925\,[0.913,\,0.936]$ & $0.918\,[0.905,\,0.930]$ & $0.851\,[0.835,\,0.866]$ \\
11 & 70 & $0.867\,[0.851,\,0.882]$ & $0.924\,[0.911,\,0.935]$ & $0.803\,[0.784,\,0.820]$ \\
\bottomrule
\end{tabular}
\end{table}

\begin{figure}
\begin{center}
\includegraphics[scale=0.5]{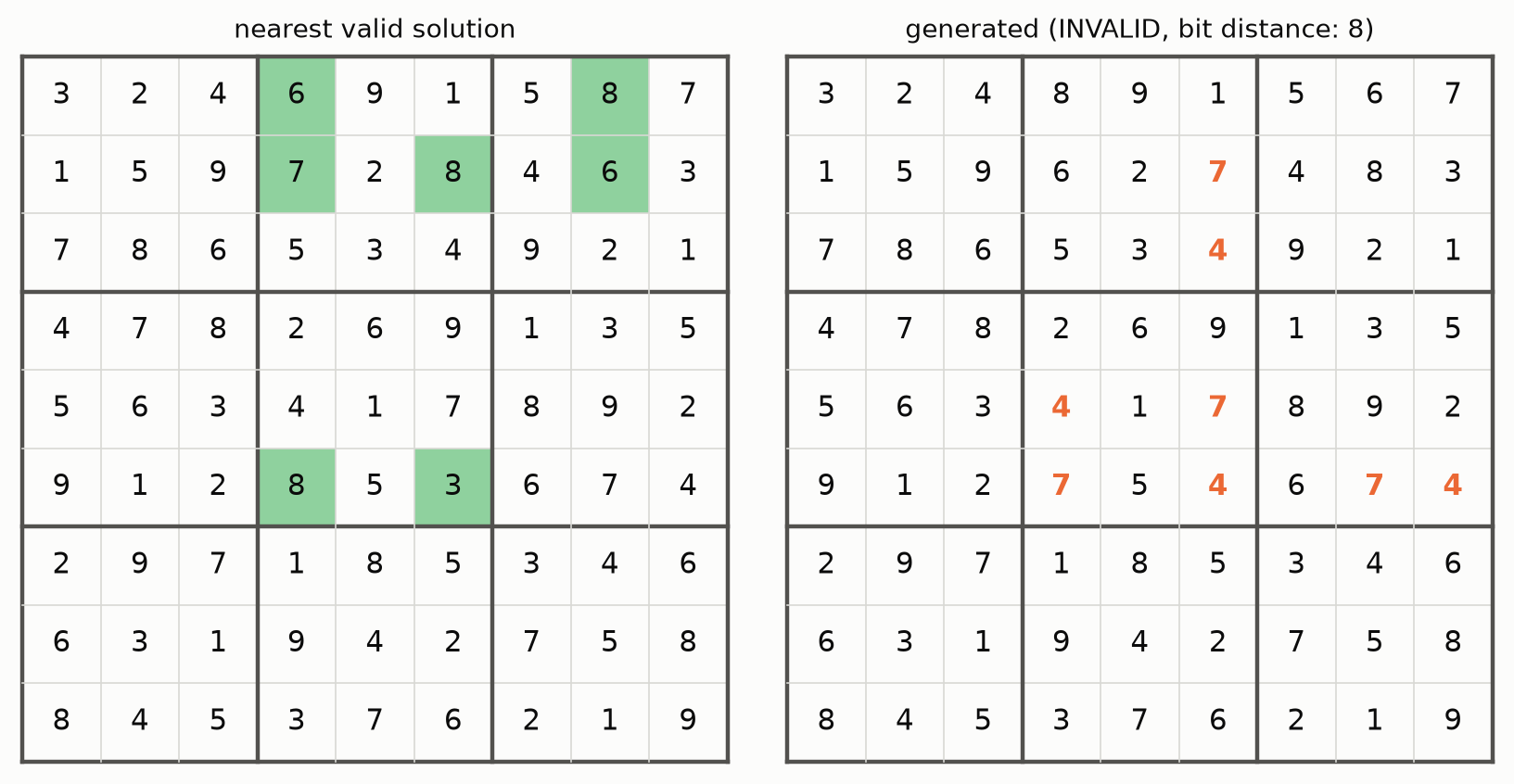}
\end{center}
\caption{Generated invalid puzzle (right) with constraint violations and nearest valid puzzle (left) in Hamming distance. Nearest valid grid would require 8 bit flips to the binary encoding. The bit flips are not all restricted to the invalid bits of the produced puzzle, but previously valid bits may be flipped to get to a valid puzzle.}
\label{fig:CP_bit_correct}
\end{figure}

A natural concern for a sampler that reaches high validity is that it has simply
memorized its training set. We test this on Sudoku $9\times9$ with the
self-conditioned transformer (\texttt{carry}, $\tau=3$, 1000 Euler steps). The
$500{,}883$ valid training grids, all distinct, are split at random into a
reference half ($250{,}000$ grids) and a held-out half. Of $N=1000$ generations,
869 are valid Sudoku solutions, and none of the $1000$ matches any reference set
exactly. Given that there are
about $6.67\times10^{21}$ valid grids, a single exact match would have been strong
evidence of copying. We also compute, for
each valid generation, the cell-wise Hamming distance to its nearest reference
grid, and compare this with the same distance computed for $869$ held-out
training grids, which are true unseen solutions from the data distribution
(Figure \ref{fig:9x9_mem_check}). The two distributions are almost
identical. Generated grids are therefore no closer to the training data
than new samples from the data distribution would be, and the closest generation
(39 cells away) still differs from every reference grid in almost half its cells.
The model has learned the Sudoku constraints rather than copying its training
grids.

\begin{figure}
\begin{center}
\includegraphics[scale=0.5]{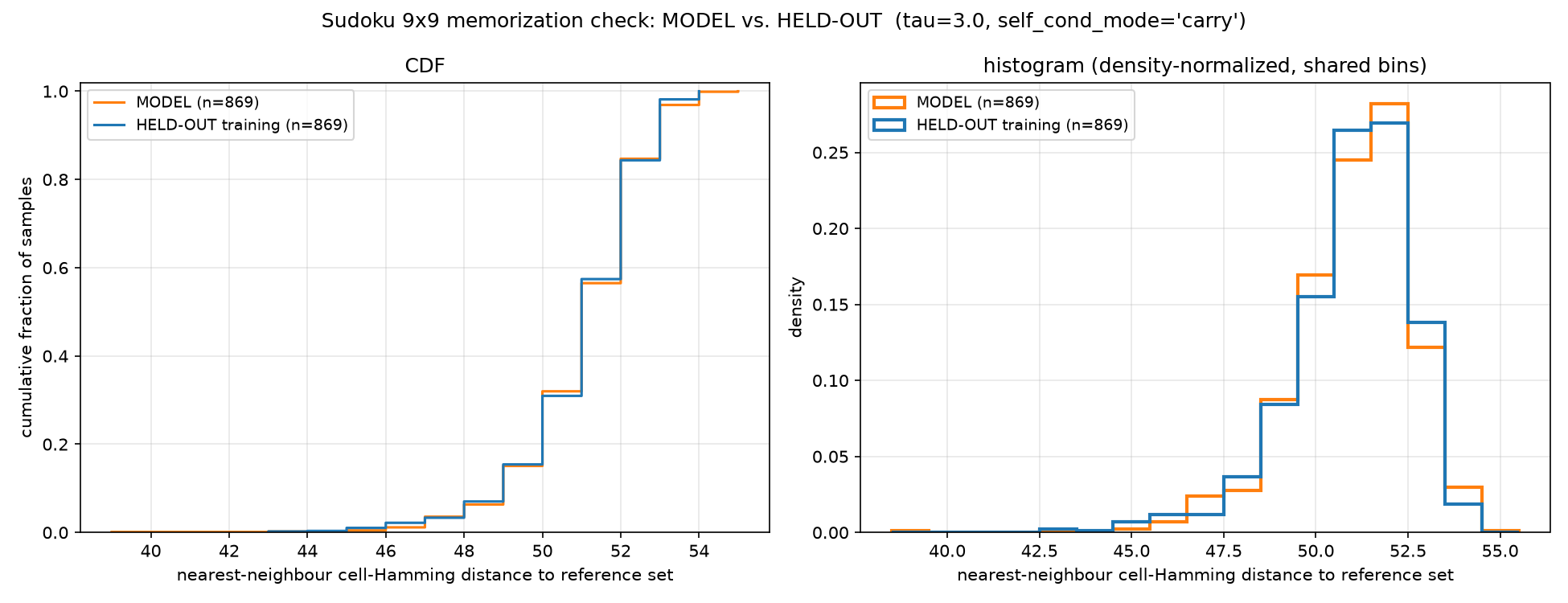}
\end{center}\caption{CDF and density of nearest distance reward-achieving gird in training set for network generated samples and held out set. Nearly identical distributions, so generated puzzles no more similar to training set than a held out portion. No puzzles are exact match, closet puzzle differ in 39 cells}
\label{fig:9x9_mem_check}
\end{figure}

\FloatBarrier

\FloatBarrier

\subsection{MNIST Conditional Image Generation}

\textbf{Target Density}

MNIST consists of $28\times28$ grayscale images of handwritten digits $0$--$9$,
with pixel intensities in $\{0,\dots,255\}$. Each pixel is encoded with $8$ bits,
giving $784 \times 8 = 6{,}272$ core bits and a state space of $2^{6272}$. Since the pixel values have a natural ordering, we use
a Gray code encoding for each pixel, so that adjacent intensity levels differ in exactly one bit.

Since there are 10 digit classes, the reward is a $10$-bit block rather than a
single coordinate. The class label $c$ is stored as a one-hot pattern
$r_{c} \in \{-1,1\}^{10}$, $+1$ in position $c$ and $-1$ elsewhere, appended
to the image bits to give the extended state space $\{-1,1\}^{6282}$. To
generate class $c$ we tilt along the whole pattern, $h = \tau\, r_{c}$, rather
than along the single axis $e_{c}$. Thus one class receives a positive tilt while the others all receive a negative one. This causes the log-odds of a class to have a slope of 4,
\begin{align*}
\operatorname{logit} \pi_{\tau}(c) = \operatorname{logit} \pi(c) + 4\tau,
\end{align*}
where $\pi(c)$ is the training-set frequency of class $c$ ($0.090$--$0.112$).
The slope is $4$ rather than the slope $2$ of the single-bit reward tilt in the
Sudoku and Ising problems.

\textbf{Training Set}

The training set is the standard $60{,}000$ MNIST training images and their
labels, each encoded as above. The official $10{,}000$-image test split is
held out and never used for training. It is used for three things:
(i) a fixed batch of $4{,}096$ held-out images, noised with the training
recipe, is scored every $1{,}000$ steps to monitor overfitting;
(ii) it provides the real-digit comparison in
Figure~\ref{fig:three-panel}; and
(iii) it is used to measure the error rates of the classifier that scores
generated samples (see Extended Results).

\textbf{Model Architecture}

We use the same self-conditioned transformer as for $6\times6$ and $9\times9$
Sudoku, with tokens built from image patches. The image is divided into a
$7\times7$ grid of non-overlapping $4\times4$ pixel patches. Each patch becomes
one token of $16 \times 8 = 128$ bits (pixels in row order within a patch, Gray
bits most significant first), giving $49$ image tokens. The $10$-bit class
block is given its own token, and non-causal attention runs over the resulting
$50$ slots. The model has $4$ layers with $d_{\text{model}} = 256$, $8$
attention heads, and feed-forward dimension $1024$, with AdaLN noise-level
conditioning from the Fourier level embedding $e(a)$ and the same
zero-initialized mean-field skip
\begin{align*}
G_{\theta}(u,a) = \tanh\!\big(\beta(a)\,u + g_{\theta}(a)\odot r_{\theta}(u,a)\big).
\end{align*}
Self-conditioning concatenates the previous $x_{0}$ estimate to each token's
features, doubling the per-slot input channels from $128$ to $256$. The model
has $3{,}555{,}256$ parameters. Full implementation details are in the supplementary code.

\textbf{Training Run}

The model was trained for $30{,}000$ Adam steps with batch size $2048$,
learning rate $10^{-3}$, and a cosine-annealing schedule, with the
self-conditioning input supplied with probability $0.5$. Signal levels are drawn
log-uniformly on $\beta \in [0.02, 50]$. $\Lambda$ focusing is applied to the whole 10-bit reward block, maintaining the sign of those inputs but increasing their magnitude. The $\lambda$ is $1$ with probability $0.25$ and otherwise
log-uniform on $[1, 24]$. The upper limit differs from the Sudoku runs: because
the whole $10$-coordinate block is stretched at once, the admissible range is
$\lambda < \sqrt{d/10} = 25.06$ rather than $\sqrt{d}$, and we train just below
this limit.

$30{,}000 \times 2048 \approx 6.1\times10^{7}$ draws amount to roughly
$1{,}000$ passes over the $60{,}000$ training images. The held-out loss
decreased steadily from $720.6$ at step $1{,}000$ to $632.1$ at step $30{,}000$
and was still flat-to-decreasing at the end. This gives no
sign of overfitting.

Training ran on a single NVIDIA H100 80GB GPU on the PSC Bridges-2 cluster for 1 hour and 15 minutes. Processing $6.1\times10^{7}$ samples through a
$3{,}555{,}256$-parameter network corresponds to approximately
$1.3\times10^{15}$ training FLOPs under the standard $6ND$ estimate.

\textbf{Extended Results}

To detect if a generated image is of a given class, we train an independent score CNN classifier. The scorer is a two-layer CNN (1.2M parameters) trained for 3 epochs with Adam on the 60,000 MNIST training images, reaching 98.74\% accuracy on the held-out test split.

The classifier is an imperfect
instrument and cannot detect very strong tilt accuracies beyond its own classification error. Its error is also not constant across classes. As such, our ability to detect tilt strength saturates at the classifier accuracy.

Near the top of the tilt range its error is the dominant
source of uncertainty. Let $s_{c}$ be the probability the classifier labels a
true class-$c$ image as $c$, and $e_{c}$ the probability it labels an image of
another class as $c$. For a true class-$c$ rate $p$, the rate we observe is
\begin{align*}
m = p\, s_{c} + (1-p)\, e_{c}.
\end{align*}
We measure $s_{c}$ and $e_{c}$ on the held-out set ($s_{c}$ ranges from
$0.962$ for class $9$ to $0.997$ for class $1$; $e_{c} \le 0.0023$). We then
compare the observed rate with the exact marginal passed through this equation,
rather than correcting the observed rate. The observed rate therefore levels
off at about $s_{c}$ rather than $1$ (log-odds $3.2$--$5.9$ depending on the
class). Once the corrected exact rate comes within two standard errors of this
limit, which happens for $\tau \geq 1.69$ in every class, the sample size is
not what limits resolution; the classifier cannot distinguish further
improvements. We treat these points as uninformative about the classifier-scored
rate.

We report two rates from the same samples: the classifier-scored rate, and the
class the network itself writes in its $10$-bit reward block (argmax of the
block). The second is read directly from the sample and involves no
classifier. The $\tau$ grid has $6$ points on $[0, \tau_{95}]$, where
$\tau_{95} = 1.285$ is the strength at which the exact rate reaches $95\%$ for
the mean class frequency, and $3$ points on $(\tau_{95}, 2.5]$. We anchor at
$95\%$ rather than the $98\%$ used for Sudoku because the classifier limit
leaves less usable range. Each (class, $\tau$) pair uses $256$ samples with
$1000$ Euler steps, giving $2{,}560$ samples per $\tau$ pooled over classes.
Tables~\ref{tab:mnist-logodds-twopass} and~\ref{tab:mnist-logodds-carry} give
the pooled results, and
Figures~\ref{fig:mnist-logodds-twopass}--\ref{fig:mnist-logodds-carry-pooled}
show each class separately.

\begin{table}[H]
\centering
\caption{MNIST pooled class-conditional log-odds, \texttt{two-pass}
self-conditioning, $n = 2{,}560$ samples per $\tau$ ($256$ per class). ``Exact''
is $\operatorname{logit}$ of the exact tilted class marginal; ``Exact
(scored)'' is the same rate passed through each class's measured classifier
errors and averaged over classes, which is the right reference for the
classifier-scored column. ``Classifier'' is the fraction of samples the CNN
assigns to the requested class, with a 95\% Clopper--Pearson interval.
``Reward block'' is the fraction whose reward block has its argmax at the
requested class. $^{\dagger}$Beyond the classifier's resolution limit.}
\label{tab:mnist-logodds-twopass}
\small
\begin{tabular}{@{}rcccc@{}}
\toprule
$\tau$ & Exact & Exact (scored) & Classifier & Reward block \\
\midrule
0.00 & $-2.20$ & $-2.20$ & $-2.11\,[-2.24,\,-1.99]$ & $-2.10$ \\
0.26 & $-1.17$ & $-1.18$ & $-1.19\,[-1.29,\,-1.10]$ & $-1.12$ \\
0.51 & $-0.14$ & $-0.16$ & $\phantom{-}0.00\,[-0.07,\,\phantom{-}0.08]$ & $\phantom{-}0.10$ \\
0.77 & $\phantom{-}0.89$ & $\phantom{-}0.84$ & $\phantom{-}0.93\,[\phantom{-}0.84,\,\phantom{-}1.02]$ & $\phantom{-}1.14$ \\
1.03 & $\phantom{-}1.91$ & $\phantom{-}1.82$ & $\phantom{-}1.78\,[\phantom{-}1.67,\,\phantom{-}1.89]$ & $\phantom{-}2.03$ \\
1.29 & $\phantom{-}2.94$ & $\phantom{-}2.71$ & $\phantom{-}2.28\,[\phantom{-}2.15,\,\phantom{-}2.42]$ & $\phantom{-}2.91$ \\
$1.69^{\dagger}$ & $\phantom{-}4.56$ & $\phantom{-}3.75$ & $\phantom{-}2.76\,[\phantom{-}2.60,\,\phantom{-}2.93]$ & $\phantom{-}3.78$ \\
$2.10^{\dagger}$ & $\phantom{-}6.18$ & $\phantom{-}4.20$ & $\phantom{-}3.13\,[\phantom{-}2.94,\,\phantom{-}3.34]$ & $\phantom{-}4.34$ \\
$2.50^{\dagger}$ & $\phantom{-}7.80$ & $\phantom{-}4.32$ & $\phantom{-}3.36\,[\phantom{-}3.14,\,\phantom{-}3.59]$ & $\phantom{-}6.05$ \\
\bottomrule
\end{tabular}
\end{table}

\begin{table}[H]
\centering
\caption{As Table~\ref{tab:mnist-logodds-twopass}, for \texttt{carry}
self-conditioning, with the same $\tau$ grid, sample sizes and seeds.
$^{\dagger}$Beyond the classifier's resolution limit.}
\label{tab:mnist-logodds-carry}
\small
\begin{tabular}{@{}rcccc@{}}
\toprule
$\tau$ & Exact & Exact (scored) & Classifier & Reward block \\
\midrule
0.00 & $-2.20$ & $-2.20$ & $-2.19\,[-2.32,\,-2.06]$ & $-2.19$ \\
0.26 & $-1.17$ & $-1.18$ & $-0.92\,[-1.01,\,-0.84]$ & $-0.92$ \\
0.51 & $-0.14$ & $-0.16$ & $\phantom{-}0.31\,[\phantom{-}0.23,\,\phantom{-}0.39]$ & $\phantom{-}0.33$ \\
0.77 & $\phantom{-}0.89$ & $\phantom{-}0.84$ & $\phantom{-}1.33\,[\phantom{-}1.24,\,\phantom{-}1.43]$ & $\phantom{-}1.42$ \\
1.03 & $\phantom{-}1.91$ & $\phantom{-}1.82$ & $\phantom{-}2.12\,[\phantom{-}2.00,\,\phantom{-}2.25]$ & $\phantom{-}2.20$ \\
1.29 & $\phantom{-}2.94$ & $\phantom{-}2.71$ & $\phantom{-}2.83\,[\phantom{-}2.66,\,\phantom{-}3.00]$ & $\phantom{-}3.19$ \\
$1.69^{\dagger}$ & $\phantom{-}4.56$ & $\phantom{-}3.75$ & $\phantom{-}3.23\,[\phantom{-}3.03,\,\phantom{-}3.45]$ & $\phantom{-}3.80$ \\
$2.10^{\dagger}$ & $\phantom{-}6.18$ & $\phantom{-}4.20$ & $\phantom{-}3.73\,[\phantom{-}3.47,\,\phantom{-}4.00]$ & $\phantom{-}5.28$ \\
$2.50^{\dagger}$ & $\phantom{-}7.80$ & $\phantom{-}4.32$ & $\phantom{-}4.05\,[\phantom{-}3.75,\,\phantom{-}4.37]$ & $\phantom{-}5.45$ \\
\bottomrule
\end{tabular}
\end{table}

The self-conditioning modes two-pass (Markovian) and carry (non-Markovian) behave in the opposite way to $9\times9$
Sudoku. The carry model over-samples the reward class at a given tilt, producing more outputs of a given class than the log-odds would predict. The two-pass mode tracks the log-odds more truthfully.

With \texttt{two-pass}, the classifier-scored rate matches the corrected
exact rate within its confidence interval, or nearly so, from $\tau = 0$
through $\tau \approx 1$ (e.g.\ $1.78$ vs.\ $1.82$ at $\tau = 1.03$), and the
network's reward block follows the same curve. With \texttt{carry}, the
sampler \emph{overshoots} over the whole rising region. At $\tau = 0.51$ it
produces the requested class $57.8\%$ of the time against an exact rate of
$45.9\%$ (log-odds $0.31$ vs.\ $-0.16$), and the $95\%$ interval excludes the
exact value at every $\tau$ from $0.26$ to $1.03$. The untilted rate is
correct under both modes, so this is an error in how \texttt{carry} responds to
the tilt, not in the base distribution.

\begin{figure}[H]
\begin{center}
\includegraphics[width = \textwidth]{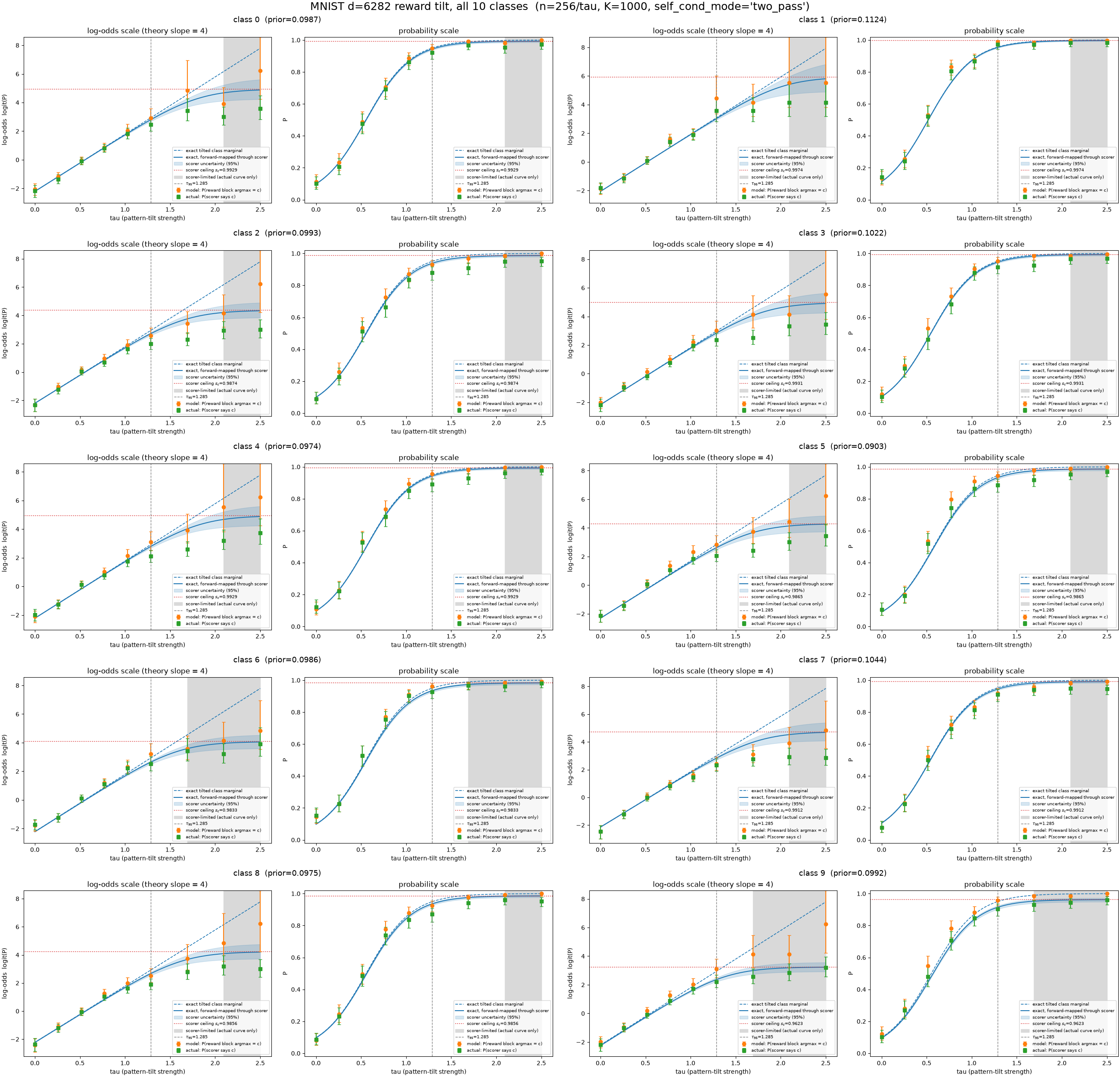}
\end{center}
\caption{MNIST class-conditional log-odds against tilt strength $\tau$ for each
class, \texttt{two-pass} self-conditioning. Exact marginal passed through each
class's classifier errors (line, with a band from the uncertainty in those
errors), classifier-scored rate (points with 95\% Clopper--Pearson intervals),
and the network's own reward-block rate.}
\label{fig:mnist-logodds-twopass}
\end{figure}

\begin{figure}[H]
\begin{center}
\includegraphics[scale=0.5]{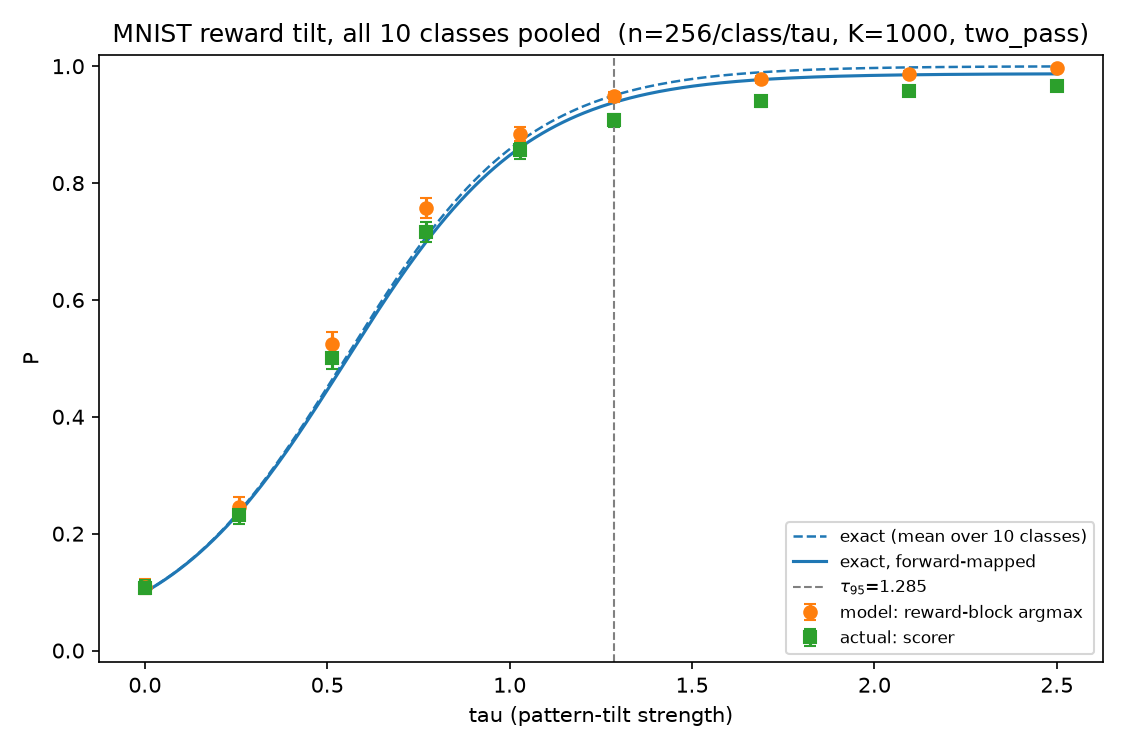}
\end{center}
\caption{MNIST log-odds pooled over all ten classes, \texttt{two-pass}
self-conditioning (Table~\ref{tab:mnist-logodds-twopass}).}
\label{fig:mnist-logodds-twopass-pooled}
\end{figure}

\begin{figure}[H]
\begin{center}
\includegraphics[width = \textwidth]{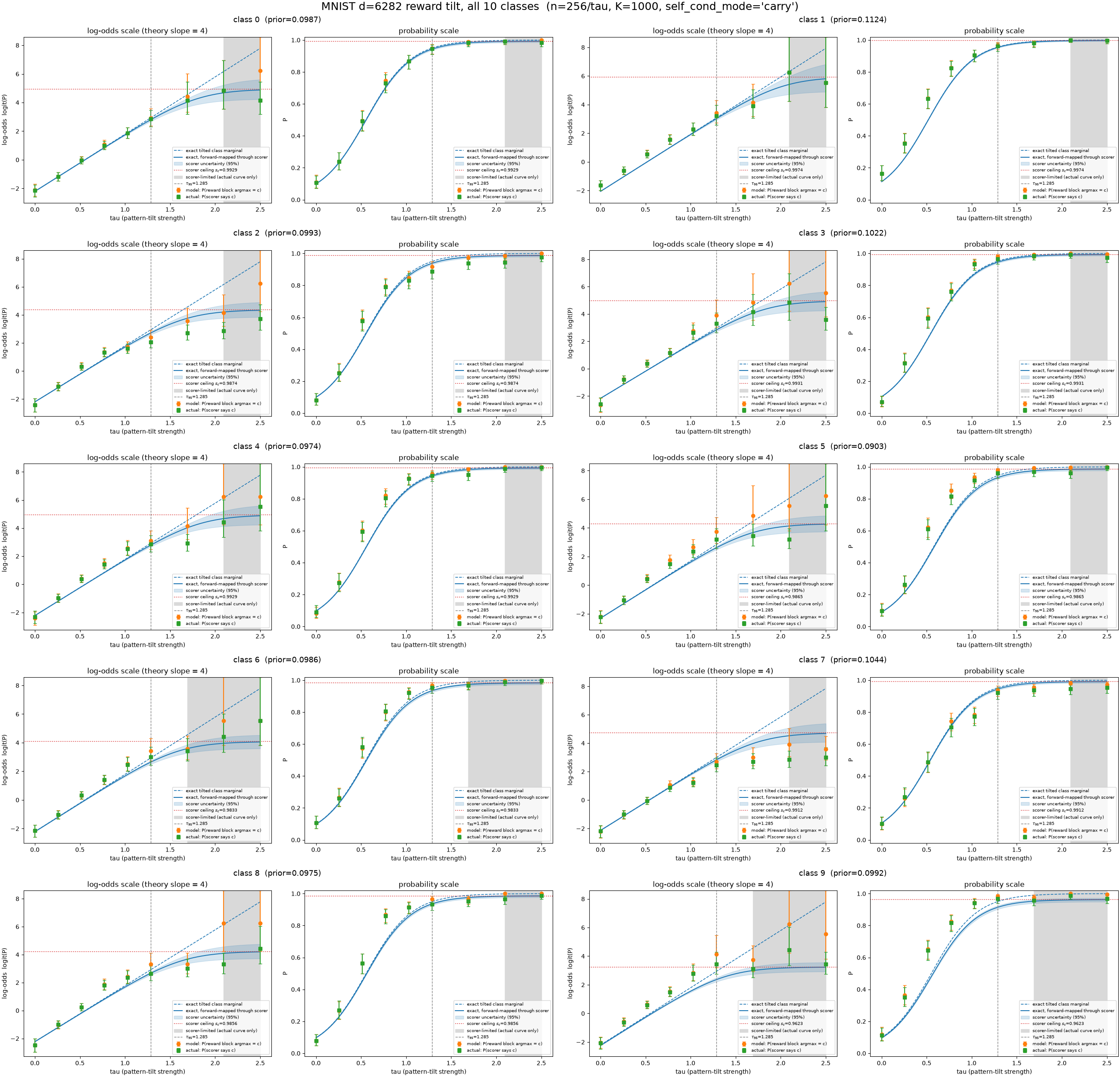}
\end{center}
\caption{As Figure~\ref{fig:mnist-logodds-twopass}, for \texttt{carry}
self-conditioning. The overshoot over $0.26 \le \tau \le 1.03$ appears in
every class.}
\label{fig:mnist-logodds-carry}
\end{figure}

\begin{figure}[H]
\begin{center}
\includegraphics[scale=0.5]{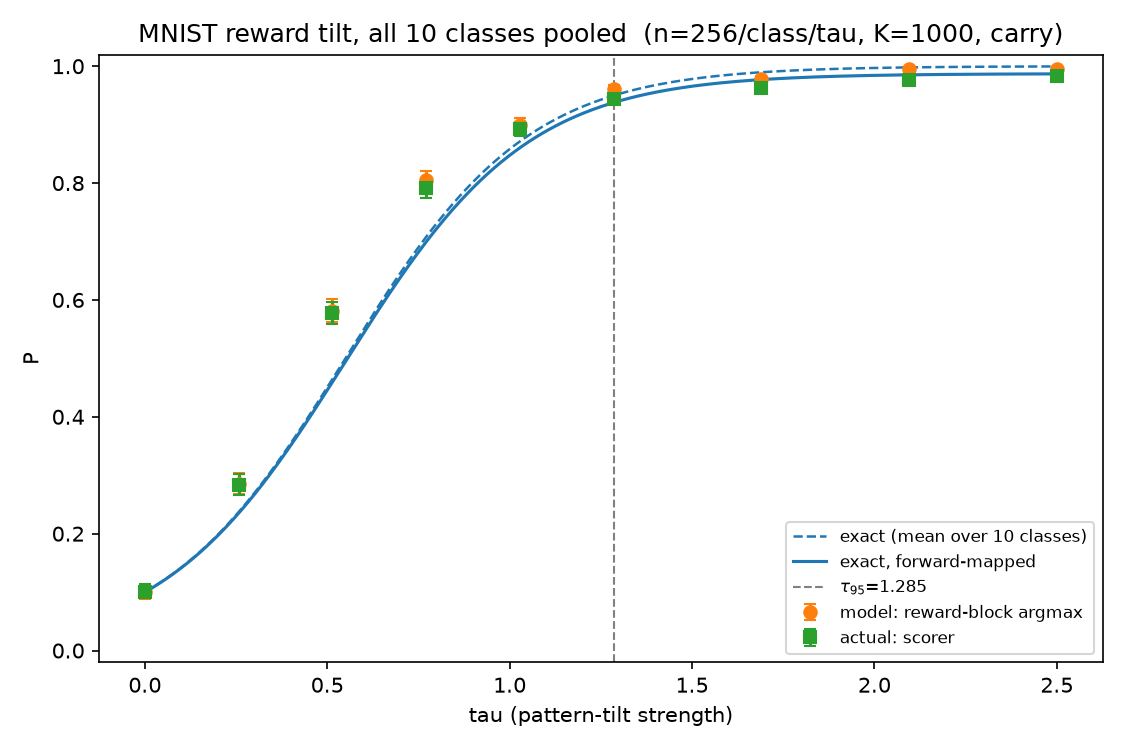}
\end{center}
\caption{MNIST log-odds pooled over all ten classes, \texttt{carry}
self-conditioning (Table~\ref{tab:mnist-logodds-carry}).}
\label{fig:mnist-logodds-carry-pooled}
\end{figure}

The log-odds check does not consider the sample quality or variety of digits produced. We sample 25 class 0 images from the hold out set, 25 samples images at the carry mode, and 25 samples at two-pass mode at a tilt $\tau_{0.95}$ with a projected 95\% success rate. We compare the outputs visually for sample quality and variety to detect mode collapse or corruption of digit quality in Figure~\ref{fig:three-panel}. Digits generated by the network show the same variety and quality as true hold out images. Note one of the generated samples was mis-sampled as a class 8 image, but this is to be expected with a 5\% non-reward rate at this tilt strength. Shared seed and randomness was used in the two sampling runs which results in similarity in the images generated in pane b) and c).

\begin{figure}[H]
    \centering
    \begin{subfigure}[t]{0.32\textwidth}
        \centering
        \includegraphics[width=\linewidth]{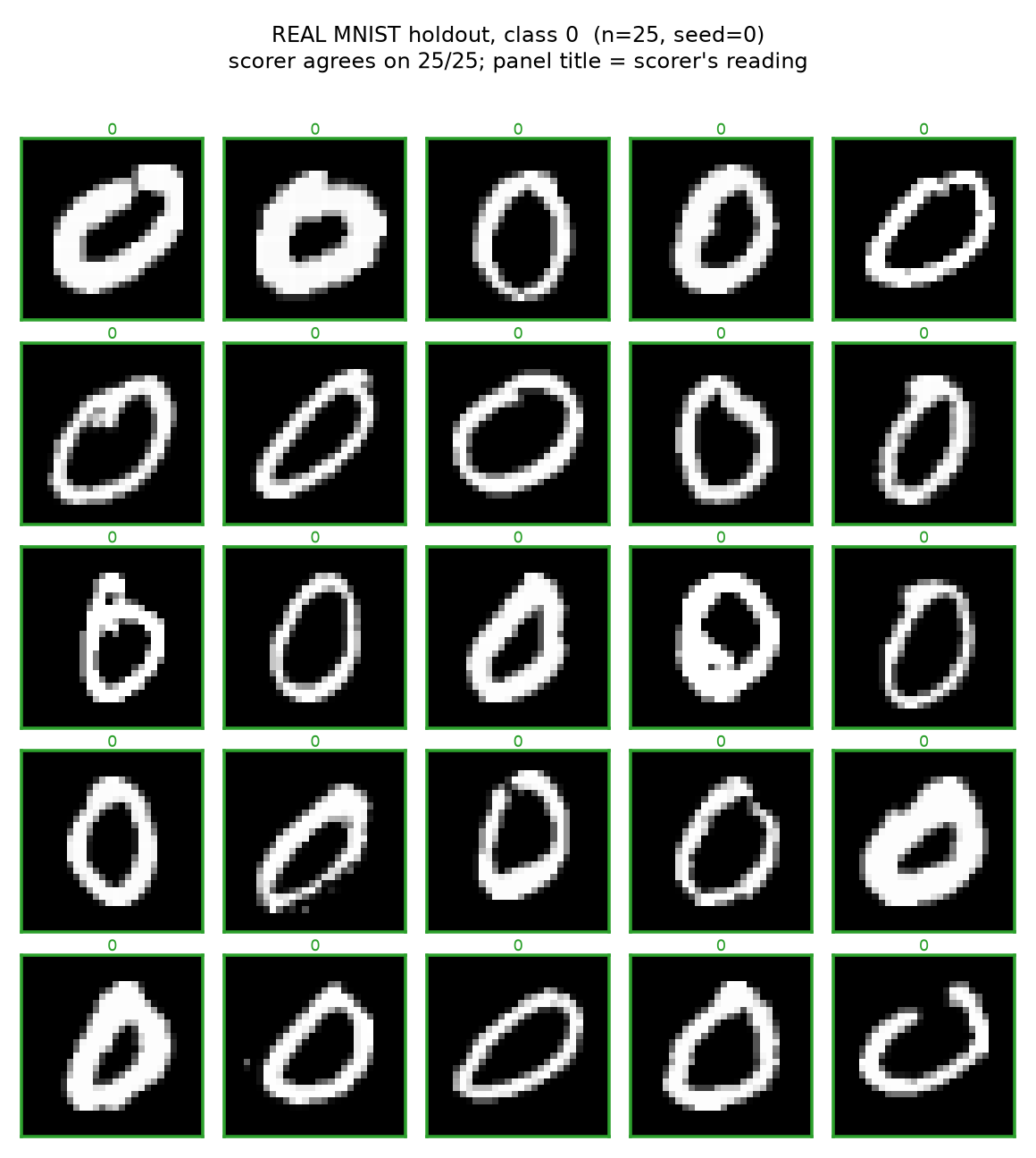}
        \caption{25 class-0 images from the held-out set.}
        \label{fig:mnist-real}
    \end{subfigure}
    \hfill
    \begin{subfigure}[t]{0.32\textwidth}
        \centering
        \includegraphics[width=\linewidth]{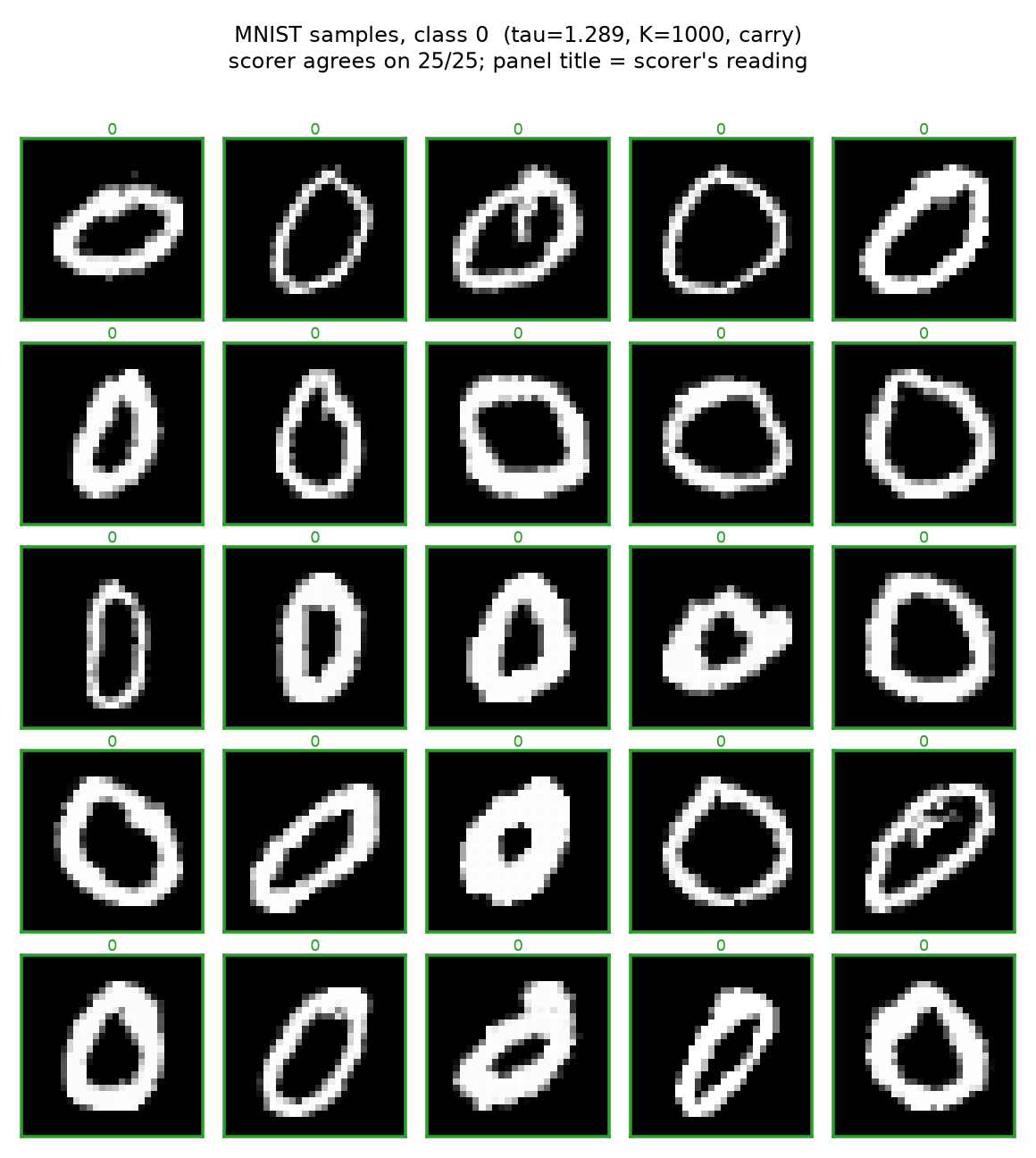}
        \caption{25 generated class-0 images, \texttt{carry} (non-Markovian)
        self-conditioning.}
        \label{fig:mnist-carry}
    \end{subfigure}
    \hfill
    \begin{subfigure}[t]{0.32\textwidth}
        \centering
        \includegraphics[width=\linewidth]{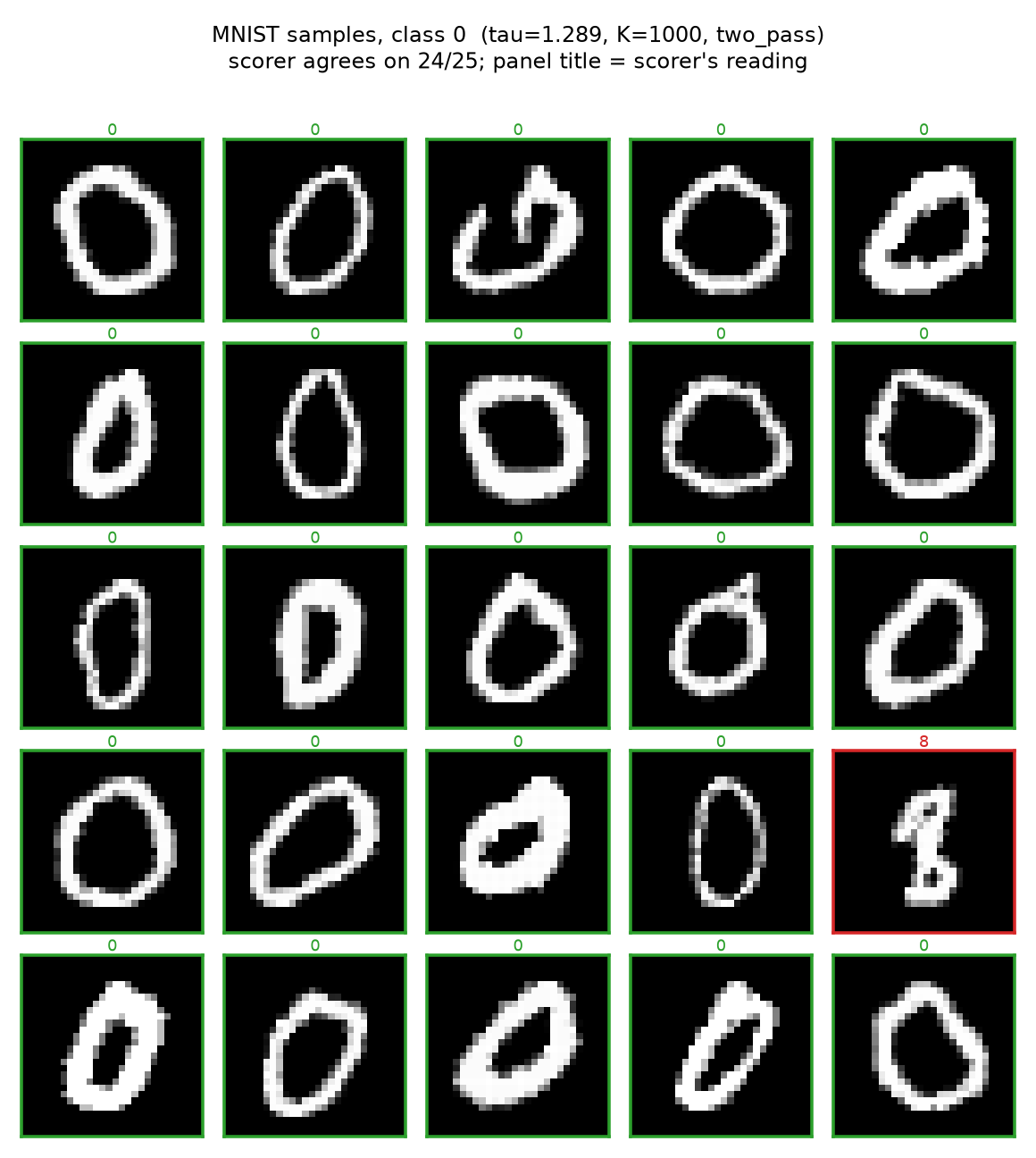}
        \caption{25 generated class-0 images, \texttt{two-pass} (Markovian)
        self-conditioning.}
        \label{fig:mnist-twopass}
    \end{subfigure}
    \caption{25 class-0 images from the held-out set (a), and network samples
    under both self-conditioning modes with the same random seed (b, c). Panel
    titles give the classifier's label (green if it matches the requested
    class, red if not). The samples are similar in quality and variety to the
    held-out digits, with no sign of mode collapse. $\tau = 1.289$ is set so
    that the exact class-$0$ rate is $95\%$, so about $1$ in $25$ samples is
    expected to be of another class.}
    \label{fig:three-panel}
\end{figure}

\begin{figure}
\begin{center}
\includegraphics[width = \textwidth]{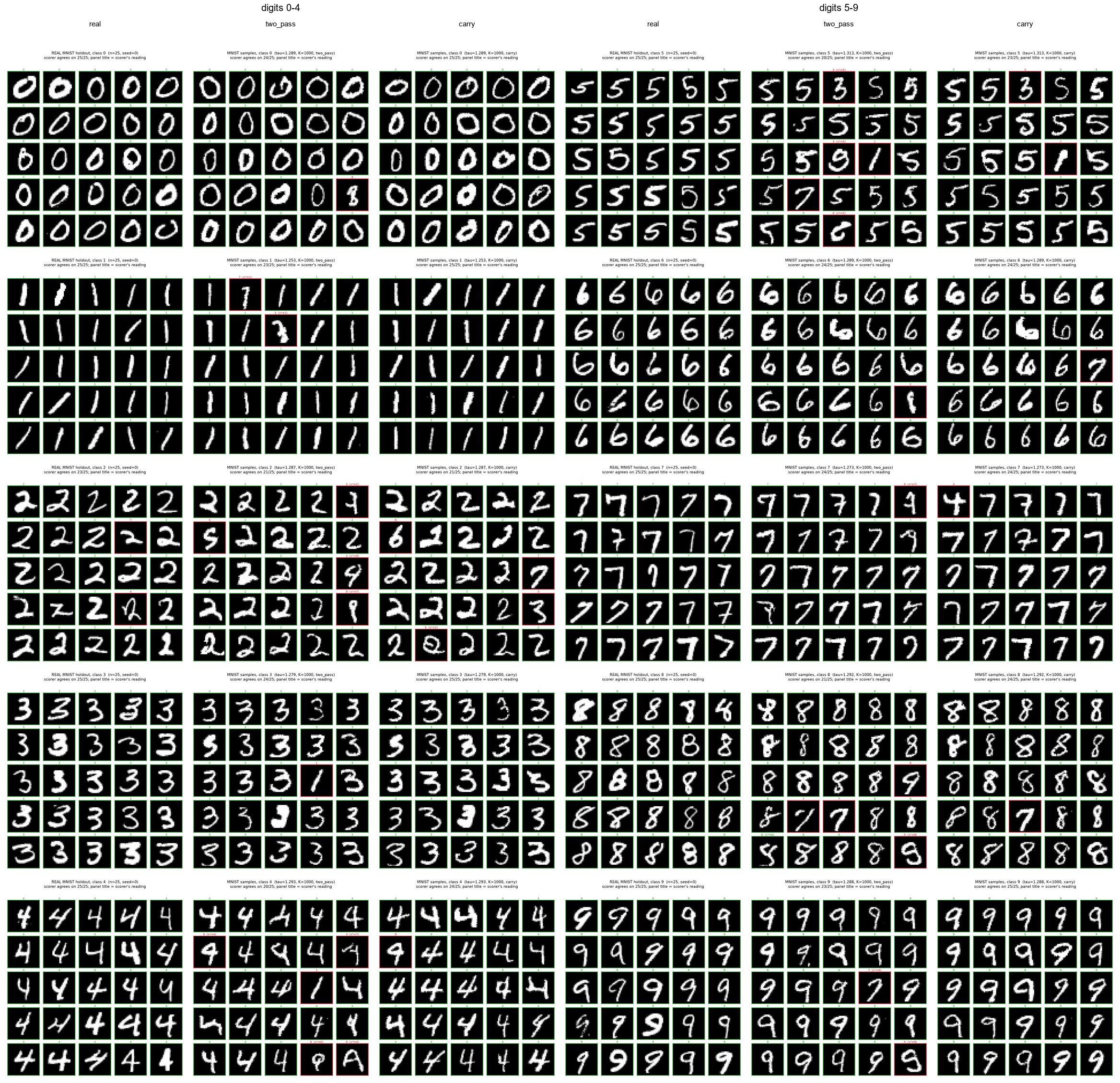}
\end{center}
\caption{Example digit generations tilted towards each class. Leftmost image per digit is a sample of 25 images from hold out set network was not trained on, second image is from sample mode carry and third image is sample mode two-pass. Middle and right grids share the same random seed in their sampling, resulting in often similar samples which is not a form of memorization. Tilt strength is calibrated to 95\% reward-achieving in log-odds theory, thus it is reasonable to see a few mis-samples in repeated 25 size sets.}
\label{fig:MNIST_big_grid}
\end{figure}
\FloatBarrier

\end{document}